\def\R{{\mathbb{R}}}
\def\N{{\mathbb{N}}}
\def\1{{\textbf{1}}}
\def\0{{\textbf{0}}}
\def\I{{\mathbbm{1}}}
\def\P{{\mathbb{P}}}
\def\E{{\mathbb{E}}}
\def\VAR{{\mathbb{V}\mathbb{A}\mathbb{R}}}
\def\L{{\mathcal{L}}}
\def\S{{\mathcal{S}}}
\def\F{{\mathcal{F}}}
\def\Q{{\mathcal{Q}}}
\def\G{{\mathcal{G}}}
\DeclareMathOperator\diff{{d \!}}
\DeclareMathOperator\image{{im}}
\DeclareMathOperator*{\esssup}{ess\,sup}
\def\unif{{\mathsf{unif}}}
\newcommand{\pdf}{P_{\alpha}}
\newcommand{\pdfs}{{\cal P}}
\newcommand{\estpdf}{\hat{P}}
\newcommand{\est}{\widehat{\cal P}}
\newcommand{\ests}{\widehat{\cal P}^*}
\newcommand{\estsp}{\widehat{\cal P}^+}
\newcommand{\iotab}{\mathbf{i}}
\newcommand{\pib}{\boldsymbol{\pi}}
\newtheorem{theorem}{Theorem}
\newtheorem{lemma}{Lemma}
\newtheorem{proposition}{Proposition}
\newtheorem{claim}{Claim}
\theoremstyle{definition}
\newcommand{\figref}[1]{Figure~\ref{#1}}
\newcommand{\tabref}[1]{Table~\ref{#1}}
\newcommand{\thmref}[1]{Theorem~\ref{#1}}
\newcommand{\propref}[1]{Proposition~\ref{#1}}
\newcommand{\lemref}[1]{Lemma~\ref{#1}}
\newcommand{\clmref}[1]{Claim~\ref{#1}}
\title{Estimation of Skill Distributions}
\author{%
  Ali~Jadbabaie \hspace{0.25in} Anuran~Makur \hspace{0.25in} Devavrat~Shah\thanks{The author ordering is alphabetical.}\\
	Laboratory for Information and Decision Systems\\
  Massachusetts Institute of Technology\\
  Cambridge, MA 02139\\
  \texttt{\{jadbabai,a\_makur,devavrat\}@mit.edu}\\
}
\begin{document}

\maketitle

\begin{abstract} 
In this paper, we study the problem of learning the skill distribution of a population of agents from observations of pairwise games in a tournament. These games are played among randomly drawn agents from the population. The agents in our model can be individuals, sports teams, or Wall Street fund managers. Formally, we postulate that the likelihoods of outcomes of games are governed by the parametric Bradley-Terry-Luce (or multinomial logit) model, where the probability of an agent beating another is the ratio between its skill level and the pairwise sum of skill levels, and the skill parameters are drawn from an unknown, non-parametric skill density of interest. The problem is, in essence, to learn a distribution from noisy, quantized observations. We propose a surprisingly simple and tractable algorithm that learns the skill density with near-optimal minimax mean squared error scaling as $n^{-1+\varepsilon}$, for any $\varepsilon>0$, so long as the density is smooth. Our approach brings together prior work on learning skill parameters from pairwise comparisons with kernel density estimation from non-parametric statistics. Furthermore, we prove information theoretic lower bounds which establish minimax optimality of the skill parameter estimation technique used in our algorithm. These bounds utilize a continuum version of Fano's method along with a careful covering argument. We apply our algorithm to various soccer leagues and world cups, cricket world cups, and mutual funds. We find that the entropy of a learnt distribution provides a quantitative measure of skill, which in turn provides rigorous explanations for popular beliefs about perceived qualities of sporting events, e.g., soccer league rankings. Finally, we apply our method to assess the skill distributions of mutual funds. Our results shed light on the abundance of low quality funds prior to the Great Recession of 2008, and the domination of the industry by more skilled funds after the financial crisis.
\end{abstract}

\section{Introduction}
\label{Introduction}

It is a widely-held  belief among soccer enthusiasts that English Premier League (EPL) is the most competitive amongst professional leagues even though the likely eventual winner is often one of a handful of usual suspects \cite{McIntyre2019,Spacey2020}. Similarly, the Cricket World Cup in 2019 is believed to be the most exciting in the modern history of the sport, and ended with one of the greatest matches of all time \cite{Smythetal2019,Bull2019}. But is any of this backed up by data, or are they just common misconceptions? In this work, we answer this question by quantifying such observations, beyond mere sports punditry and subjective opinions, in a data-driven manner. We then illustrate that a similar approach can be used to quantify the evolution of the overall quality and relative skills of mutual funds over the years.

To this end, we posit that the population of agents in a tournament, e.g., EPL teams or mutual fund managers, has 
an associated distribution of skills with a probability density function (PDF) $\pdf$ over $\mathbb{R}_+$. 
Our goal is to learn this $\pdf$. Traditionally, in the non-parametric statistics literature, cf. \cite{Tsybakov2009}, one observes samples from the 
distribution directly to estimate $\pdf$. In our setting, however, we can only observe extremely noisy, quantized values. Specifically, given $n$ individuals, teams, or players participating in a tournament, indexed by $[n] \triangleq \{1,\dots,n\}$, let their skill levels be $\alpha_i$,~$i \in [n]$, which are sampled independently from $\pdf$. We observe the outcomes of pairwise games or comparisons between them. More precisely, for
each $i \neq j \in [n]$, with probability $p \in (0,1]$, we observe the outcomes of $k \geq 1$ games, and
with probability $1-p$, we observe nothing. Let $\G(n,p)$ denote the induced Erd{\H o}s-R\'{e}nyi random graph on $[n]$ with
edge $\{i,j\} \in \G(n,p)$ if games between $i$ and $j$ are observed. For $\{i,j\} \in \G(n,p)$, let $Z_m(i,j) \in \{0,1\}$ denote whether $j$ beats 
$i$, i.e., value $1$ if $j$ beats $i$ and $0$ otherwise, in game $m \in [k]$. By definition, $Z_m(i,j) + Z_m(j, i) = 1$. 
We assume the \emph{Bradley-Terry-Luce (BTL)} \cite{BradleyTerry1952,Luce1959} or \emph{multinomial logit model} \cite{McFadden1973} where: 
\begin{equation}
\label{Eq: BT model}
\P\!\left(Z_m(i,j) = 1 \, \middle| \, \alpha_1,\dots,\alpha_n\right) \triangleq \frac{\alpha_j}{\alpha_i + \alpha_j}, 
\end{equation}
independently of the outcomes of all other games. Our objective is to learn $\pdf$ from the observations $\{Z_m(i,j): \{i,j\} \in \G(n,p), \, m \in [k]\}$, instead of $\alpha_i, i\in [n]$ (as in traditional statistics \cite{Tsybakov2009}). 
\begin{table}[t]
\caption{Comparison of our contributions with prior works. The notation $\tilde{O}$ and $\tilde{\Omega}$ hide ${\sf poly}(\log(n))$ terms, and $\varepsilon > 0$ is any
arbitrarily small constant.} 
\vspace{.1in}
\centering
\begin{tabular}{llcc}
\toprule
\textbf{Estimation problem} & \textbf{Loss function} & \textbf{Upper bound} & \textbf{Lower bound} \\
\toprule
Smooth $\mathcal{C}^{\infty}$ skill PDF & MSE & \textcolor{blue}{$\tilde{O}(n^{-1 + \varepsilon})$} (\thmref{Thm: MSE Upper Bound}) & $\Omega(n^{-1})$ \cite{IbragimovKhasminskii1982,Tsybakov2009} \\
BTL skill parameters & relative $\ell^{\infty}$-norm & $\tilde{O}(n^{-1/2})$ \cite{Chenetal2019} &  \textcolor{blue}{$\tilde{\Omega}(n^{-1/2})$} (\thmref{Thm: Minimax Relative l^infty-Risk}) \\
BTL skill parameters & $\ell^{1}$-norm & $O(n^{-1/2})$ \cite{Chenetal2019} & \textcolor{blue}{$\tilde{\Omega}(n^{-1/2})$} (\thmref{Thm: Minimax l^1-Risk}) \\
\bottomrule
\end{tabular}
\label{Table: Technical Contributions}
\end{table}
For a given, fixed set of  $\alpha_i, i \in [n]$, learning them from pairwise comparison data $\{Z_m(i,j): \{i,j\} \in \G(n,p), \, m \in [k]\}$ has been extensively studied in the  recent literature \cite{NegahbanOhShah2012,NegahbanOhShah2017, Chenetal2019}. Nevertheless, this line of research does not provide any means to estimate the underlying skill distribution $\pdf$.

\noindent{\bf Contributions.} As the main contribution of this work, we develop a statistically near-optimal and computationally tractable method for estimating the skill distribution $\pdf$ from a subset of 
pairwise comparisons. Our estimation method is a two-stage algorithm that uses the (spectral) 
rank centrality estimator \cite{NegahbanOhShah2012,NegahbanOhShah2017} followed by the 
Parzen-Rosenblatt kernel density estimator \cite{Rosenblatt1956,Parzen1962} with carefully 
chosen bandwidth. We establish that the minimax \emph{mean squared error (MSE)} of our method scales as $\tilde{O}(n^{-\eta/(\eta+1)})$ for any $\pdf$ belonging to an $\eta$-H\"{o}lder class. Thus, if $\pdf$ is smooth ($\mathcal{C}^{\infty}$) with bounded derivatives, then the minimax MSE is $\tilde{O}(n^{-1 + \varepsilon})$ for any $\varepsilon > 0$; see \thmref{Thm: MSE Upper Bound} for details. Somewhat surprisingly, although we do not directly observe $\alpha_i, i \in [n]$, this
minimax MSE rate matches the minimax MSE lower bound of $\Omega(n^{-1})$ for smooth $P_{\alpha}$ even when $\alpha_i, i \in [n]$ are observed \cite{IbragimovKhasminskii1982,Tsybakov2009}. 

As a key step in our estimation method, we utilize the rank centrality algorithm \cite{NegahbanOhShah2012,NegahbanOhShah2017} for estimating $\alpha_i, i \in [n]$. While the optimal learning rate of the rank centrality algorithm with respect to relative $\ell^2$-loss is well-understood \cite{NegahbanOhShah2012,NegahbanOhShah2017,Chenetal2019}, the optimal learning rates with respect to relative $\ell^{\infty}$ and $\ell^1$-losses are not known since we only know upper bounds \cite{Chenetal2019}, but not matching minimax lower bounds. In Theorems \ref{Thm: Minimax Relative l^infty-Risk} and \ref{Thm: Minimax l^1-Risk}, we prove minimax lower bounds of $\tilde{\Omega}(n^{-1/2})$ with respect to both relative $\ell^{\infty}$ and $\ell^1$-losses. These bounds match the learning rates of the rank centrality algorithm obtained in \cite{Chenetal2019} with respect to both $\ell^{\infty}$ and $\ell^1$-losses, and hence, identify the optimal minimax rates. We derive these information theoretic lower bounds by employing a recent variant of the generalized Fano's method with covering arguments. (Our main technical results are all delineated in \tabref{Table: Technical Contributions}.)

Finally, we illustrate the utility of  our algorithm through four experiments on real-world data: cricket world cups, 
soccer world cups, European soccer leagues, and mutual funds. Intuitively, a concentrated skill distribution, i.e., one that is close to a Dirac delta measure, corresponds to a balanced tournament with players that are all equally skilled. Hence, the outcomes of games are random or unpredictable. On the other hand, a skill distribution that is close to uniform suggests a wider spread of players' skill levels. So, the outcomes of games are driven more by skill rather than luck (or random chance). We, therefore, propose to use the negative entropy of a learnt skill distribution as a way to measure 
the ``overall skill score,'' because negative entropy captures distance to the uniform distribution. 
For cricket world cups, we find that negative entropy decreases from 2003 to 2019. Indeed, this corroborates with fan experience, where in 2003, Australia and India dominated but all other teams were roughly equal, while in 2019, there was a healthy spread of skill levels making many teams potential contenders for the championship. In soccer, we observe that the EPL and World Cup have high negative entropy, which indicates that most teams are competitive, and thus, it is very difficult to predict outcomes up front. Lastly, the negative entropy of US mutual funds decreases significantly during the Great Recession of 2008, and we see flatter skill distributions post 2008. This reveals that mutual funds became more competent to avoid being weeded out of the market by the financial crisis.

\noindent{\bf Related work.} 
The problem of estimating distributions of skill levels from tournaments has received increased attention due to the recent advent of fantasy sports platforms, which give rise to new legal and policy making challenges concerned with regulating the accompanying rise of gambling on such platforms, cf. \cite{Gettyetal2018} and follow-up work. Indeed, when the distribution of skill levels of players is concentrated around one point, the associated game is essentially one of chance (or luck), and governments may understandably seek to place more betting regulations on such tournaments. While \cite{Gettyetal2018} provides an empirical study of an ad hoc measure of skill using fantasy sports data, we consider a rigorous statistical formulation of this problem where the objective is to estimate an unknown PDF of skill levels from partially observed win-loss data of tournaments. 

As mentioned earlier, we assume that all players in a tournament have latent ``skill'' or ``merit'' parameters that are drawn from an unknown prior skill PDF, and these skill parameters determine the likelihoods of wins and losses in games according to the BTL model. Our algorithm to estimate such skill distributions proceeds by first estimating skill parameters from the observed data, and then estimating the skill distribution based on these parameter estimates. To estimate the skill PDFs from (estimated) skill parameters in the second stage of our algorithm, we exploit kernel density estimation techniques that were originally developed in \cite{Rosenblatt1956,Parzen1962,Epanechnikov1969}. Moreover, as noted in \tabref{Table: Technical Contributions}, to evaluate the minimax MSE risk achieved by our algorithm, we compare our MSE risk scaling with well-known minimax lower bounds on density estimation for certain classes of analytic densities, cf. \cite{IbragimovKhasminskii1982} and the references therein. On a separate front, to establish the near-optimality of the skill parameter estimation technique (to be explained in due course) used in our algorithm, we exploit a variant of the generalized Fano's method. This method was also initially developed in the context of density estimation in \cite{IbragimovKhasminskii1977,Khasminskii1979}. For the sake of brevity, we do not review the extensive non-parametric density estimation literature any further, and instead refer readers to \cite[Chapters 1 and 2]{Tsybakov2009}, \cite{Wasserman2019}, and the references therein for thorough modern treatments.

Since we assume that the likelihoods of the outcomes of two-player games in a tournament follow the BTL model \cite{BradleyTerry1952,Luce1959}, and estimation of the skill parameters of this model forms the first stage of our proposed algorithm, we outline several relevant aspects of the vast literature concerning the BTL model in the remainder of this section. Indeed, while the BTL model was introduced in statistics to study pairwise comparisons \cite{BradleyTerry1952}, it has a long and diverse history. The model was initially proposed by Zermelo in \cite{Zermelo1929}, who also provided an iterative algorithm to compute the maximum likelihood (ML) estimators of the BTL skill parameters. Moreover, the BTL model is a special case of the \emph{Plackett-Luce (PL) model} \cite{Luce1959,Plackett1975}, which was originally developed in mathematical psychology. The PL model defines a probability distribution over rankings (or permutations) of players that is a natural consequence of \emph{Luce's choice axiom}. This axiom can be perceived as a formulation of the ``\emph{independence of irrelevant alternatives}'' in social choice theory and econometrics. In fact, McFadden's work on the multinomial logit model in economics is equivalent to the PL model \cite{McFadden1973}. The earliest known model that is related to the PL model is perhaps the \emph{Thurstonian model} from psychometrics, which provides a probability distribution over rankings using the so called \emph{law of comparative judgment} \cite{Thurstone1927}. Specifically, Thurstone models a ``discriminal process'' to rank $n$ items by first associating latent merit parameters $\alpha_1,\dots,\alpha_n$ to each of the $n$ items, and then ranking them by ranking the corresponding random variables $\alpha_1 + X_1,\dots,\alpha_n + X_n$, where the independent and identically distributed (i.i.d.) random variables $X_1,\dots,X_n$ represent ``noise'' in the discriminal process. As explained in \cite[Section 9D]{Diaconis1988}, the resulting distribution over rankings is equivalent to the PL model when the $X_i$'s have Gumbel (or generalized extreme value type-I) distribution, cf. \cite{Yellott1977}. We refer readers to \cite[Sections 9C and 9D]{Diaconis1988} for other models of rankings based on \emph{exponential families} and further equivalent formulations of the BTL and PL models, and to \cite{Shah2019} for a comprehensive discussion on other equivalent models from a modern machine learning perspective. For example, the celebrated \emph{Boltzmann-Gibbs distribution} in statistical physics and the \emph{softmax} model in machine learning are also versions of the PL model.

In order to estimate the skill parameters of the BTL model, two families of algorithms have been developed in the literature. The first of these is a class of \emph{minorization-maximization (MM) algorithms} that generalize Zermelo's iterative algorithm in \cite{Zermelo1929}. Much like how Zermelo's algorithm computes ML estimators of the parameters under a \emph{strong connectivity condition} \cite{Ford1957} (also see \cite[Assumption 1]{Hunter2004} for a graph theoretic interpretation), the more general MM algorithms can be utilized to perform ML estimation for ``generalized'' BTL models \cite{Hunter2004}. Moreover, although MM algorithms are typically seen as extending the better known \emph{expectation-maximization (EM) algorithms} for ML estimation of latent variable models (e.g., Gaussian mixture models) \cite{DempsterLairdRubin1977}, the MM algorithms for generalized BTL models can also be construed as special cases of EM algorithms (corresponding to certain choices of latent variables) \cite{CaronDoucet2012}. In contrast, in this paper, we utilize the second, more recently discovered, family of spectral algorithms based on the notion of rank centrality introduced in \cite{NegahbanOhShah2012,NegahbanOhShah2017}. The main innovation of such spectral algorithms is to construe (normalized) skill parameters as an invariant distribution of a reversible Markov chain, and armed with this perspective, estimate skill parameters by first estimating the stochastic kernel defining the Markov chain. 

Both MM and spectral algorithms have been analyzed extensively in the literature. For instance, \cite{SimonsYao1999} proves the consistency and asymptotic normality of ML estimators for skill parameters computed by Zermelo's algorithm, and \cite[Theorems 1 and 2]{NegahbanOhShah2017} establishes sample complexity bounds for the relative $\ell^2$-norm estimation error of (normalized) skill parameters. Furthermore, both families of algorithms are shown to be optimal for recovering the top ${\sf K}$ ranked players in \cite{Chenetal2019}, which presents non-asymptotic analysis for relative $\ell^{\infty}$ and $\ell^2$-norm losses. In particular, \cite{NegahbanOhShah2017,Chenetal2019} assume that a random Erd{\H o}s-R\'{e}nyi graph captures the subset of pairwise games that are observed in a tournament. Our analysis also considers this partial observation model, and exploits the relative $\ell^{\infty}$ and $\ell^2$-norm loss results of \cite{Chenetal2019}. In a different vein, \cite{Shahetal2016} establishes minimax estimation bounds for squared semi-norm losses defined by graph Laplacian matrices, where the fixed graphs encode the subsets of observed pairwise games (also see follow-up work), and \cite{Chatterjee2015} demonstrates that the \emph{universal singular value thresholding} algorithm can be used to estimate ``non-parametric'' BTL models. Finally, we refer readers to \cite{GuiverSnelson2009,CaronDoucet2012} and the references therein for other recent research on efficient Bayesian inference for BTL and PL models. As opposed to these works, we also analyze minimax estimation of skill parameters under a previously unexplored setting where parameters are drawn i.i.d. from a prior skill PDF.

\noindent{\bf Notation.} 
We briefly introduce some relevant notation. Let $\N \triangleq \{1,2,3,\dots\}$ denote the set of natural numbers. For any $n \in \N$, let $\S_n$ denote the probability simplex of row probability vectors in $\R^n$, and $\S_{n \times n}$ denote the set of all $n \times n$ row stochastic matrices in $\R^{n \times n}$. For any vector $x \in \R^n$ and any $q \in [1,\infty]$, let $\|x\|_{q}$ denote the $\ell^q$-norm of $x$. Moreover, $\log(\cdot)$ denotes the natural logarithm function with base $e$, $\I\{\cdot\}$ denotes the indicator function that equals $1$ if its input proposition is true and $0$ otherwise, and $\lceil\cdot \rceil$ denotes the ceiling function. Finally, we will use standard Bachmann-Landau asymptotic notation, e.g., $O(\cdot)$, $\Omega(\cdot)$, $\Theta(\cdot)$, where it is understood that $n \rightarrow \infty$, and tilde notation, e.g., $\tilde{O}(\cdot)$, $\tilde{\Omega}(\cdot)$, $\tilde{\Theta}(\cdot)$, when we neglect ${\sf poly}(\log(n))$ factors and problem parameters other than $n$.

\section{Estimation algorithm}
\label{Formal model}

\noindent{\bf Overview.} Our interest is in estimating the skill PDF $\pdf$ from noisy, discrete observations $\{Z_m(i,j) : \{i,j\}\in \G(n,p), \, m \in [k]\}$. Instead, if we had exact knowledge of the samples $\alpha_i, i \in [n]$ from $\pdf$, then we could utilize traditional methods from non-parametric statistics such as kernel density estimation. However, we do not have access to these samples. So, given pairwise comparisons 
$\{Z_m(i,j) : \{i,j\}\in \G(n,p), \, m \in [k]\}$ generated as per the BTL model with parameters $\alpha_i, i \in [n]$, we can use some recent developments from the BTL-related literature to estimate these skill parameters first. Therefore, a natural two-stage algorithm is to first estimate $\alpha_i, i \in [n]$ using the observations, and then use these estimated parameters to produce an estimate of $\pdf$. We do precisely this. The key challenge is to ensure that the PDF estimation method is robust to the estimation 
error in $\alpha_i, i \in [n]$. As our main contribution, we rigorously argue that carefully chosen methods for both steps produces as good an estimation of $\pdf$ as if we had access to the exact knowledge of $\alpha_i, i \in [n]$. 

\noindent{\bf Setup.} We formalize the setup here. For any given $\delta, \epsilon, b \in (0,1)$ and $\eta,L_1,B > 0$, let $\pdfs = \pdfs(\delta, \epsilon, b, \eta, L_1, B)$ be the set of all uniformly bounded PDFs with respect to the Lebesgue measure on $\R$ that have support in $[\delta,1]$, belong to the $\eta$-\emph{H\"{o}lder class} \cite[Definition 1.2]{Tsybakov2009}, and are lower bounded by $b$ in an $\epsilon$-neighborhood of $1$. More precisely, for every $f \in \pdfs$, $f$ is bounded (almost everywhere), i.e., $f(x) \leq B$ for all $x \in [\delta,1]$; $f$ is $s = \lceil \eta \rceil - 1$ times differentiable, 
and its $s$th derivative $f^{(s)} : [\delta,1] \rightarrow \R$ satisfies $|f^{(s)}(x) - f^{(s)}(y)| \leq L_1 |x - y|^{\eta - s}$ for all 
$x,y \in [\delta,1]$; and
 $f(x) \geq b$ for all $x \in [1 - \epsilon,1]$. As an example, when $\eta = 1$, $\pdfs$ denotes the set of all Lipschitz continuous PDFs on $[\delta,1]$ that are lower bounded near $1$. Furthermore, we define the observation matrix $Z \in [0,1]^{n\times n}$, whose $(i,j)$th entry is:
\begin{equation}
\label{Eq: Observation matrix}
\forall i,j \in [n], \enspace Z(i,j) \triangleq 
\begin{cases}
\I\!\left\{\{i,j\} \in \G(n,p)\right\} \frac{1}{k} \sum_{m = 1}^{k}{Z_m(i,j)} \, , & i \neq j \, , \\
0 \, , & i = j \, . 
\end{cases}
\end{equation}

\noindent{\bf Estimation error.} It turns out that $Z$ is a sufficient statistic for the purposes of estimating $\alpha_i, i \in n$ \cite[p.2208]{Chenetal2019}. For this reason, we shall restrict our attention to all possible estimators of $\pdf$ using $Z$. 
Specifically, let $\est$ be set of all possible measurable and potentially randomized estimators that map $Z$ to a Borel measurable function from $\R$ to $\R$. Then, the minimax MSE risk is defined as: 
\begin{equation}
\label{Eq: Minimax MSE Formulation}
R_{\mathsf{MSE}}(n) \triangleq \inf_{\estpdf \in \est}\,{ \sup_{\pdf \in \pdfs}{ \E\!\left[\int_{\R}{\left(\estpdf(x) - \pdf(x)\right)^2\diff{x}}\right] } } 
\end{equation}
where the expectation is with respect to the randomness in $Z$ as well as within the estimator. Our interest will be in understanding the scaling of $R_{\mathsf{MSE}}(n)$ as a function of $n$ and $\eta$. In the sequel, we will assume that the parameters
$k, p, \delta, \epsilon, b$ can depend on $n$, and all other parameters are constant.

\noindent{\bf Step 1: Estimate $\alpha_i, i \in [n]$.} Given the observation matrix $Z$, let $S \in \R^{n \times n}$ be the 
``empirical stochastic matrix'' whose $(i,j)$th element is given by:
\begin{equation}
\label{Eq: Stochastic Matrix Estimator}
\forall i,j \in [n], \enspace S(i,j) \triangleq 
\begin{cases}
\displaystyle{\frac{1}{2np} Z(i,j)} \, , & i \neq j \, ,  \\
\displaystyle{1 - \frac{1}{2np} \sum_{r = 1}^{n}{Z(i,r)}} \, , & i = j \, .
\end{cases} 
\end{equation}
As shown in \propref{Prop: Estimator Stochastic Matrix} in Appendix \ref{Proof of Prop Estimator Stochastic Matrix}, it is straightforward to verify that $S \in \S_{n \times n}$ (i.e., $S$ is row stochastic) with high probability when $p = \Omega(\log(n)/n)$. Next, inspired by the 
\emph{rank centrality} algorithm in \cite{NegahbanOhShah2012,NegahbanOhShah2017}, let $\hat{\pi}_* \in \S_n$ be the invariant 
distribution of $S$, given by: 
\begin{equation}
\label{Eq: Empirical Stat Dist}
\hat{\pi}_{*} \triangleq \begin{cases}
\text{invariant distribution of } S \text{ such that } \hat{\pi}_* = \hat{\pi}_* S \, , & S \in \S_{n \times n} \, , \\
\text{any randomly chosen distribution in } \S_n \, , & S \notin \S_{n \times n} \, ,
\end{cases}     
\end{equation}
where when $S \in \S_{n \times n}$, an invariant distribution always exists and we choose one arbitrarily 
when it is not unique. Then, we can define the following estimates of $\alpha_1,\dots,\alpha_n$ based on $Z$:
\begin{equation}
\label{Eq: Estimates of merit parameters}
\forall i \in [n], \enspace \hat{\alpha}_i \triangleq \frac{\hat{\pi}_*(i)}{\left\|\hat{\pi}_*\right\|_{\infty}}
\end{equation}
where $\hat{\pi}_*(i)$ denotes the $i$th entry of $\hat{\pi}_*$ for $i \in [n]$.

\noindent{\bf Step 2: Estimate $\pdf$.} Using \eqref{Eq: Estimates of merit parameters}, we construct the \emph{Parzen-Rosenblatt (PR) kernel density estimator} $\ests:\R \rightarrow \R$ for $\pdf$ based on $\hat{\alpha}_1,\dots,\hat{\alpha}_n$ (instead of $\alpha_1,\dots,\alpha_n$) \cite{Rosenblatt1956,Parzen1962}:
\begin{equation}
\label{Eq: Robust PR Estimator}
\forall x \in \R, \enspace \ests(x) \triangleq \frac{1}{n h} \sum_{i = 1}^{n}{K\!\left(\frac{\hat{\alpha}_i - x}{h}\right)}
\end{equation}
where $h > 0$ is a judiciously chosen bandwidth parameter (see the proof in Appendix \ref{Proof of MSE Upper Bound}):
\begin{equation}
\label{Eq: Precise Bandwidth}
h = \gamma \max\!\left\{\frac{1}{\delta^{\frac{1}{\eta + 1}} (p k)^{\frac{1}{2 \eta + 2}}},1\right\} \left(\frac{\log(n)}{n}\right)^{\frac{1}{2 \eta + 2}}
\end{equation}
for any (universal) constant $\gamma > 0$, and $K : [-1,1] \rightarrow \R$ is any fixed kernel function with certain properties that we explain below.

For any $s \in \N\cup\!\{0\}$, the function $K : [-1,1] \rightarrow \R$ is said to be a \emph{kernel of order $s$}, where we assume that $K(x) = 0$ for $|x| > 1$, if $K$ is (Lebesgue) square-integrable, $\int_{\R}{K(x) \diff{x}} = 1$, and $\int_{\R}{x^i K(x) \diff{x}} = 0$ for all $i \in [s]$ when $s \geq 1$. 
Such kernels of order $s$ can be constructed using orthogonal polynomials as expounded in \cite[Section 1.2.2]{Tsybakov2009}. 
We will additionally assume that there exists a constant $L_2 > 0$ such that our kernel $K : [-1,1] \rightarrow \R$ 
is \emph{$L_2$-Lipschitz continuous}, i.e., $|K(x) - K(y)| \leq L_2 |x - y|$ for all $x,y \in \R$. This is a mild assumption 
since several well-known kernels satisfy it. For instance, the (parabolic) \emph{Epanechnikov kernel} 
$K_{\mathsf{E}}(x) \triangleq \frac{3}{4}(1 - x^2) \I\{|x| \leq 1\}$ has order $s = 1$, and is Lipschitz continuous with 
$L_2 = \frac{3}{2}$ \cite{Epanechnikov1969}. Other examples of valid kernels can be found 
in \cite[p.3 and Section 1.2.2]{Tsybakov2009}.

\noindent{\bf Algorithm, in summary.} 
Here, we provide the `pseudo-code' summary of our algorithm. 

\begin{algorithm}[H]
\begin{algorithmic}[1]
\renewcommand{\algorithmicrequire}{\textbf{Input:}}
\Require Observation matrix $Z \in [0,1]^{n \times n}$ (as defined in \eqref{Eq: Observation matrix})
\renewcommand{\algorithmicensure}{\textbf{Output:}}
\Ensure Estimator $\ests : \R \rightarrow \R$ of the unknown PDF $P_{\alpha}$
\Statex \textbf{\emph{Step 1:} Skill parameter estimation using rank centrality algorithm}
\State Construct $S \in \S_{n \times n}$ according to \eqref{Eq: Stochastic Matrix Estimator} using $Z$ (and $p$ and $n$)
\State Compute leading left eigenvector $\hat{\pi}_* \in \S_n$ of $S$ in \eqref{Eq: Empirical Stat Dist} \Comment{$\hat{\pi}_*$ is the invariant distribution of $S$}
\State Compute estimates $\hat{\alpha}_i = \hat{\pi}_*(i)/\|\hat{\pi}_*\|_{\infty}$ for $i = 1,\dots,n$ via \eqref{Eq: Estimates of merit parameters}
\Statex \textbf{\emph{Step 2:} Kernel density estimation using Parzen-Rosenblatt  method} 
\State Compute bandwidth $h$ via \eqref{Eq: Precise Bandwidth} (using $p$, $k$, $\delta$, $\eta$, and $n$)
\State Construct $\ests$ according to \eqref{Eq: Robust PR Estimator} using $\hat{\alpha}_1,\dots,\hat{\alpha}_n$, $h$, and a valid kernel $K : [-1,1] \rightarrow \R$ 
\State \Return $\ests$
\end{algorithmic}
\caption{Estimating skill PDF $\pdf$ using $Z$.}
\label{Algorithm: Density Estimation}
\end{algorithm}

With fixed $\delta \in (0,1)$, $\eta > 0$, and a valid kernel $K : [-1,1] \rightarrow \R$, and given knowledge of $k \in \N$ and $p \in (0,1]$ (which can also be easily estimated), Algorithm \ref{Algorithm: Density Estimation} constructs the estimator \eqref{Eq: Robust PR Estimator} for $\pdf$ based on $Z$. In Algorithm \ref{Algorithm: Density Estimation}, we assume that $S \in \S_{n \times n}$, because this is almost always the case in practice. Furthermore, if $k$ varies between players so that $i$ and $j$ play $k_{i,j} = k_{j,i}$ games for $i \neq j$, we can re-define the data $Z(i,j)$ to use $k_{i,j}$ instead of $k$ in \eqref{Eq: Observation matrix}, and utilize an appropriately altered bandwidth $h$. The computational complexity of Algorithm \ref{Algorithm: Density Estimation} is determined by the running time of rank centrality: if the spectral gap of $S$ is $\Theta(1)$ 
and we use  \emph{power iteration} (cf. \cite[Section 7.3.1]{GolubVanLoan1996}, \cite[Section 4.4.1]{Demmel1997}) to obtain an $O(n^{-5})$ $\ell_2$-approximation of $\hat{\pi}_*$, then Algorithm \ref{Algorithm: Density Estimation} runs in $O(n^2 \log(n))$ time. We refer readers to Appendix \ref{App: Intuition} for further intuition regarding Algorithm \ref{Algorithm: Density Estimation}.

\section{Main results}
\label{Main results and discussion}

We now present our main results: an achievable minimax MSE for the $\pdf$ estimation method in Algorithm \ref{Algorithm: Density Estimation}, and minimax lower bounds on estimation of the skill parameters $\alpha_i, i \in [n]$ from $Z$ (i.e., Step 1 of Algorithm \ref{Algorithm: Density Estimation}) for {\em any} method. This collectively establishes the near-optimality of our proposed method as $\eta \to \infty$, i.e., as the density becomes smooth ($\mathcal{C}^{\infty}$). To this end, we first establish minimax rates for skill parameter estimation, and then derive minimax rates for PDF estimation. 

\noindent{\bf Tight minimax bounds on skill parameter estimation.} To obtain tight $\pdf$ estimation, it is 
essential that we have tight skill parameter estimation. Hence, we show that the parameter estimation step performed in \eqref{Eq: Empirical Stat Dist} has minimax optimal rate. Specifically, we define the ``canonically scaled'' skill parameters $\pi \in \S_n$ with $i$th entry given by:
\begin{equation}
\label{Eq: Canonically scaled merit parameters}
\forall i \in [n], \enspace \pi(i) \triangleq \frac{\alpha_i}{\alpha_1 + \cdots + \alpha_n} \, .
\end{equation}
Building upon \cite[Theorem 3.1]{Chenetal2019}, the ensuing theorem portrays that the \emph{minimax relative $\ell^{\infty}$-risk} of 
estimating \eqref{Eq: Canonically scaled merit parameters} based on $Z$ is 
$\tilde{\Theta}(n^{-1/2})$ (see \tabref{Table: Technical Contributions}). For simplicity, 
we will assume throughout this subsection on skill parameter estimation that $\delta$, $p$, and $k$ are $\Theta(1)$.

\begin{theorem}[Minimax Relative $\ell^{\infty}$-Risk]
\label{Thm: Minimax Relative l^infty-Risk} 
For sufficiently large constants $c_{14},c_{15} > 0$ (which depend on $\delta$, $p$, and $k$), and for all sufficiently large $n \in \N$:
$$ \frac{c_{14}}{\log(n)\sqrt{n}} \leq \inf_{\hat{\pi}}{\sup_{\pdf \in \pdfs}{\E\!\left[\frac{\left\|\hat{\pi} - \pi\right\|_{\infty}}{\left\|\pi\right\|_{\infty}}\right]}} \leq \sup_{\pdf \in \pdfs}{\E\!\left[\frac{\left\|\hat{\pi}_* - \pi\right\|_{\infty}}{\left\|\pi\right\|_{\infty}}\right]} \leq c_{15} \sqrt{\frac{\log(n)}{n}} $$
where the infimum is over all estimators $\hat{\pi} \in \S_n$ of $\pi$ based on $Z$, and $\hat{\pi}_* \in \S_n$ is defined in \eqref{Eq: Empirical Stat Dist}. 
\end{theorem}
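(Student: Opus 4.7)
The theorem combines an upper bound on the rank centrality estimator $\hat\pi_*$ with a matching minimax lower bound, which I would establish separately.

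\textbf{Upper bound.} The plan is to reduce to the fixed-$\alpha$ result of Chen et al. Theorem 3.1. Every $\pdf \in \pdfs$ is supported on $[\delta,1]$, so the samples $\alpha_1,\dots,\alpha_n$ have dynamic range $\max_i\alpha_i/\min_i\alpha_i \leq 1/\delta = \Theta(1)$ almost surely. Conditionally on any such $\alpha$ (which automatically satisfies the bounded-dynamic-range hypothesis of \cite{Chenetal2019}), that theorem gives $\|\hat\pi_* - \pi\|_\infty/\|\pi\|_\infty \leq C\sqrt{\log(n)/(npk)}$ with probability at least $1 - O(n^{-c})$ for any prescribed constant $c$ by enlarging $C$. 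To pass from this tail bound to an expectation bound, I would use the deterministic estimate $\|\hat\pi_* - \pi\|_\infty/\|\pi\|_\infty \leq 2n$, which follows from $\|\hat\pi_*\|_\infty + \|\pi\|_\infty \leq 2$ together with $\|\pi\|_\infty \geq 1/n$; this bounds the tail contribution by $O(n^{1-c}) = o(\sqrt{\log(n)/n})$ once $c$ is chosen large enough. Integrating over $\alpha\sim\pdf^{\otimes n}$ and taking the supremum over $\pdfs$ yields the claimed $c_{15}\sqrt{\log(n)/n}$.

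\textbf{Lower bound strategy.} For the lower bound I would invoke the continuum generalized Fano method of \cite{IbragimovKhasminskii1977,Khasminskii1979}: if one can exhibit $M$ hypotheses whose functionals of interest are $2s$-separated in the target metric and whose observation laws have pairwise KL divergences bounded by $\beta$, then the minimax risk is at least of order $s(1 - (\beta + \log 2)/\log M)$. I would thus construct a packing $\{P_\alpha^{(1)},\dots,P_\alpha^{(M)}\} \subset \pdfs$ of hypothesis densities, together with reference canonical vectors $\pi^{(1)},\dots,\pi^{(M)}$, such that (i) when $\alpha\sim(P_\alpha^{(j)})^{\otimes n}$, the induced $\pi$ satisfies $\|\pi - \pi^{(j)}\|_\infty \ll s$ with high probability and $\|\pi^{(j)} - \pi^{(j')}\|_\infty/\|\pi^{(j)}\|_\infty \geq 4s$ with $s \asymp 1/(\log(n)\sqrt n)$ for $j\neq j'$, and (ii) the marginal law of $Z$ under $(P_\alpha^{(j)})^{\otimes n}$ is within KL divergence $O(\log n)$ of that under $(P_\alpha^{(j')})^{\otimes n}$, while the cardinality satisfies $\log M = \Theta(\log n)$. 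The Fano inequality then forces a constant-order error probability against $\ell^\infty$-tolerance $s$, which translates into the stated minimax lower bound.

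\textbf{Construction and KL accounting.} I would start from a smooth reference density on $[\delta,1]$ (for example a mollified uniform) and add local bumps inside the slab $[1-\epsilon,1]$, where the pointwise lower bound $f \geq b$ provides headroom for both additive and subtractive perturbations without leaving $\pdfs$. Each bump shifts one coordinate of the population $\pi$ by a controlled amount, producing the required $\ell^\infty$-separation, while its shape is tuned so that the $\eta$-H\"{o}lder budget is respected. Standard concentration of order statistics of the $\alpha_i$'s near the top of $[\delta,1]$, together with a union bound over the $M$ hypotheses, transfers the packing statement from the reference vectors $\pi^{(j)}$ to the random sampled $\pi$. The KL accounting proceeds via the chain rule over the $\Theta(n^2 pk)$ independent BTL Bernoulli outcomes; each cell contributes KL of order $(\Delta\alpha)^2$ where $\Delta\alpha$ is the perturbation of the relevant coordinates, which with per-coordinate perturbations of order $\tilde O(n^{-1/2})$ yields a total KL of order $\tilde O(1)$, matched to $\log M = \Theta(\log n)$.

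\textbf{Main obstacle.} The delicate step is the joint calibration of (i) and (ii) against the H\"{o}lder and boundedness constraints of $\pdfs$: larger or sharper bumps improve $\ell^\infty$-separation but cost smoothness and inflate pairwise KL. The ``careful covering argument'' plays these three levers---packing cardinality, bump amplitude, and bump width---against each other, and the residual $\log n$ gap between the upper and lower bounds reflects the standard price of converting a union-bound-based $\ell^\infty$ upper bound into a single-coordinate Fano lower bound. I expect the most technically fiddly substep to be the concentration transfer from $\pi^{(j)}$ to the random sampled $\pi$: one must ensure that the coordinate targeted by each bump lands, with high probability over $\alpha\sim(P_\alpha^{(j)})^{\otimes n}$, in the intended window so that the $\ell^\infty$-packing survives sampling and the Fano hypothesis is not contaminated across distinct indices $j$.
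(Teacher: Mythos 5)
Your upper bound argument is essentially the paper's: invoke Chen et al.\ Theorem~3.1 conditionally on $\alpha$, pass to expectation by splitting on the high-probability event and using a crude deterministic bound (the paper uses $\|\hat\pi_*-\pi\|_\infty\leq 1$ and $\|\pi\|_\infty\geq 1/n$, you use a slightly looser version of the same), and take the supremum over $\pdfs$. That part is fine.

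Your lower bound, however, is a different and, I believe, unworkable plan. You propose the \emph{finite-hypothesis} Fano method over a packing $\{P_\alpha^{(1)},\dots,P_\alpha^{(M)}\}\subset\pdfs$, and you need (your item (i)) that under $\alpha\sim(P_\alpha^{(j)})^{\otimes n}$ the realized $\pi$ concentrates near a reference vector $\pi^{(j)}$ at scale $\ll s$. But the estimand $\pi$ is itself random given $P_\alpha$, and its entry-wise fluctuation is far larger than the separation you need. Indeed $\pi(i)=\alpha_i/\sum_j\alpha_j$ has numerator varying by $\Theta(1)$ across the interval $[\delta,1]$ while the denominator concentrates at scale $n$, so $\pi(i)$ varies on the order of $1/n$ from sample to sample. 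Meanwhile the target relative separation $s\asymp 1/(\sqrt n\log n)$ corresponds, after multiplying by $\|\pi\|_\infty\asymp 1/n$, to an absolute $\ell^\infty$-scale $\asymp n^{-3/2}/\log n$, which is much smaller than $1/n$. No choice of reference vectors $\pi^{(j)}$ can make the realized $\pi$ concentrate at that scale, so the packing does not survive the sampling of $\alpha$ and your discrete Fano hypothesis test cannot even be set up.

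The paper avoids this obstacle by not packing over densities at all: it fixes the single prior $\pdf=\unif([\delta,1])$, thereby reducing the minimax risk to a Bayes risk for the \emph{random} parameter $\pi$ (Eq.~\eqref{Eq: Bayes Risk Lower Bound}), and applies the Xu--Raginsky continuum Fano bound (\lemref{Lemma: Generalized Fano's Method}), which expresses the Bayes risk in terms of $I(\pi;Z)$ and a small-ball probability $\L_\infty(t)=\sup_{\nu}\P(\|\pi-\nu\|_\infty/\|\pi\|_\infty\leq t)$. The two substantive technical contributions your proposal misses are exactly the inputs to that formula: (a) the sharp $O(n\log n)$ upper bound on $I(\pi;Z)$ via the covering-number argument of \propref{Prop: Upper Bound on MI} (the naive convexity bound only gives $O(n^2)$ and would destroy the rate), and (b) the bound $\L_\infty(t)\leq (2/(\delta(1-\delta)))^n\,t^{n-1}$ of \lemref{Lemma: Upper Bound on Small Ball Probability}, which is obtained by a change-of-variables Jacobian computation (\lemref{Lemma: Bound on Mode of Joint PDF of pi}) giving the mode of the joint density of $(\pi(1),\dots,\pi(n-1))$ under the uniform prior. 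Your ``bump'' construction and BTL-cell KL accounting play no role in the paper's argument, and as explained above they cannot be salvaged without a fundamentally different treatment of the randomness in $\pi$.
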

The proof of \thmref{Thm: Minimax Relative l^infty-Risk} can be found in Appendix \ref{Proof of Thm Minimax Relative l^infty-Risk}. \thmref{Thm: Minimax Relative l^infty-Risk} states that the rank centrality estimator $\hat{\pi}_*$ achieves an extremal Bayes relative $\ell^{\infty}$-risk of $\tilde{O}(n^{-1/2})$, and no other estimator can achieve a risk that decays faster than $\tilde{\Omega}(n^{-1/2})$. In the same vein, we show that the \emph{minimax (relative) $\ell^{1}$-risk} (or total variation distance risk) of estimating 
\eqref{Eq: Canonically scaled merit parameters} based on $Z$ is also $\tilde{\Theta}(n^{-1/2})$ (see \tabref{Table: Technical Contributions}).
\begin{theorem}[Minimax $\ell^1$-Risk]
\label{Thm: Minimax l^1-Risk} 
For sufficiently large constants $c_{17},c_{18} > 0$ (which depend on $\delta$, $p$, and $k$), and for all sufficiently large $n \in \N$:
$$ \frac{c_{17}}{\log(n)\sqrt{n}} \leq \inf_{\hat{\pi}}{\sup_{\pdf \in \pdfs}{\E\!\left[\left\|\hat{\pi} - \pi\right\|_{1}\right]}} \leq \sup_{\pdf \in \pdfs}{\E\!\left[\left\|\hat{\pi}_* - \pi\right\|_{1}\right]} \leq \frac{c_{18}}{\sqrt{n}} \, . $$
\end{theorem}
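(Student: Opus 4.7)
The plan is to follow the same high-level template as \thmref{Thm: Minimax Relative l^infty-Risk}: for the upper bound, convert $\ell^1$ to $\ell^2$ and call in the existing relative $\ell^2$-risk analysis of rank centrality, and for the lower bound, build a dense packing in $\pdfs$ and feed it into the continuum Fano machinery already deployed for the $\ell^\infty$ case.

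For the upper bound, I would condition on the realized skills $\alpha_1,\ldots,\alpha_n$, which lie in $[\delta,1]^n$ almost surely under any $\pdf \in \pdfs$. The $\ell^2$-risk analysis of the rank centrality estimator from \cite{NegahbanOhShah2017,Chenetal2019} then gives $\E[\|\hat{\pi}_* - \pi\|_2^2 \mid \alpha_1,\ldots,\alpha_n] \leq C \|\pi\|_2^2 / n$ for some constant $C$ depending on $\delta,p,k$. Since $\pi(i) \leq 1/(n\delta)$ from $\alpha_i \geq \delta$, we have $\|\pi\|_2^2 \leq 1/(n\delta^2)$, and therefore $\E\|\hat{\pi}_* - \pi\|_2^2 = O(1/n^2)$. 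Cauchy-Schwarz and Jensen now yield $\E\|\hat{\pi}_* - \pi\|_1 \leq \sqrt{n}\,\sqrt{\E\|\hat{\pi}_* - \pi\|_2^2} = O(1/\sqrt{n})$, which is the stated upper bound $c_{18}/\sqrt{n}$.

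For the lower bound, I would reuse the continuum Fano framework from \thmref{Thm: Minimax Relative l^infty-Risk} but choose the packing so that the separation registers in $\ell^1$. Start from a Varshamov-Gilbert lemma to get $M = 2^{\Omega(n)}$ sign sequences $\epsilon^{(1)},\ldots,\epsilon^{(M)} \in \{0,1\}^n$ with pairwise Hamming distance at least $n/8$. For a perturbation scale $r > 0$, associate to each $\epsilon^{(j)}$ a density $\pdf^{(j)} \in \pdfs$ obtained by smoothly perturbing a fixed reference density $\pdf^{(0)} \in \pdfs$, so that under the Bayesian reduction the typical profile is $\alpha^{(j)}_i \approx \bar{\alpha} + r\,\epsilon^{(j)}_i$ for some $\bar\alpha \in [1-\epsilon,1]$. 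A first-order expansion of $\pi(i) = \alpha_i/(\alpha_1+\cdots+\alpha_n)$ around $\bar\alpha$ shows $\|\pi^{(j)} - \pi^{(j')}\|_1 \gtrsim r/\bar{\alpha}$ for such $j \neq j'$. A direct chi-square calculation on the Bernoulli likelihoods $\alpha_j/(\alpha_i+\alpha_j)$ summed over the $\Theta(n^2 pk)$ observations bounds the pairwise KL of the induced laws on $Z$ by $O(n^2 pk\, r^2)$. Choosing $r = \Theta(1/(\sqrt{npk}\,\log n))$ balances $\log M$ against the total KL at the precision the continuum Fano inequality requires, and yields the $\Omega(1/(\sqrt{n}\log n))$ lower bound.

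The main obstacle is the lower bound construction, and specifically two coupled difficulties: (i) the perturbed profiles $\alpha^{(j)}$ must be realizable as typical draws from densities that simultaneously satisfy the $\eta$-H\"older smoothness, the support constraint $[\delta,1]$, and the lower bound $b$ near $1$; and (ii) $\pi$ is a random vector that depends on the realized $\alpha_i$'s (not on $\pdf^{(j)}$ alone), so the reduction must either condition on a high-probability typicality event or first prove a Bayes risk lower bound and then translate it back to the supremum over $\pdfs$. This is precisely what the ``continuum'' version of Fano with a careful covering argument is designed for, exactly as in \thmref{Thm: Minimax Relative l^infty-Risk}, and I would expect the bulk of the argument to be a minor modification of that proof with the $\ell^\infty$ separation computation replaced by its $\ell^1$ counterpart. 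By contrast, the upper bound is essentially a one-line Cauchy-Schwarz calculation on top of the existing spectral estimator analysis.
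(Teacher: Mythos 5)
Your upper bound is essentially the paper's argument: both of you invoke the relative $\ell^2$-risk guarantee for rank centrality from Chen et al., apply $\|v\|_1 \le \sqrt{n}\,\|v\|_2$, and use $\|\pi\|_2 \le 1/(\delta\sqrt{n})$. The paper conditions on the high-probability event from Lemma~\ref{Lemma: Relative ell^2-Loss Bound} rather than passing through $\E[\|\cdot\|_2^2]$ and Jensen, but this is cosmetic.

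Your lower bound, however, has a genuine conceptual gap. You propose to encode a Varshamov--Gilbert sign pattern $\epsilon^{(j)} \in \{0,1\}^n$ into a density $\pdf^{(j)}$ so that ``the typical profile is $\alpha^{(j)}_i \approx \bar\alpha + r\,\epsilon^{(j)}_i$.'' This cannot work as stated: $\pdf^{(j)}$ is a one-dimensional density and $\alpha_1,\dots,\alpha_n$ are drawn i.i.d.\ from it, so the coordinates of $\alpha$ are exchangeable under any $\pdf^{(j)} \in \pdfs$. No choice of a single scalar density can impose a coordinate-indexed sign pattern on the vector $(\alpha_1,\dots,\alpha_n)$. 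Consequently the separation $\|\pi^{(j)} - \pi^{(j')}\|_1 \gtrsim r/\bar\alpha$ you want never materializes, and the whole packing collapses. This is not a small technicality you can patch by ``conditioning on a typicality event''; it is the reason the problem is structurally different from a standard parametric packing argument, and it is exactly the reason the paper does something else.

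What the paper actually does is genuinely different from (and incompatible with) your plan. It never builds a packing in $\pdfs$. Instead it fixes a \emph{single} prior $\pdf = \unif([\delta,1])$, lower bounds the minimax risk by the Bayes risk under that one prior, and then applies the continuum Fano inequality of Xu and Raginsky (Lemma~\ref{Lemma: Generalized Fano's Method}) \emph{directly to the estimand} $\pi$, which is a continuously distributed random vector induced by the prior. The two ingredients are then (i) an upper bound $I(\pi;Z) = O(n\log n)$, proved via a Yang--Barron covering-number argument (Proposition~\ref{Prop: Upper Bound on MI}), and (ii) an upper bound on the small-ball probability $\L_1(t)$, proved by computing the mode of the density of $\tilde\pi$ via a change of variables and multiplying by the volume of the $\ell^1$-ball (Lemmata~\ref{Lemma: Bound on Mode of Joint PDF of pi} and~\ref{Lemma: Upper Bound on Small Ball Probability 2}). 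The ``continuum'' feature you gestured at thus acts on the distribution of $\pi$, not on a covering of $\pdfs$. Note also that the paper explicitly contrasts its ``worst-case prior'' formulation with the ``worst-case parameter'' formulation of prior work; a packing over $\alpha$-vectors would address the latter but not the former, and a packing over $\pdfs$ cannot carry the per-coordinate information you need because of exchangeability. If you want to recover the stated lower bound you should abandon the Varshamov--Gilbert construction and instead work out the $\ell^1$ small-ball probability for the uniform prior.
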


\thmref{Thm: Minimax l^1-Risk} is established in Appendix \ref{Proof of Thm Minimax l^1-Risk}. The upper bounds in Theorems \ref{Thm: Minimax Relative l^infty-Risk} and \ref{Thm: Minimax l^1-Risk} follow from \cite[Theorems 3.1 and 5.2]{Chenetal2019} after some calculations, but the \emph{lower bounds are novel contributions}. We prove them by first lower bounding the minimax risks in terms of Bayes risks in order to circumvent an involved analysis of the infinite-dimensional parameter space $\pdfs$. In particular, we \emph{set $\pdf \in \pdfs$ to be the uniform PDF} over $[\delta,1]$, denoted $\unif([\delta,1]) \in \pdfs$. We then lower bound the Bayes risks using a recent generalization of \emph{Fano's method} \cite{Khasminskii1979,IbragimovKhasminskii1977} (cf. \cite{Yu1997,Tsybakov2009}), which was specifically 
developed to produce such lower bounds in the setting where the parameter space is a continuum, e.g., $[\delta,1]$, 
instead of a finite set \cite{Zhang2006,DuchiWainwright2013,ChenGuntuboyinaZhang2016,XuRaginsky2017}; see Appendices \ref{Generalized Fano's method} and \ref{Auxiliary lemmata 2}.

The principal analytical difficulty in executing the generalized Fano's method is in deriving a tight upper bound on the 
\emph{mutual information} between $\pi$ and $Z$, denoted $I(\pi;Z)$ (see \eqref{Eq: MI Def} in Appendix \ref{App: Covering numbers} for a formal definition), 
where the probability law of $\pi$ is defined using $\pdf = \unif([\delta,1])$. The ensuing proposition presents 
our upper bound on $I(\pi,Z)$. 

\begin{proposition}[Covering Number Bound on Mutual Information]
\label{Prop: Upper Bound on MI}
For all $n \geq 2$, we have:
$$ I(\pi;Z) \leq \frac{1}{2} n \log(n) + \frac{(1-\delta)^2}{8 \delta^2} \left( 2 + \delta + \frac{1}{\delta}\right) k p n \, . $$
\end{proposition}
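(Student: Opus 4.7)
My plan is to apply a ``cover + tube'' decomposition of the mutual information: quantize $\alpha$ to a finite $\ell^\infty$-net of $[\delta,1]^n$, pay $\log|\mathcal{C}|$ for the net's cardinality, and pay a uniform KL bound on pairs within a single cell. A preliminary reduction is that the BTL conditional law of $Z$ given $\alpha$ depends on $\alpha$ only through $\pi$, so $Z$ is conditionally independent of $\alpha$ given $\pi$, whence $I(\pi;Z) \leq I(\alpha;Z)$ by data processing. It thus suffices to bound $I(\alpha;Z)$ under the product uniform prior $\alpha \sim \unif([\delta,1]^n)$.

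Construct the cover by partitioning each coordinate of $[\delta,1]$ into $\sqrt{n}$ equal sub-intervals of length $\epsilon := (1-\delta)/\sqrt{n}$, and letting $\mathcal{C}$ consist of the $n^{n/2}$ products of left endpoints; then $\log|\mathcal{C}| = \tfrac{1}{2} n \log n$ and every $\alpha \in [\delta,1]^n$ has a nearest cover point $c^*(\alpha)$ with $\|\alpha-c^*(\alpha)\|_\infty \leq \epsilon$. Taking the surrogate $Q_Z := |\mathcal{C}|^{-1} \sum_{c \in \mathcal{C}} P_{Z \mid \alpha = c}$, the variational inequality $I(\alpha;Z) \leq \E_\alpha[D(P_{Z\mid\alpha}\|Q_Z)]$ combined with the pointwise lower bound $Q_Z(z) \geq |\mathcal{C}|^{-1} P_{Z\mid c^*(\alpha)}(z)$ yields
\begin{equation*}
I(\alpha; Z) \;\leq\; \log|\mathcal{C}| \;+\; \sup_{\substack{\alpha,\alpha' \in [\delta,1]^n \\ \|\alpha-\alpha'\|_\infty \leq \epsilon}} D\!\left(P_{Z\mid\alpha} \,\|\, P_{Z\mid\alpha'}\right).
\end{equation*}

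For the tube KL, the Erd\H{o}s-R\'enyi graph $\G(n,p)$ is independent of $\alpha$ and, conditional on the graph, games are independent Bernoullis across edges, so the KL tensorizes to $pk\sum_{i<j} D(\Ber(p_{ij})\|\Ber(p'_{ij}))$ with $p_{ij} := \alpha_j/(\alpha_i+\alpha_j)$. I would combine $D(\Ber(p)\|\Ber(q)) \leq (p-q)^2/(q(1-q))$, the telescoping identity $p_{ij} - p'_{ij} = (\alpha_i(\alpha_j - \alpha'_j) - \alpha_j(\alpha_i - \alpha'_i))/((\alpha_i+\alpha_j)(\alpha'_i+\alpha'_j))$ giving $|p_{ij}-p'_{ij}| \leq \epsilon/(\alpha'_i+\alpha'_j) \leq \epsilon/(2\delta)$, and the rewriting $1/(q(1-q)) = (\alpha'_i+\alpha'_j)^2/(\alpha'_i\alpha'_j)$. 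The key computation is $\sup_{\alpha'_i,\alpha'_j \in [\delta,1]}(\alpha'_i+\alpha'_j)^2/(\alpha'_i\alpha'_j) = (1+\delta)^2/\delta = 2+\delta+\delta^{-1}$ (attained at $(\delta,1)$); per-pair KL is thus at most $\epsilon^2 (1+\delta)^2/(4\delta^3)$. Summing over the $\binom{n}{2}$ pairs and substituting $\epsilon^2 = (1-\delta)^2/n$ delivers the stated tube contribution $(1-\delta)^2(2+\delta+\delta^{-1}) kpn/(8\delta^2)$.

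The main technical care lies in the KL bookkeeping: one must carry $(\alpha'_i+\alpha'_j)^2/(\alpha'_i\alpha'_j)$ as a single ratio (whose maximum over $[\delta,1]^2$ is $(1+\delta)^2/\delta$) rather than bounding numerator and denominator separately by $\delta$, since the combined ratio is what produces the precise constant $2+\delta+\delta^{-1}$. The data-processing reduction, cover construction, and the variational mutual-information inequality are otherwise standard.
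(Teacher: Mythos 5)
Your proposal is correct and matches the paper's argument in all essentials: data processing from $\pi$ to $\alpha$, a Yang--Barron covering bound with a uniform product grid of mesh $(1-\delta)/\sqrt{n}$, the binary $\chi^2$ upper bound on KL divergence, and the maximization of $2 + \gamma_i/\gamma_j + \gamma_j/\gamma_i$ over $[\delta,1]^2$, together yielding exactly the stated constants. The only stylistic difference is that you fold the Erd\H{o}s--R\'enyi randomness directly into the KL between marginalized conditional laws $P_{Z\mid\alpha}$ and $P_{Z\mid\alpha'}$ (picking up the factor $p$ via the chain rule over edge indicators), whereas the paper first reduces to $I(\alpha;Z\mid\G(n,p))$, applies the covering bound per fixed graph $G$ with $\varepsilon$ proportional to $|G|$, and then averages over $\E[|G|]=p\binom{n}{2}$; the two routes are equivalent.
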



\propref{Prop: Upper Bound on MI} is proved in Appendix \ref{App: Covering numbers}. We note that although standard information inequalities, e.g. \cite[Equation (44)]{ChenGuntuboyinaZhang2016}, typically suffice to obtain minimax rates for various estimation problems, they only produce a sub-optimal estimate $I(\pi;Z) = O(n^2)$ in our problem, as explained at the end of Appendix \ref{App: Covering numbers}, cf. \eqref{Eq: Last Quadratic Upper Bound}. So, to derive the sharper estimate $I(\pi;Z) = O(n \log(n))$ 
in \propref{Prop: Upper Bound on MI}, we execute a careful covering number argument that is inspired by the 
techniques of \cite{YangBarron1999} (also see the distillation in \cite[Lemma 16.1]{Wu2019}).

We make two further remarks. Firstly, it is worth juxtaposing our results with \cite[Theorem 5.2]{Chenetal2019} and \cite[Theorems 2 and 3]{NegahbanOhShah2017}, which state that the minimax relative $\ell^2$-risk of estimating $\pi$ is $\Theta(n^{-1/2})$. This result holds under a worst-case merit parameter value model as opposed to the worst-case prior distribution model of this paper. Secondly, both Theorems \ref{Thm: Minimax Relative l^infty-Risk} and \ref{Thm: Minimax l^1-Risk} hold verbatim if $\pdfs$ is replaced by any set of probability measures with support in $[\delta,1]$ that contains $\unif([\delta,1])$.

\noindent{\bf Tight minimax bound on skill PDF $\pdf$ estimation.} We now state our main result concerning the estimation error for $\pdf$. In particular, we
argue that the MSE risk of our estimation algorithm (see \eqref{Eq: Robust PR Estimator}) scales as $\tilde{O}(n^{-\eta/(\eta+1)})$ for any $\pdf \in \pdfs$. 

\begin{theorem}[MSE Upper Bound]
\label{Thm: MSE Upper Bound}
Fix any sufficiently large constants $c_2,c_3 > 0$ and suppose that $p \geq c_2 \log(n)/(\delta^5 n)$, $b \geq c_3 \sqrt{\log(n)/n}$, $\epsilon \geq 5\log(n)/(bn)$, and $\lim_{n \rightarrow \infty}{\delta^{-1} (n p k)^{-1/2} \log(n)^{1/2}} = 0$. Then, for any $L_2$-Lipschitz continuous kernel $K:[-1,1] \rightarrow \R$ of order $\lceil \eta \rceil - 1$, there exists a sufficiently large constant $c_{12} > 0$ (that depends on $\gamma$, $\eta$, $B$, $L_1$, $L_2$, and $K$) such that for all sufficiently large $n \in \N$:
$$ R_{\mathsf{MSE}}(n) \leq \sup_{\pdf \in \pdfs} \E\!\left[\int_{\R}{\left(\ests(x) - \pdf(x)\right)^{ 2}\diff{x}}\right] \leq c_{12} \max\!\left\{ \left(\frac{1}{\delta^2 p k}\right)^{\!\frac{\eta}{\eta + 1}},1\right\} \! \left(\frac{\log(n)}{n}\right)^{\!\frac{\eta}{\eta + 1}} . $$
\end{theorem}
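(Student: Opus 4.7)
The plan is a standard bias-variance decomposition for kernel density estimation, augmented by a perturbation term that accounts for using $\hat{\alpha}_i$ instead of the unobserved $\alpha_i$. Introduce the ``oracle'' Parzen-Rosenblatt estimator built from the true skill samples,
\begin{equation*}
\hat{P}(x) \triangleq \frac{1}{nh}\sum_{i=1}^{n}K\!\left(\frac{\alpha_i - x}{h}\right),
\end{equation*}
and split
$\int_{\R}(\ests(x)-\pdf(x))^2 \diff x \le 2\int_{\R}(\hat{P}(x)-\pdf(x))^2 \diff x + 2\int_{\R}(\ests(x)-\hat{P}(x))^2 \diff x.$
The first summand is the classical Parzen-Rosenblatt MISE, the second is the perturbation induced by Step 1 of the algorithm.

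For the oracle term, since $K$ has order $s=\lceil\eta\rceil-1$ and $\pdf$ lies in the $\eta$-H\"older class, Taylor expansion of $\E[\hat{P}(x)] = \int K(u)\pdf(x+uh)\diff u$ around $x$ kills all moments up to order $s$ and leaves an integral remainder bounded by $L_1 h^{\eta}\int|u|^{\eta}|K(u)|\diff u/\lceil\eta\rceil!$. Integrating over the support gives a squared-bias contribution $O(h^{2\eta})$. For the variance, $\mathrm{Var}[\hat P(x)] \le B\|K\|_2^2/(nh)$ pointwise, which integrates to $O(1/(nh))$. This reproduces, e.g., Theorem 1.1 of \cite{Tsybakov2009} with constants depending on $\eta,L_1,B$ and the kernel norms.

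The perturbation term is the heart of the proof. Condition on the high-probability event that (a) the rank centrality bound of \cite[Theorem 3.1]{Chenetal2019} holds, yielding $\|\hat{\pi}_*-\pi\|_{\infty}/\|\pi\|_{\infty} = \tilde O(\sqrt{\log n /(\delta^2 pk n)})$, which after renormalizing by the infinity norm gives the same rate for $\|\hat\alpha - \tilde\alpha\|_{\infty}$ with $\tilde\alpha_i = \alpha_i/\alpha_{\max}$; and (b) $\alpha_{\max} \ge 1 - 5\log(n)/(bn)$, which follows by a Bernoulli-trial argument using the lower bound $\pdf\ge b$ on $[1-\epsilon,1]$ and the hypothesis $\epsilon \ge 5\log(n)/(bn)$. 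Together with $b \ge c_3\sqrt{\log n/n}$, (a) and (b) give a uniform bound $\Delta_i \triangleq |\hat\alpha_i - \alpha_i| \le \Delta$ with $\Delta = \tilde O(\sqrt{\log n/(\delta^2 pk n)})$. Writing $g_i(x) \triangleq K((\hat\alpha_i-x)/h) - K((\alpha_i-x)/h)$, the $L_2$-Lipschitz property and the compact support of $K$ yield $|g_i(x)| \le L_2\Delta/h$ and support length $O(h)$ around $\alpha_i$. Expanding
\begin{equation*}
\int_{\R}(\ests-\hat{P})^2 \diff x = \frac{1}{n^2 h^2}\sum_{i,j}\int_{\R} g_i(x)g_j(x)\diff x,
\end{equation*}
the diagonal gives $\sum_i \int g_i^2 \le C L_2^2 \sum_i \Delta_i^2/h$, and the off-diagonal cross terms are nonzero only when $|\alpha_i - \alpha_j| = O(h)$. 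Using $2|g_i||g_j| \le g_i^2+g_j^2$ and that $\pdf$ is bounded by $B$, the expected number of such pairs $(i,j)$ is $O(B n^2 h)$, which after taking expectations produces
$\E\!\int(\ests - \hat P)^2 \diff x \le C(B,L_2,K)\,\Delta^2 / h^2.$
Plugging in $\Delta^2 = \tilde O(\log(n)/(\delta^2 pk n))$ yields the perturbation bound $\tilde O(\log(n)/(\delta^2 pk \cdot n h^2))$.

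Summing and balancing, the dominant contributions are the squared-bias $h^{2\eta}$ and the perturbation $\log(n)/(\delta^2 pk \cdot nh^2)$ (the oracle variance $1/(nh)$ is dominated by the perturbation whenever $nh \ge 1$, which holds for our choice of $h$). Equating them gives $h^{2\eta+2} \asymp \log(n)/(\delta^2 pk\cdot n)$, which is precisely the bandwidth in \eqref{Eq: Precise Bandwidth} including the $\max\{\cdot,1\}$ guard that kicks in when $\delta^2 pk \ge 1$. The resulting bound $h^{2\eta} = \max\{(\delta^2 pk)^{-1},1\}^{\eta/(\eta+1)} (\log(n)/n)^{\eta/(\eta+1)}$ matches the claim. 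The main obstacle is the cross-term analysis just above: a naive pointwise Cauchy-Schwarz across all $n$ terms would waste a factor of $n$ and produce a perturbation of order $\Delta^2/h^4$, ruining the final rate; exploiting that at most $O(nh)$ kernel supports overlap at any point (via $B$-boundedness of $\pdf$) is what recovers the optimal $\Delta^2/h^2$ scaling. A secondary technical task is to handle the complementary events in (a)-(b), where one falls back on the trivial a priori bound $\|K((\hat\alpha_i-\cdot)/h)\|_{\infty}\le\|K\|_{\infty}$ together with high-probability estimates of order $n^{-5}$ or better, so the exceptional contribution is negligible.
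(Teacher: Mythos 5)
Your proposal is correct and follows essentially the same route as the paper: decompose the MSE into the oracle Parzen–Rosenblatt MISE plus a perturbation from $\hat\alpha$ versus $\alpha$, bound the perturbation by $\tilde O(\Delta^2/h^2)$ where $\Delta$ comes from Chen et al.'s relative $\ell^\infty$ bound together with the concentration of $\max_i\alpha_i$ near $1$, and balance the squared bias $h^{2\eta}$ against $\Delta^2/h^2$ (the oracle variance $1/(nh)$ being lower order). The only organizational difference is in the perturbation term: you expand $\int(\sum_i g_i)^2$ pairwise and count the $O(Bn^2h)$ overlapping kernel supports, whereas the paper works pointwise in $x$ by introducing the set $S_x$ of players near $x$ and bounding $|S_x|=O(Bhn)$ via a Hoeffding argument (Proposition~\ref{Prop: MSE Decomposition} and Claim~\ref{Claim: Cardinality Bound}); these are dual bookkeepings of the same locality argument, and both recover the $\Delta^2/h^2$ scaling that a naive global Cauchy--Schwarz would miss.
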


\thmref{Thm: MSE Upper Bound} is established in Appendix \ref{Proof of MSE Upper Bound}. We next make several pertinent remarks. Firstly, the condition $p \geq c_2 \log(n)/(\delta^5 n)$ is precisely the critical scaling that ensures that $\G(n,p)$ 
is connected almost surely, cf. \cite[Theorem 8.11]{BlumHopcroftKannan2020}, \cite[Section 7.1]{Bollobas2001}. 
This is essential to estimate $\alpha_1,\dots,\alpha_n$ in Step 1 of Algorithm \ref{Algorithm: Density Estimation}, 
since we cannot reasonably compare the skill levels of disconnected players. Secondly, while $\ests$ can be negative, 
the non-negative truncated estimator $\estsp(x) = \max\{\ests(x),0\}$ achieves smaller MSE risk than $\ests$, cf. \cite[p.10]{Tsybakov2009}. So it is easy to construct good non-negative estimators. Thirdly, there exists a constant $c_{13} > 0$ (depending on $\eta,L_1$) such that for all sufficiently large $n \in \N$, the following minimax lower bound holds, cf. \cite[Theorem 6]{Wasserman2019}, \cite[Exercise 2.10]{Tsybakov2009}:
\begin{equation}
\label{Eq: Density Estimation Minimax Lower Bound}
R_{\mathsf{MSE}}(n) \geq \inf_{\hat{P}_{\alpha^n}(\cdot)}\,{ \sup_{\pdf \in \pdfs}{ \E\!\left[\int_{\R}{\left(\hat{P}_{\alpha^n}(x) - \pdf(x) \right)^2 \diff{x}} \right] } } \geq c_{13} \left(\frac{1}{n}\right)^{\frac{2\eta}{2\eta + 1}}
\end{equation}
where the infimum is over all estimators $\hat{P}_{\alpha^n} : \R \rightarrow \R$ of $P_{\alpha}$ based on 
$\alpha_1,\dots,\alpha_n$, and the first inequality holds because the infimum in \eqref{Eq: Minimax MSE Formulation} 
is over a subset of the class of estimators used in the infimum in \eqref{Eq: Density Estimation Minimax Lower Bound}; 
indeed, given $\alpha_1,\dots,\alpha_n$, one can simulate $Z$ via \eqref{Eq: BT model} and estimate $P_{\alpha}$ 
from $Z$. Thus, when $\eta = 1$, \thmref{Thm: MSE Upper Bound} and \eqref{Eq: Density Estimation Minimax Lower Bound} 
show that $R_{\mathsf{MSE}}(n) = \tilde{O}(n^{-1/2})$ and $R_{\mathsf{MSE}}(n) = \Omega(n^{-2/3})$. Likewise, when ($\eta \rightarrow \infty$ and) $P_{\alpha}$ is smooth, i.e., \emph{infinitely differentiable} with all derivatives bounded by $L_1$, \thmref{Thm: MSE Upper Bound} holds for all 
$\eta > 0$, and an $\Omega(n^{-1})$ lower bound analogous to \eqref{Eq: Density Estimation Minimax Lower Bound} holds \cite{IbragimovKhasminskii1982}. Letting $\varepsilon = (\eta + 1)^{-1}$, these results yield the first row of \tabref{Table: Technical Contributions}. Lastly, 
we note that similar analyses to \thmref{Thm: MSE Upper Bound} can be carried out for, e.g., \emph{Nikol'ski} 
and \emph{Sobolev classes} of PDFs, cf. \cite[Section 1.2.3]{Tsybakov2009}. 

We emphasize that the key technical 
step in the proof of \thmref{Thm: MSE Upper Bound} is the ensuing intermediate result. 

\begin{proposition}[MSE Decomposition]
\label{Prop: MSE Decomposition}
Fix any sufficiently large constants $c_2,c_3,c_8,c_9 > 0$ and suppose that $p \geq c_2 \log(n)/(\delta^5 n)$, $b \geq c_3 \sqrt{\log(n)/n}$, $\epsilon \geq 5\log(n)/(bn)$, and $\lim_{n \rightarrow \infty} \delta^{-1} \allowbreak (n p k)^{-1/2} \log(n)^{1/2} = 0$. Then, for any $\pdf \in \pdfs$, any $L_2$-Lipschitz continuous kernel $K:[-1,1] \rightarrow \R$, any bandwidth $h \in (0,1]$ with $h = \Omega\big(\!\max\{1/(\delta \sqrt{p k}),1\} \sqrt{\log(n)/n}\big)$, and any sufficiently large $n \in \N$:
$$ \E\!\left[\int_{\R}{\!\!\left(\!\ests(x) -\pdf(x)\!\right)^{\! 2}\!\!\diff{x}}\right] \!\leq 2 \E\!\left[\int_{\R}{\!\!\left(\!\hat{P}_{\alpha^n}^*(x) - \pdf(x)\!\right)^{\! 2}\!\!\diff{x}}\right] + \frac{c_8 B^2 L_2^2}{h^2} \E\!\left[\max_{i \in [n]}{\left|\hat{\alpha}_i - \alpha_i\right|^{2}}\right] + \frac{c_9 L_2^2}{n^5 h^4} $$
where $\hat{P}_{\alpha^n}^*:\R \rightarrow \R$ denotes the classical PR kernel density estimator of $P_{\alpha}$ based on the true samples $\alpha_1,\dots,\alpha_n$ (if they were made available by an oracle) \cite{Rosenblatt1956,Parzen1962}:
\begin{equation}
\label{Eq: True PR estimator}
\forall x \in \R, \enspace \hat{P}_{\alpha^n}^*(x) \triangleq \frac{1}{n h} \sum_{i = 1}^{n}{K\!\left(\frac{\alpha_i - x}{h}\right)} \, .
\end{equation}
\end{proposition}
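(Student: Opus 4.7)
The plan is to split $\E[\int(\ests - \pdf)^2 \diff{x}]$ via the triangle inequality $(a - b)^2 \leq 2(a - c)^2 + 2(c - b)^2$ with $c = \hat{P}_{\alpha^n}^*(x)$, which produces the oracle-MSE term $2\,\E[\int(\hat{P}_{\alpha^n}^* - \pdf)^2 \diff{x}]$ of the Proposition for free. The remaining work is to bound the ``plug-in error'' $\E[\int(\ests - \hat{P}_{\alpha^n}^*)^2 \diff{x}]$. Let $M = \max_i |\hat\alpha_i - \alpha_i|$. Since $K$ is $L_2$-Lipschitz and supported on $[-1,1]$, each summand $K((\hat\alpha_i - x)/h) - K((\alpha_i - x)/h)$ has absolute value at most $L_2 M/h$ and vanishes unless $|\alpha_i - x| \leq h + M$. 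On the good event $\mathcal{E} = \{M \leq h\}$, the support of the $i$-th summand is contained in $[\alpha_i - 2h, \alpha_i + 2h]$, so $|\ests(x) - \hat{P}_{\alpha^n}^*(x)| \leq (L_2 M/(n h^2))\, N(x)$ where $N(x) = |\{i : |\alpha_i - x| \leq 2h\}|$.

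The density hypothesis $\pdf \leq B$ is what upgrades a naive $1/h^3$ estimate to the claimed $1/h^2$ scaling. Using $\|f\|_2^2 \leq \|f\|_\infty^2 \cdot |\mathrm{supp}(f)|$ with $|\mathrm{supp}(\ests - \hat{P}_{\alpha^n}^*)| = O(1)$, I would bound $\|\ests - \hat{P}_{\alpha^n}^*\|_\infty \leq (L_2 M/(n h^2)) \max_x N(x)$, and note that on the further event $\mathcal{E}_1 = \{\max_x N(x) \leq C B h n\}$ (which, by standard Chernoff/Bernstein arguments applied to $\pdf \leq B$ together with a discretization of $[\delta,1]$ at scale $h$, holds with probability $1 - O(n^{-c})$ for any prescribed $c$ under the regime $h \gtrsim \log(n)/(Bn)$), one has $\int(\ests - \hat{P}_{\alpha^n}^*)^2 \diff{x} \leq O(B^2 L_2^2 M^2/h^2)$. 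To pull this expectation through, I would condition on $\alpha^n$: then $\I_{\mathcal{E}_1}$ is deterministic while $M^2$ is a function of $Z$ alone, so $\E[M^2 \I_{\mathcal{E} \cap \mathcal{E}_1}] \leq \E[M^2]$, yielding the main term $O(B^2 L_2^2/h^2)\,\E[M^2]$.

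For the complementary event $(\mathcal{E} \cap \mathcal{E}_1)^c$, I would use the crude pointwise bound $|\ests(x)|, |\hat{P}_{\alpha^n}^*(x)| \leq \|K\|_\infty/h \lesssim L_2/h$ together with the $O(1)$ support length to get the deterministic estimate $\int(\ests - \hat{P}_{\alpha^n}^*)^2 \diff{x} = O(L_2^2/h^2)$. By \thmref{Thm: Minimax Relative l^infty-Risk} (which controls $\P(M > h)$ through rank-centrality concentration), the Erd{\H o}s-R\'{e}nyi connectivity bounds (via $p \geq c_2 \log(n)/(\delta^5 n)$), and the density concentration above, this bad event has probability $O(n^{-c})$ for arbitrarily large $c$. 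Choosing the constants generously and absorbing the polynomial-in-$h$ slack produces the residual $c_9 L_2^2/(n^5 h^4)$ term stated. The main obstacle is the correlation between $M$ (determined by $Z$ given $\alpha^n$) and the counts $N(x)$ (determined by $\alpha^n$ alone): they are not independent, so a direct product-of-expectations bound fails. Conditioning on $\alpha^n$ coupled with the good/bad-event decomposition is what bypasses this coupling.
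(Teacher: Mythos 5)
Your proposal is correct and follows the same high-level skeleton as the paper's proof: split via $(a-b)^2 \leq 2(a-c)^2 + 2(c-b)^2$ around the oracle estimator, use the Lipschitz continuity of $K$ together with a ``not-too-many-points-near-$x$'' bound driven by $\pdf \leq B$, and clean up the residual bad event with crude pointwise estimates. The one genuine methodological difference is in how the cardinality bound is handled. You bound $\max_x N(x)$ \emph{uniformly over the continuum} of $x$, which does work but forces you into a discretization-and-union-bound argument (that you only gesture at); once you need $\|\ests - \hat{P}_{\alpha^n}^*\|_\infty$, there is no way to avoid this uniformity. The paper instead applies Tonelli's theorem to swap $\E$ and $\int_{-1}^2$, and then for each \emph{fixed} $x$ separately bounds $\E[|S_x|^2 M^2]$ via the law of total expectation on the per-$x$ event $A_n = \{|S_x| \leq c_7 B h n\}$; a pointwise Hoeffding bound per $x$ suffices because the final step just integrates a bound that is uniform by construction. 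The paper's route is technically cleaner for exactly this reason. Your residual term comes out as $O(L_2^2/(n^5 h^2))$ rather than the stated $O(L_2^2/(n^5 h^4))$, but since $h\leq 1$ this is strictly tighter and still implies the Proposition, as you correctly observe.

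Two smaller quibbles. First, the conditioning argument for $\E[M^2 \I_{\mathcal{E}\cap\mathcal{E}_1}] \leq \E[M^2]$ is a red herring: this inequality holds trivially because $\I \leq 1$ and $M^2 \geq 0$, with no independence or conditioning needed. The ``correlation between $M$ and $N(x)$'' you flag is real, but the good/bad event split already dissolves it without conditioning on $\alpha^n$. Second, your citation of \thmref{Thm: Minimax Relative l^infty-Risk} to control $\P(M > h)$ is imprecise: that theorem gives an expected relative risk bound, not a tail bound. The correct source is \lemref{Lemma: Relative ell^infty-Loss Bound} (or its consequence, \clmref{Claim: Bound on Merit Parameter Estimation}), which gives the high-probability statement. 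Relatedly, the bad-event probability inherits the fixed exponent $n^{-5}$ from that lemma, so the claim of ``$O(n^{-c})$ for any prescribed $c$'' overstates what the cited ingredients deliver; $n^{-5}$ is what you actually have, and fortunately that is all the Proposition needs.
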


The proof of \propref{Prop: MSE Decomposition} can be found in Appendix \ref{App: MSE decomposition}. This result decomposes the MSE between $\ests$ (with general $h$) and $\pdf$ into two dominant terms: the MSE of estimating $\pdf$ 
using \eqref{Eq: True PR estimator}, which can be analyzed using a standard bias-variance tradeoff 
(see \lemref{Lemma: Bias-Variance Tradeoff} in Appendix \ref{Proof of MSE Upper Bound} \cite{Tsybakov2009,Wasserman2019}), 
and the squared $\ell^{\infty}$-risk of estimating $\alpha_1,\dots,\alpha_n$ using \eqref{Eq: Estimates of merit parameters}. 
To analyze the second term, we use a relative $\ell^{\infty}$-norm bound from \cite[Theorem 3.1]{Chenetal2019} 
(see \lemref{Lemma: Relative ell^infty-Loss Bound} in Appendix \ref{App: MSE decomposition}); the same bound was also used to obtain the upper bound in \thmref{Thm: Minimax Relative l^infty-Risk}.

\section{Experiments}
\label{Experiments}

We apply our method to several real-world datasets to exhibit its utility. Specifically, Algorithm \ref{Algorithm: Density Estimation} produces estimates of skill distributions. In order to compare skill distributions across different scenarios as well as capture their essence, it is desirable to compute a single \emph{score} that holistically measures the levels of skill in a tournament. 

\noindent{\bf Skill score of $\pdf$.} Intuitively, a delta measure (i.e., all skills are equal) represents a setting where all game outcomes are completely random; there is no role of skill. On the other hand, the uniform PDF $\unif([0,1])$ (assuming $\delta$ is very small) typifies a setting of maximal skill since players are endowed with the broadest variety of skill parameters. We refer readers to \cite{Gettyetal2018} for a related discussion. Propelled by this intuition, 
any distance between $\pdf$ and $\unif([0,1])$ serves as a valid score that is larger when luck plays a greater role in determining the outcomes of games. Therefore, we propose to use the negative \emph{differential entropy} of $\pdf$ as a score to measure skill in a tournament \cite{CoverThomas2006,PolyanskiyWu2017Notes}:
\begin{equation}
\label{Eq: Entropy Score}
- h(\pdf) \triangleq \int_{\R}{\pdf(t) \log(\pdf(t)) \diff{t}} = D(\pdf||\unif{([0,1])}) \, .
\end{equation}
This is a well-defined and finite quantity that is equal to the \emph{Kullback-Leibler (KL) divergence} between $\pdf$ and $\unif{([0,1])}$ (cf. \eqref{Eq: KL definition} in Appendix \ref{App: Covering numbers}). To estimate $-h(\pdf)$ from data, we will use the simple \emph{resubstitution estimator} based on 
$\ests$ and $\hat{\alpha}_1,\dots,\hat{\alpha}_n$ \cite{AhmadLin1976,Beirlantetal1997}.

\noindent{\bf Algorithmic choices.} In all our simulations, we assume that $\eta = 1$, use the Epanechnikov kernel $K_{\mathsf{E}}$, and set the bandwidth to $h = 0.3 n^{-1/4}$; 
indeed, $h$ is typically chosen using ad hoc data-driven techniques in practice \cite[Section 1.4]{Tsybakov2009}. 

\noindent{\bf Data processing.} The data is available in the form of wins, losses, and draws in tournaments. For simplicity, we ignore draws and only utilize wins and losses. To allow for `regularization' in the small data regime,  we apply Laplace smoothing so that between any pair of players, each observed game 
is counted as $20$ games, and $1$ additional win is added for each player; this effectively means that $p = 1$.

\begin{figure}[t]
\begin{subfigure}{.33\linewidth}
\centering
  
\includegraphics[trim = 40mm 85mm 40mm 85mm, width=\linewidth]{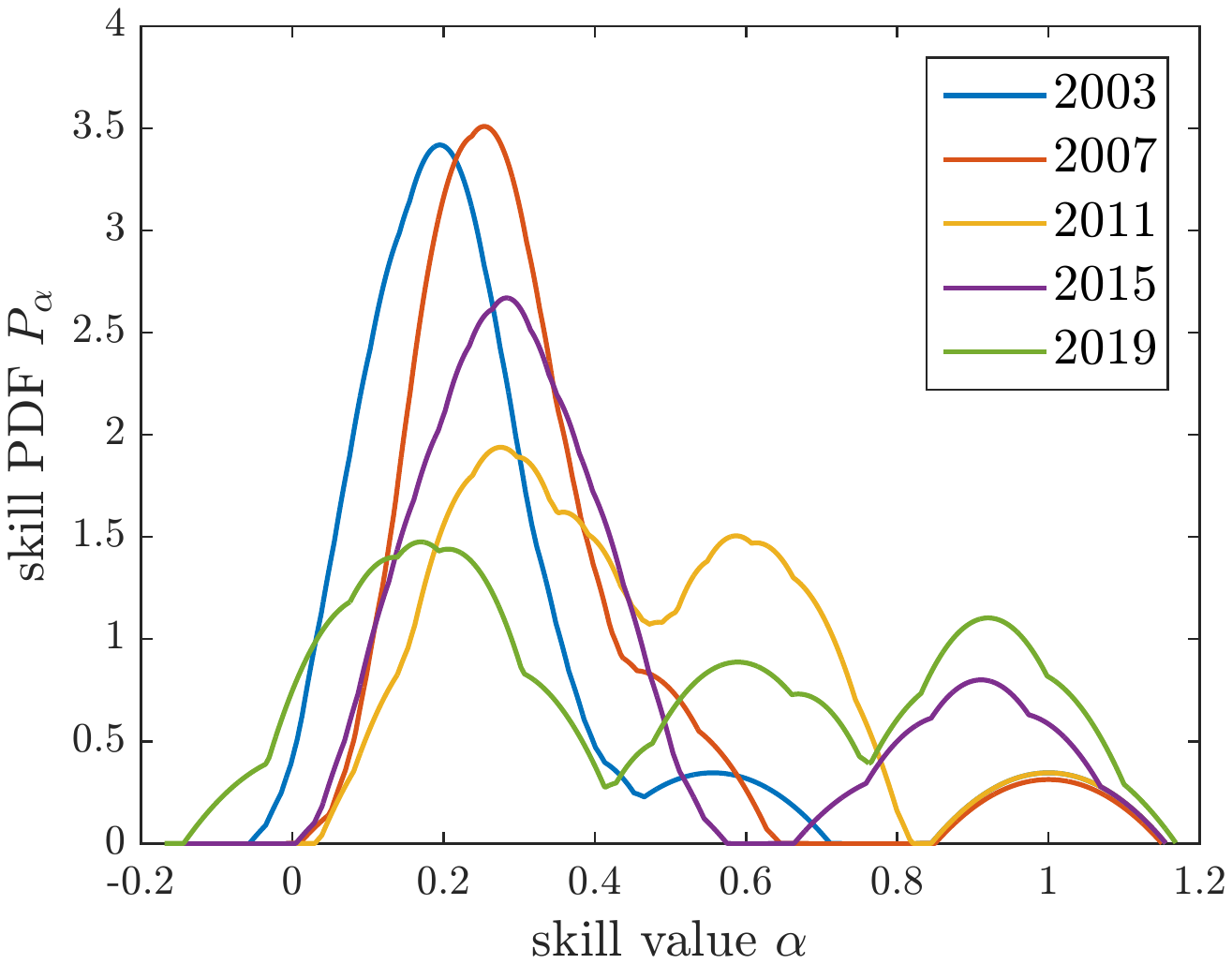}

\caption{ICC Cricket World Cups}  
\label{Fig: ICC Densities}
\end{subfigure}
\begin{subfigure}{.33\linewidth}
\centering
  
\includegraphics[trim = 40mm 85mm 40mm 85mm, width=\linewidth]{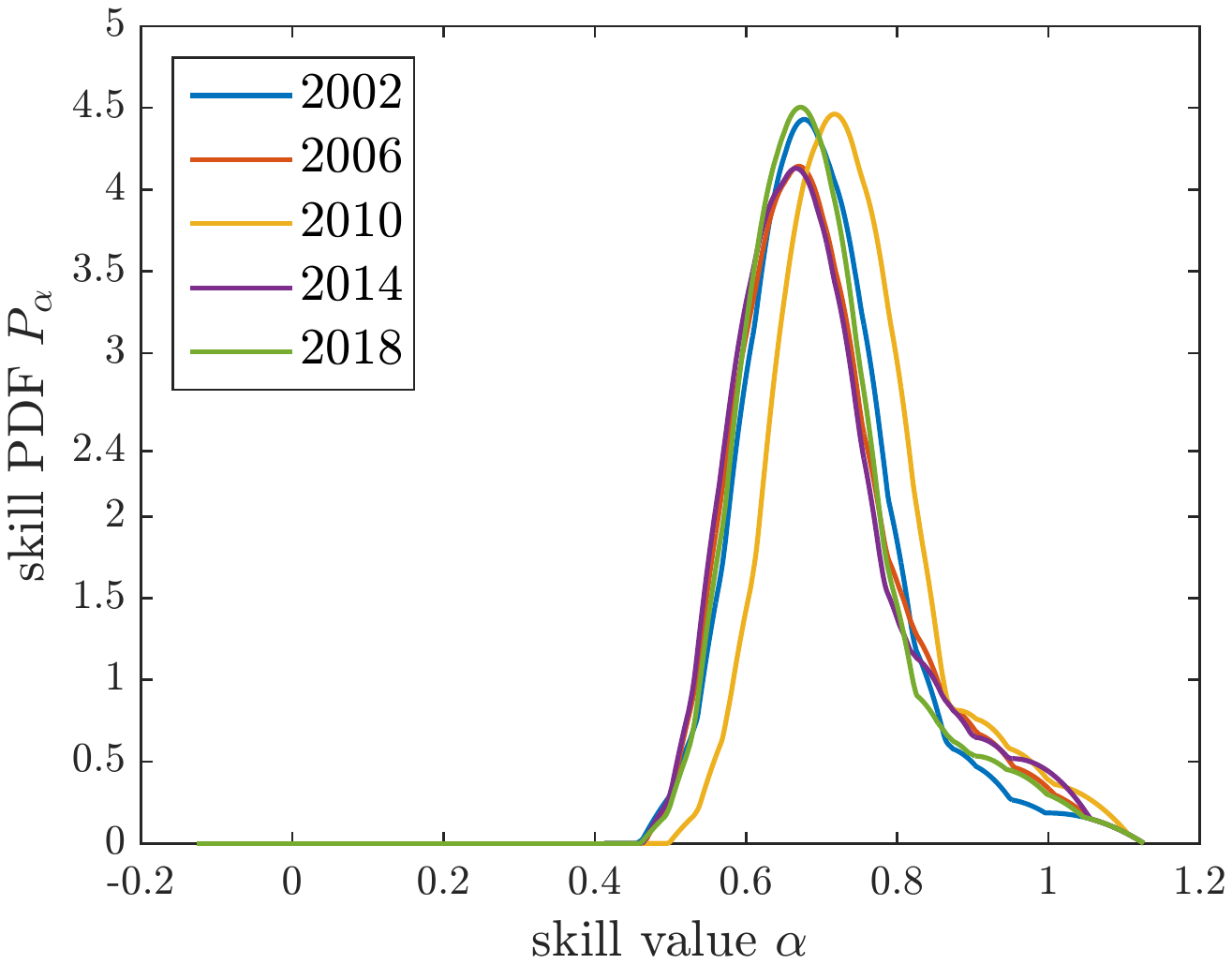}

\caption{FIFA Soccer World Cups}  
\label{Fig: FIFA Densities}
\end{subfigure}
\begin{subfigure}{.33\linewidth}
\centering
  
\includegraphics[trim = 40mm 85mm 40mm 85mm, width=\linewidth]{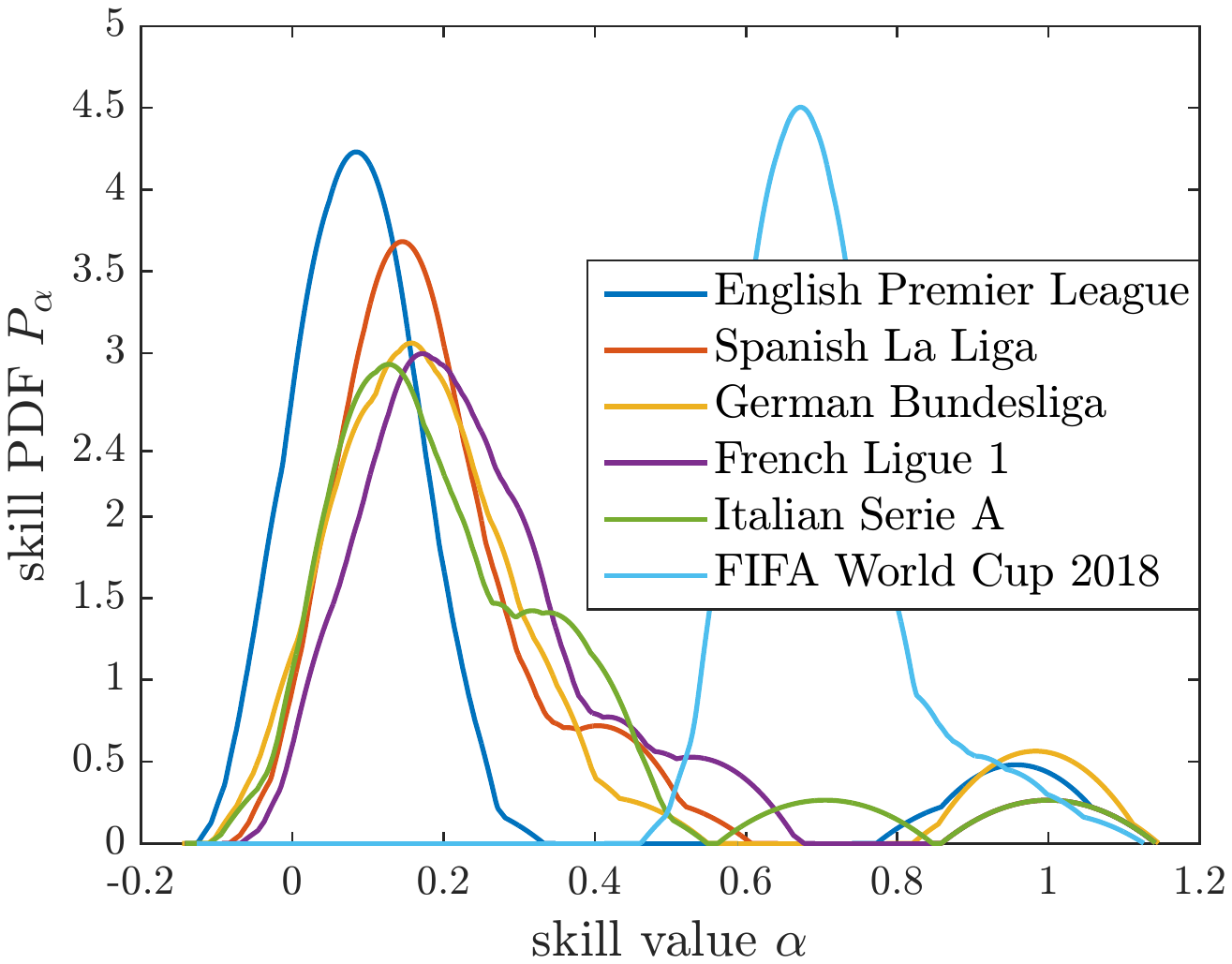}

\caption{Soccer leagues in 2018-2019}  
\label{Fig: EL Densities}
\end{subfigure}
\newline
\begin{subfigure}{.33\linewidth}
\centering
  
\includegraphics[trim = 40mm 85mm 40mm 85mm, width=\linewidth]{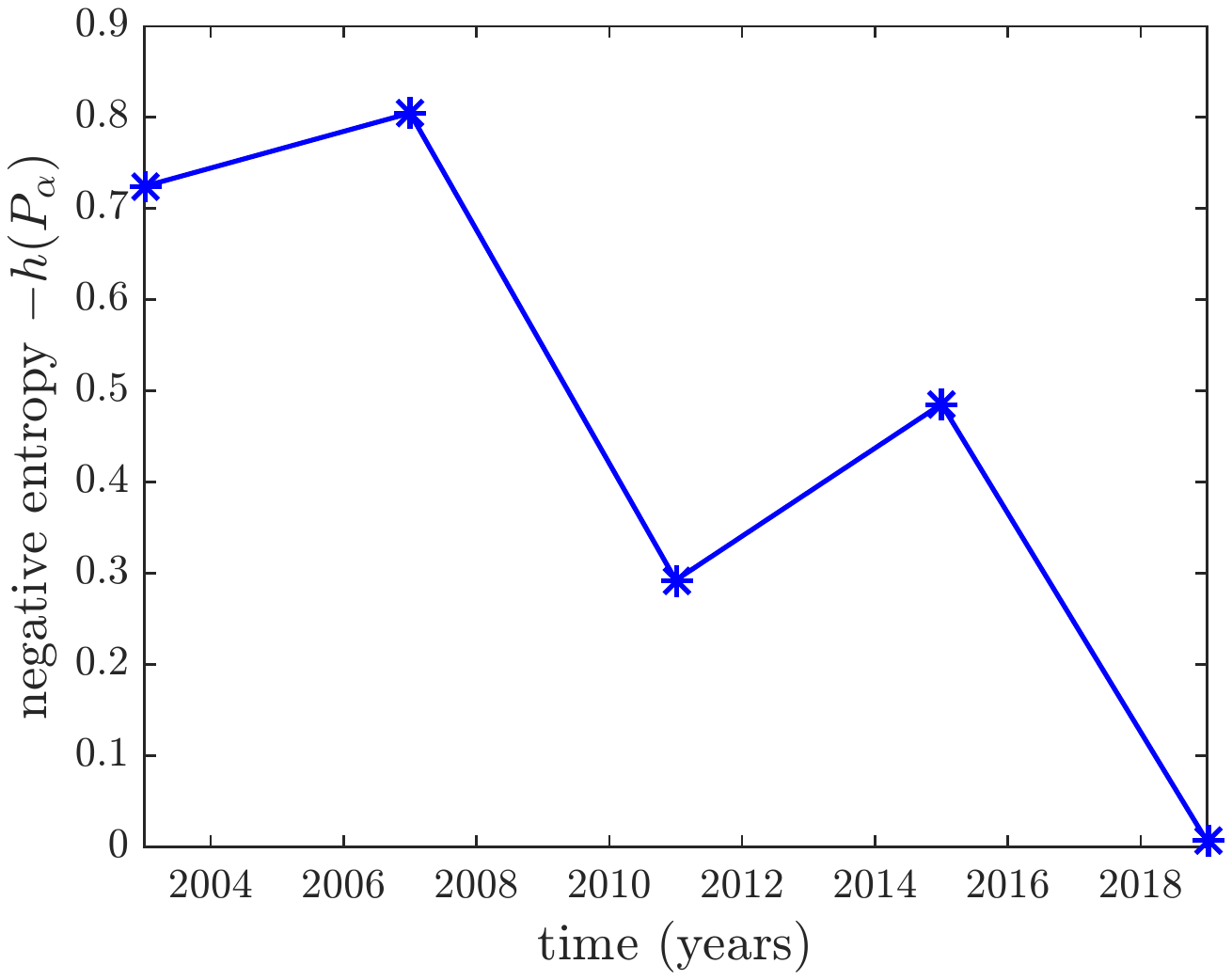}

\caption{ICC Cricket World Cups}  
\label{Fig: ICC Entropies}
\end{subfigure}
\begin{subfigure}{.33\linewidth}
\centering
  
\includegraphics[trim = 40mm 85mm 40mm 85mm, width=\linewidth]{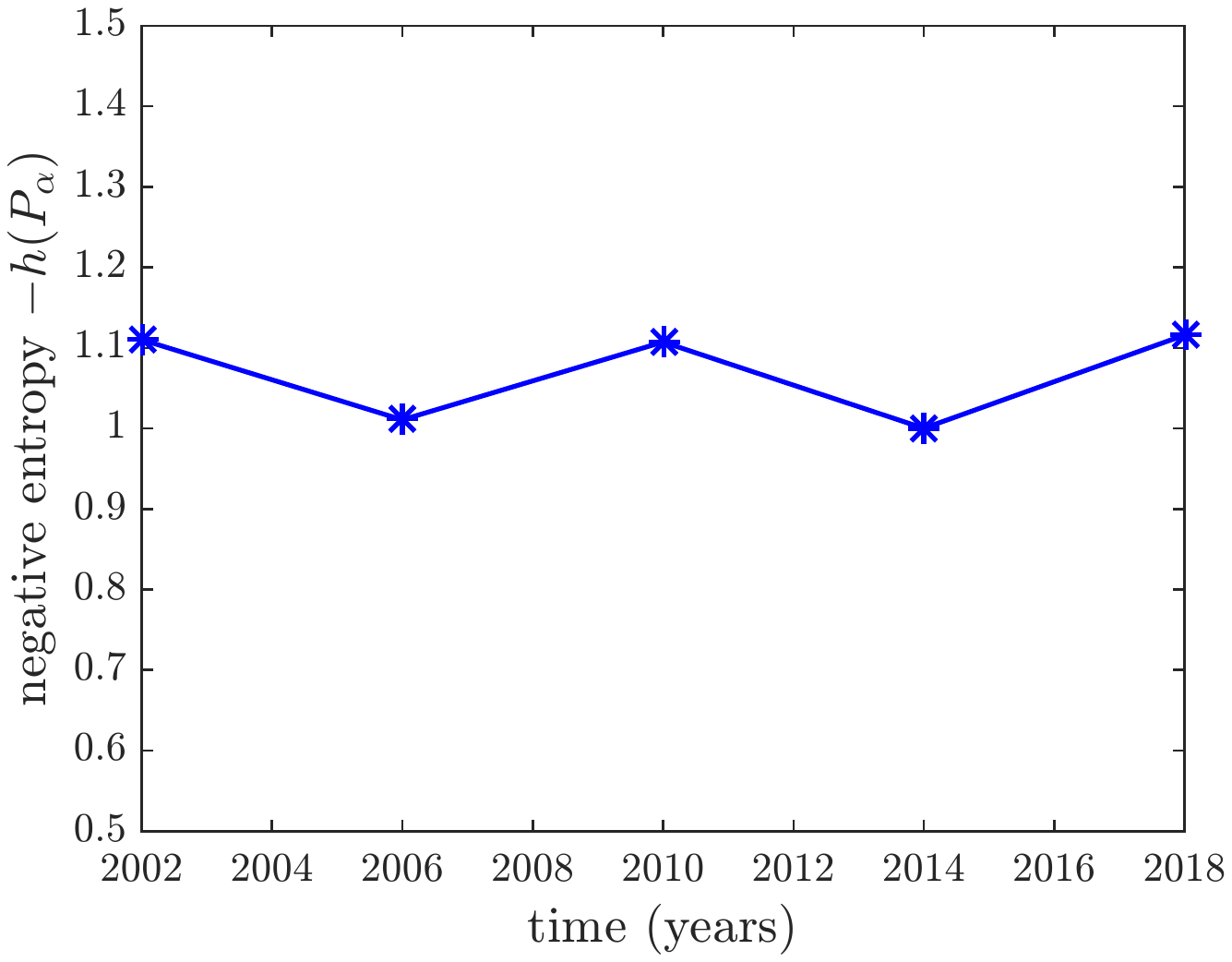}

\caption{FIFA Soccer World Cups}  
\label{Fig: FIFA Entropies}
\end{subfigure}
\begin{subfigure}{.33\linewidth}
\centering
  
\includegraphics[trim = 40mm 85mm 40mm 85mm, width=\linewidth]{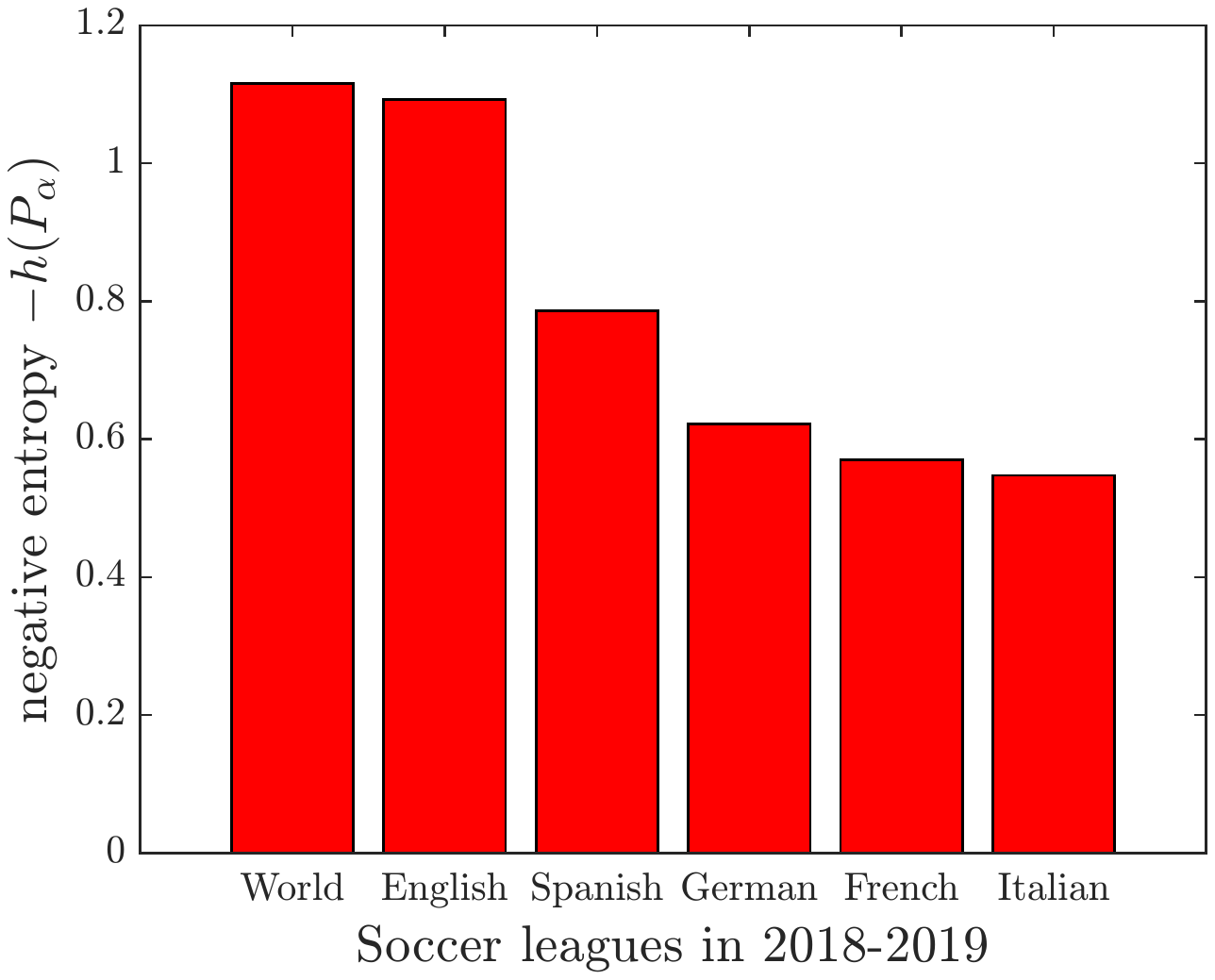}

\caption{Soccer leagues in 2018-2019}  
\label{Fig: EL Entropies}
\end{subfigure}
\newline
\begin{subfigure}{.33\linewidth}
\centering
  
\includegraphics[trim = 40mm 85mm 40mm 85mm, width=\linewidth]{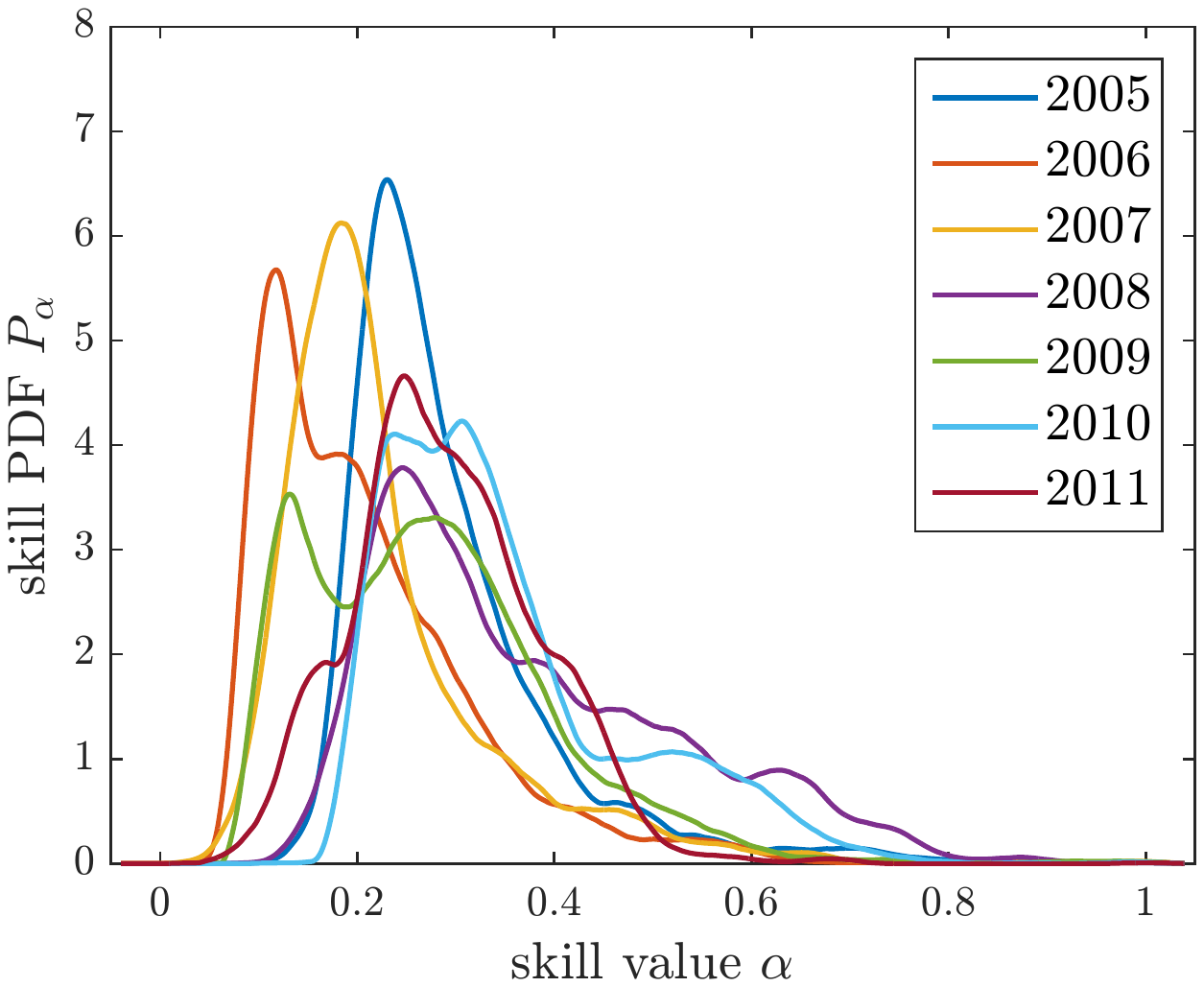}

\caption{US mutual funds (2005-2011)}  
\label{Fig: MF Densities 1}
\end{subfigure}
\begin{subfigure}{.33\linewidth}
\centering
  
\includegraphics[trim = 40mm 85mm 40mm 85mm, width=\linewidth]{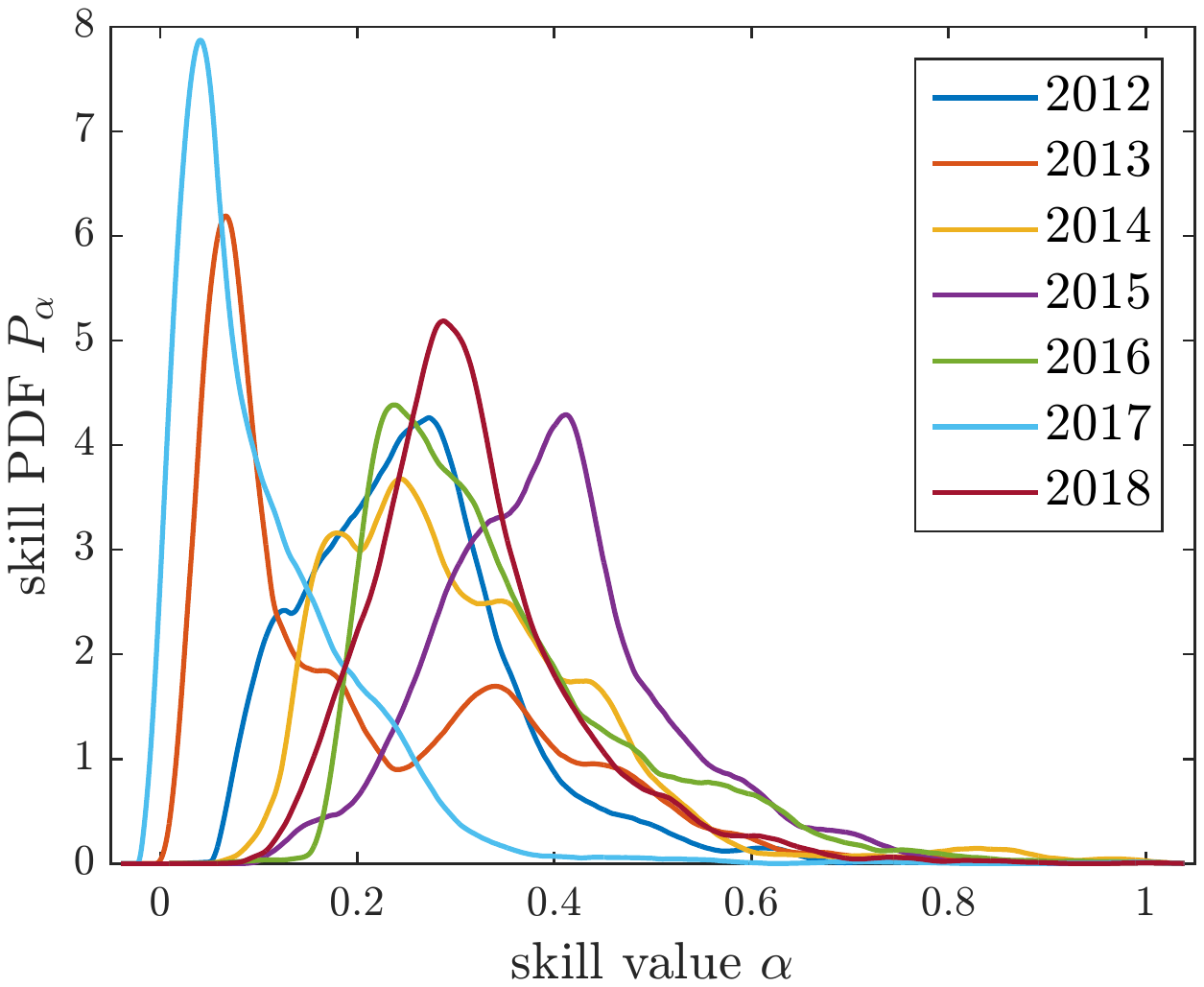}

\caption{US mutual funds (2012-2018)}  
\label{Fig: MF Densities 2}
\end{subfigure}
\begin{subfigure}{.33\linewidth}
\centering
  
\includegraphics[trim = 40mm 85mm 40mm 85mm, width=\linewidth]{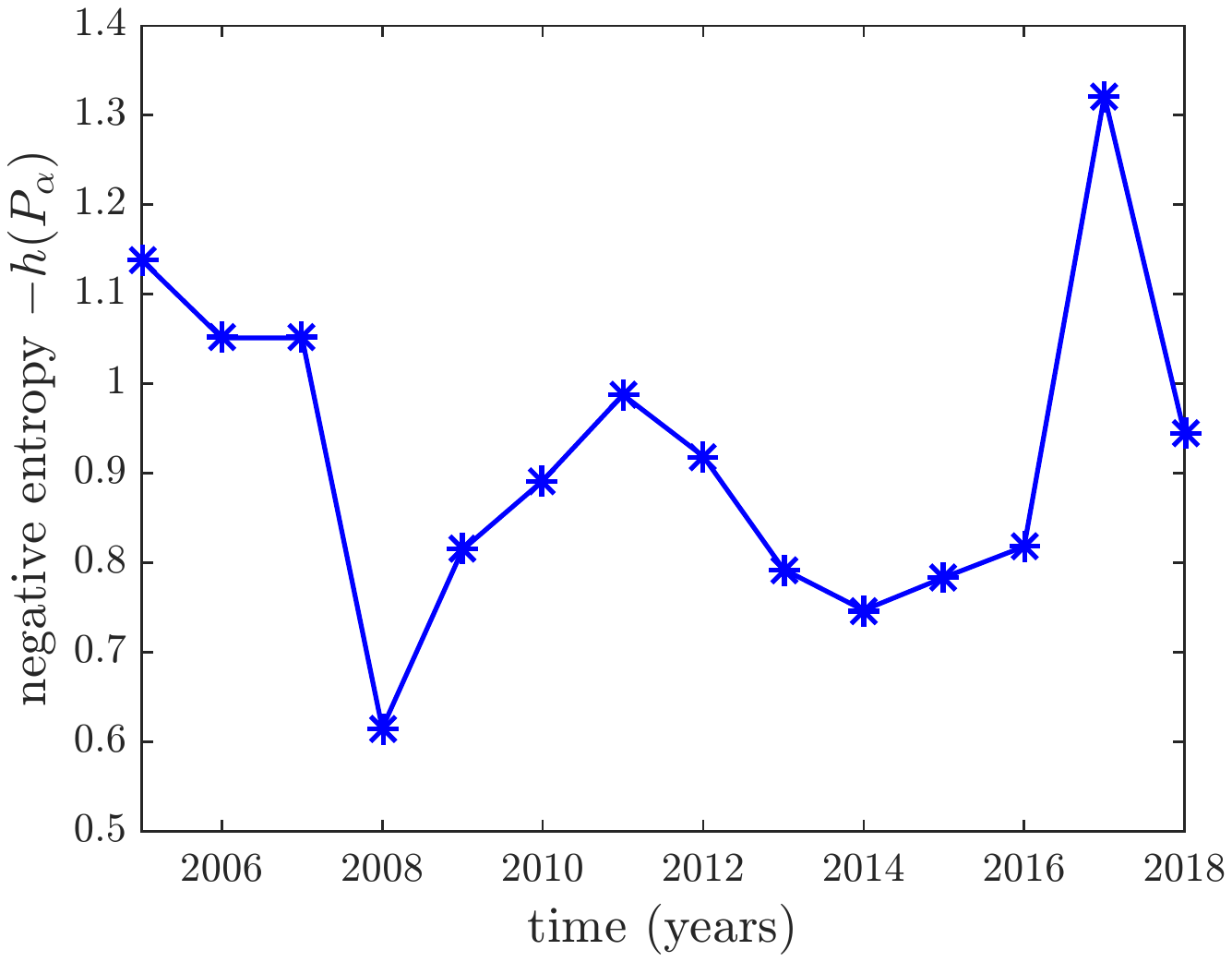}

\caption{US mutual funds (2005-2018)}  
\label{Fig: MF Entropies}
\end{subfigure}

\caption{Plots \ref{Fig: ICC Densities}, \ref{Fig: FIFA Densities}, \ref{Fig: EL Densities}, \ref{Fig: MF Densities 1}, and \ref{Fig: MF Densities 2} illustrate the \emph{estimated PDFs} of skill levels of cricket world cups, soccer world cups, European soccer leagues, and US mutual funds, respectively. Plots \ref{Fig: ICC Entropies}, \ref{Fig: FIFA Entropies}, \ref{Fig: EL Entropies}, and \ref{Fig: MF Entropies} illustrate the corresponding \emph{estimated negative differential entropies} of these PDFs.}
\label{Fig: Plots}
\end{figure}

\noindent\textbf{Cricket world cups.} We utilize publicly available data from Wikipedia for international (ICC) Cricket World Cups held in 2003, 2007, 2011, 2015, and 2019. Each world cup has between $n = 10$ to $n = 16$
teams, with each pair of teams playing $0$, $1$, or (rarely) $2$ matches against one another. We learn the skill distributions for each world cup separately as portrayed in \figref{Fig: ICC Densities}. 
The corresponding negative entropies are reported in \figref{Fig: ICC Entropies}. As can be seen, there is a
clear decrease in negative entropy reaching close to $0$ in 2019. This elegantly quantifies sports intuition about the 2019 World Cup having some of the most thrilling matches in the modern history of cricket \cite{Smythetal2019,Bull2019}.

\noindent\textbf{Soccer world cups.} Again, we use publicly available data from Wikipedia for FIFA Soccer World Cups in 2002, 2006, 2010, 2014, and 2018. Each world cup has $n = 32$ teams, with each pair of teams playing $0$, $1$, or (rarely) $2$ matches. Figures \ref{Fig: FIFA Densities}  and \ref{Fig: FIFA Entropies} depict the skill distributions and associated negative
entropies of soccer world cups over the years. It is evident that the negative entropies have remained roughly constant and away from $0$. This suggests that game outcomes in world cups have remained unpredictable over the years\textemdash very consistent with soccer fan experience.

\noindent\textbf{European soccer leagues.} Yet again,  we use publicly available data from Wikipedia
for the English Premier League (EPL), Spanish La Liga, German Bundesliga, French Ligue 1, and 
Italian Serie A in the 2018-2019 season. Each league has between $n = 18$ to $n = 20$ teams, with every pair of teams playing $0$, $1$, or $2$ times against each other (excluding ties). \figref{Fig: EL Densities} 
illustrates the skill PDFs of these leagues and the 2018 FIFA World Cup. As expected, we observe 
that the skill levels of world cup teams are concentrated in a smaller interval closer to $1$. 
\figref{Fig: EL Entropies} sorts the negative entropies of the skill PDFs and recovers an intuitively 
sound ranking of these leagues. Indeed, many fans believe that EPL has better ``quality'' teams 
than other leagues \cite{McIntyre2019,Spacey2020}, and this observation is confirmed by \figref{Fig: EL Densities}. \figref{Fig: EL Densities} reveals that EPL has higher negative entropy 
than other leagues since its skill PDF has the tallest and narrowest peak, presumably because 
EPL only contains high quality teams with little variation among them. This example shows how our algorithm can be used to compare 
different leagues within the same sport (or even different sports).

\noindent\textbf{US mutual funds.} Our final experiments are calculated based on data obtained through
\cite{WRDS} from \emph{CRSP US Survivor-Bias-Free Mutual Funds Database} that is made
available by the Center for Research in Security Prices (CRSP), The University of Chicago Booth School of Business. 
We consider $n = 3260$ mutual funds in this dataset that have monthly net asset values recorded from January 2005 to December 2018. These values are pre-processed by computing monthly returns (i.e., change in net asset value normalized by the previous month's value) for all funds, which provide a fair measure of monthly performance. Then, we perceive each year as a tournament where each fund plays $k = 12$ monthly games against every other fund, and one fund beats another in a month if it has a larger monthly return. Figures \ref{Fig: MF Densities 1} and \ref{Fig: MF Densities 2} depict the skill PDFs obtained by applying our algorithm to the win-loss data (produced by the method above) every year, and \figref{Fig: MF Entropies} presents the associated negative entropies. Clearly, 2017 and the Great Recession in 2008 were the times where negative entropy was maximized and minimized, respectively, in \figref{Fig: MF Entropies}. Figures \ref{Fig: MF Densities 1} and \ref{Fig: MF Densities 2} unveil that the skill PDF is much more spread out in 2008 compared to 2017, which contains a large peak near $0$. So, as expected, far fewer lowly skilled funds existed during the economic recession in 2008. These observations elucidate the utility of our algorithm in identifying and explaining trends in other kinds of data, such as financial data.

\section{Conclusion}

In this paper, we proposed an efficient and minimax near-optimal algorithm to learn skill distributions from win-loss data of tournaments. Then, using negative entropy of a learnt distribution as a skill score, we demonstrated the utility of our algorithm in rigorously discerning trends in sports and other data. In closing, we suggest that a worthwhile future direction would be to develop minimax optimal algorithms that directly estimate entropy, or other meaningful skill scores, from tournament data.

\section*{Broader Impact}

The analysis of our algorithm, which forms the main contribution of this work, is theoretical in nature, and therefore, does not have any foreseeable societal consequences. On other other hand, applications of our algorithm to real-world settings could have potential societal impacts. As outlined at the outset of this paper, our algorithm provides a data-driven approach to address questions about perceived qualities of sporting events or other competitive enterprises, e.g., financial markets. Hence, a potential positive impact of our work is that subjective beliefs of stakeholders regarding the distributions of relative skills in competitive events can be moderated by a rigorous statistical method. In particular, our method could assist sports teams, sports tournament organizers, or financial firms to corroborate existing trends in the skill levels of players, debunk erroneous myths, or even unveil entirely new trends based on available data. However, our work may also have negative consequences if utilized without paying heed to its limitations. Recall that Step 1 of Algorithm \ref{Algorithm: Density Estimation} estimates BTL skill parameters of agents that participate in a tournament. Since the BTL model is a well-known approach for ranking agents \cite{BradleyTerry1952,Luce1959}, it should be used with caution, as with any method that discriminates among agents. Indeed, the BTL model only takes into account wins or losses of pairwise games between agents, but does not consider the broader circumstances surrounding these outcomes. For example, in the context of soccer, the BTL model does not consider the goal difference in a game to gauge how significant a win really is, or take into account the injuries sustained by players. Yet, rankings of teams or players may be used by team managements to make important decisions such as assigning remunerations. Thus, users of algorithms such as ours must refrain from solely using rankings or skill distributions to make decisions that may adversely affect individuals.

\begin{ack}
This work was supported in part by the Vannevar Bush Fellowship, in part by the ARO Grant W911NF-18-S-0001, in part by the NSF CIMS 1634259, in part by the NSF CNS 1523546, and in part by the MIT-KACST project.
\end{ack}

\appendix

\section{Proofs of propositions}
\label{Proofs of propositions}

In this appendix, we first present an auxiliary proposition mentioned in the main discussion, and then prove Propositions  \ref{Prop: Upper Bound on MI} and \ref{Prop: MSE Decomposition}.

\subsection{Auxiliary proposition}
\label{Proof of Prop Estimator Stochastic Matrix}

The ensuing proposition shows that the ``empirical stochastic matrix'' $S$ defined in \eqref{Eq: Stochastic Matrix Estimator} (and used in Algorithm \ref{Algorithm: Density Estimation}) is indeed row stochastic with high probability. 

\begin{proposition}[Empirical Stochastic Matrix]
\label{Prop: Estimator Stochastic Matrix}
If $n \geq 2$ and $p \geq 16(c_1 + 1) \log(n) / (3 n)$ for any fixed (universal) constant $c_1 > 0$, then we have:
$$ \P\!\left(S \in \S_{n \times n}\right) \geq 1 - \frac{1}{n^{c_1}} \, . $$
\end{proposition}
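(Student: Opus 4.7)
The strategy is to reduce the problem to a uniform bound on the vertex degrees of $\G(n,p)$ and then apply a standard Chernoff plus union-bound argument. Observe first that, by construction, the rows of $S$ automatically sum to $1$: since $Z(i,i)=0$, we have
\[
\sum_{j=1}^{n} S(i,j) = \left(1 - \frac{1}{2np}\sum_{r=1}^{n} Z(i,r)\right) + \frac{1}{2np}\sum_{j \neq i} Z(i,j) = 1 .
\]
The off-diagonal entries $S(i,j) = \frac{1}{2np} Z(i,j)$ are non-negative since $Z(i,j) \in [0,1]$. Therefore $S \in \S_{n\times n}$ if and only if every diagonal entry is non-negative, i.e.\ iff
\[
\sum_{r=1}^{n} Z(i,r) \leq 2np \quad\text{for all } i \in [n].
\]

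\textbf{Reduction to degrees.} Using $Z_m(i,r) \in \{0,1\}$, so that $\tfrac{1}{k}\sum_m Z_m(i,r) \in [0,1]$, we have the deterministic bound
\[
Z(i,r) \leq \I\{\{i,r\} \in \G(n,p)\}, \qquad r \neq i,
\]
which yields $\sum_r Z(i,r) \leq d_i$, where $d_i \triangleq \sum_{r \neq i} \I\{\{i,r\} \in \G(n,p)\}$ is the degree of vertex $i$ in the Erd\H{o}s--R\'enyi graph $\G(n,p)$. Each $d_i$ is $\mathrm{Binomial}(n-1,p)$ with mean $\mu = (n-1)p$, and the events $\{d_i \leq 2np\}$ are what we need to control.

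\textbf{Tail bound and union bound.} I would invoke the multiplicative Chernoff bound for sums of independent Bernoulli variables: for $t \geq \mu$,
\[
\P(d_i \geq t) \leq \exp\!\bigl(-\mu\, h(t/\mu - 1)\bigr), \qquad h(x) \triangleq (1+x)\log(1+x) - x.
\]
Taking $t = 2np$ gives $t/\mu = 2n/(n-1) \geq 2$, so $h(t/\mu - 1) \geq h(1) = 2\log 2 - 1 > \tfrac{1}{3}$. Combined with the hypothesis $p \geq 16(c_1+1)\log(n)/(3n)$ and the elementary bound $(n-1)p \geq np/2$ for $n \geq 2$, this yields
\[
\P(d_i \geq 2np) \leq \exp\!\left(-\tfrac{1}{3} \cdot \tfrac{np}{2}\right) \leq \exp\!\bigl(-(c_1+1)\log(n)\bigr) = n^{-(c_1+1)},
\]
after substituting the lower bound on $p$. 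A union bound over $i \in [n]$ then gives
\[
\P(S \notin \S_{n \times n}) \leq \sum_{i=1}^{n} \P(d_i \geq 2np) \leq n \cdot n^{-(c_1+1)} = n^{-c_1},
\]
which is exactly the conclusion.

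\textbf{Potential obstacles.} There is essentially no serious obstacle here; the proof is a short calibration exercise. The only subtle point is ensuring that the constant $16/3$ in the hypothesis on $p$ is indeed large enough to absorb (i) the loss from passing from $(n-1)p$ to $np/2$, (ii) the constant $h(1) = 2\log 2 - 1$ from Chernoff, and (iii) the extra factor of $n$ consumed by the union bound. The choice $16(c_1+1)/3$ is comfortably chosen with these slack factors in mind, and any reasonably tight calculation along the lines above closes the gap.
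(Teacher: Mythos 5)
Your reduction to degree concentration is exactly the paper's reduction (the paper calls the degree $M_m$, you call it $d_i$, and both of you identify the event $\{d_i \le 2np\}$ as sufficient for $S(i,i) \geq 0$). The difference is the concentration inequality and, crucially, the numerical calibration — and your version as written does \emph{not} close.

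You bound the Chernoff exponent by $h(1) = 2\log 2 - 1$ and then round this down to $1/3$, and separately pass from $(n-1)p$ to $np/2$. Multiplying through, your tail bound is
\[
\P(d_i \geq 2np) \;\leq\; \exp\!\left(-\tfrac{1}{3}\cdot\tfrac{np}{2}\right) \;=\; \exp\!\left(-\tfrac{np}{6}\right),
\]
and to conclude $\exp(-np/6) \leq n^{-(c_1+1)}$ you need $p \geq 6(c_1+1)\log(n)/n$. But the hypothesis only provides $p \geq \tfrac{16}{3}(c_1+1)\log(n)/n \approx 5.33(c_1+1)\log(n)/n$, which is too small. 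Concretely, after the union bound you get $\P(S \notin \S_{n\times n}) \leq n^{1 - 8(c_1+1)/9}$, and $1 - \tfrac{8}{9}(c_1+1) \leq -c_1$ would require $9 \leq 8$. So the ``Potential obstacles'' remark that the constant $16/3$ was ``comfortably chosen'' is not borne out by the calculation you wrote.

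The gap is small and is entirely due to the lossy estimate $h(1) > 1/3$. If you instead keep $h(1) = 2\log 2 - 1 \approx 0.386$ and avoid replacing $(n-1)p$ by $np/2$ until the last step, the product $\tfrac{16}{3}\cdot\tfrac{n-1}{n}\cdot(2\log 2 - 1) \geq \tfrac{16}{3}\cdot\tfrac{1}{2}\cdot 0.386 \approx 1.03 > 1$ does (barely) clear the bar for $n \geq 2$. The paper sidesteps this tightrope by using Bernstein's inequality (\lemref{Lemma: Bernstein's Inequality}) with $\varepsilon = p$, which gives the cleaner exponent $\exp(-3(n-1)p/8)$, and then $\tfrac{3}{8}\cdot\tfrac{n-1}{n}\cdot\tfrac{16}{3} = \tfrac{2(n-1)}{n} \geq 1$ for $n \geq 2$, so the constants calibrate exactly. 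Either fix works; as written, yours does not.
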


\begin{proof}
First, define the random variables:
$$ \forall m \in [n], \enspace M_m \triangleq \sum_{r \in [n]\backslash\!\{m\}}{\I\!\left\{\{m,r\} \in \G(n,p)\right\}} $$
which count the numbers of outcomes of games observed for each player. Furthermore, define the event:
$$ A \triangleq \left\{\forall m \in [n], \, M_m \leq 2 (n-1) p \right\} . $$
Then, we can verify that $A \subseteq \{S \in \S_{n \times n}\}$. Indeed, if $A$ occurs, then we have:
\begin{align*}
& \forall m \in [n], \enspace \frac{1}{2 n p} \sum_{r \in [n]\backslash\!\{m\}}{\I\!\left\{\{m,r\} \in \G(n,p)\right\}} \leq \frac{M_m}{2 (n-1) p} \leq 1 \\
\Rightarrow \quad & \forall m \in [n], \enspace \frac{1}{2 n p} \sum_{r \in [n]\backslash\!\{m\}}{Z(m,r)} \leq 1 \\
\Leftrightarrow \quad & \forall m \in [n], \enspace S(m,m) \geq 0 \\
\Leftrightarrow \quad & S \in \S_{n \times n}
\end{align*}
where the second line follows from \eqref{Eq: Observation matrix}, and the third and fourth lines follow from \eqref{Eq: Stochastic Matrix Estimator}. Hence, it suffices to prove that $\P(A^{c}) \leq n^{-c_1}$. To this end, notice that:
\begin{align*}
\P\!\left(A^{c}\right) & = \P\!\left(\exists \, m \in [n], \, M_m > 2 (n-1)p \right) \\
& \leq n \, \P\!\left(\frac{1}{n-1} \sum_{r = 2}^{n}{\I\!\left\{\{1,r\} \in \G(n,p)\right\} - p} \geq p\right) \\
& \leq n \exp\!\left(-\frac{3 (n-1) p}{8}\right) \\
& \leq n \exp\!\left(-\frac{2 (c_1 + 1) (n-1) \log(n)}{n}\right) \\
& \leq \frac{1}{n^{c_1}}
\end{align*} 
where the second inequality follows from the union bound and the fact that $\{M_m : m \in [n]\}$ are identically distributed random variables, the third inequality follows from \lemref{Lemma: Bernstein's Inequality} in Appendix \ref{App: Concentration of Measure Inequalities} since each indicator random variable $\I\{\{1,r\} \in \G(n,p)\}$ has mean $p$, variance $p(1-p) \leq p$, and satisfies $\big|\I\{\{1,r\} \in \G(n,p)\} - p\big| \leq 1$, the fourth inequality follows from our assumption that $p \geq 16(c_1 + 1) \log(n) / (3 n)$, and the fifth inequality holds because $n \geq 2$. This completes the proof.
\end{proof}

\subsection{Mutual information, covering numbers, and the proof of \propref{Prop: Upper Bound on MI}}
\label{App: Covering numbers}

The goal of this subsection is to establish the upper bound on $I(\pi;Z)$ in \propref{Prop: Upper Bound on MI}. To this end, we commence by presenting some basic definitions and properties of mutual information from information theory. Recall that for any two probability measures $\mu$ and $\nu$ over the same measurable space $(\Omega,\F)$, the KL divergence (or relative entropy) of $\nu$ from $\mu$ is defined as (cf. \cite[Definition 1.4]{PolyanskiyWu2017Notes}):
\begin{equation}
\label{Eq: KL definition}
D(\mu||\nu) \triangleq 
\begin{cases}
\displaystyle{\int_{\Omega}{\log\!\left(\frac{\diff \mu}{\diff \nu}\right) \diff\mu}} \, , & \mu \text{ is absolutely continuous with respect to } \nu \\
+\infty \, , & \text{otherwise}
\end{cases}
\end{equation}
where $\frac{\diff \mu}{\diff \nu}$ is the Radon-Nikodym derivative (or density) of $\mu$ with respect to $\nu$. Using \eqref{Eq: KL definition}, for any pair of jointly distributed random variables $X$ and $Y$, we define the \emph{mutual information} between $X$ and $Y$ as (cf. \cite[Definition 2.3]{PolyanskiyWu2017Notes}):
\begin{equation}
\label{Eq: MI Def}
I(X;Y) \triangleq D(P_{X,Y}||P_X \otimes P_Y)
\end{equation}
where $P_{X,Y}$ denotes the joint probability law of $X$ and $Y$, and $P_X \otimes P_Y$ denotes the product measure of the corresponding marginal probability laws of $X$ and $Y$, respectively. Note that in \eqref{Eq: MI Def}, the random variables $X$ or $Y$ can be compound variables. So, for example, if $Y = (Y_1,Y_2)$, i.e., $Y$ is actually a pair of random variables $Y_1,Y_2$ (where $X,Y_1,Y_2$ are jointly distributed), then we can write $I(X;Y) = I(X;Y_1,Y_2)$. Furthermore, given three jointly distributed random variables $X,Y,W$, where $W \in \mathcal{W}$ is a discrete random variable whose probability distribution $P_W$ has support $\mathcal{W}$, we define the mutual information between $X$ and $Y$ given $W = w$ as:
\begin{equation}
\label{Eq: Cond MI Pre-Def}
\forall w \in \mathcal{W}, \enspace I(X;Y|W = w) \triangleq D(P_{X,Y|W = w}||P_{X|W = w} \otimes P_{Y|W = w})
\end{equation}
where $P_{X,Y|W = w}$ denotes the conditional (joint) probability law of $X$ and $Y$ given $W = w$, and $P_{X|W = w} \otimes P_{Y|W = w}$ denotes the product measure of the conditional (marginal) probability laws of $X$ and $Y$ given $W = w$, respectively. Then, using \eqref{Eq: Cond MI Pre-Def}, the \emph{conditional mutual information} between $X$ and $Y$ given $W$ is defined as (cf. \cite[Definition 2.4]{PolyanskiyWu2017Notes}):
\begin{equation}
\label{Eq: Cond MI Def}
I(X;Y|W) \triangleq \sum_{w \in \mathcal{W}}{P_W(w) I(X;Y|W = w)}
\end{equation}
which the expected value of $I(X;Y|W = w)$ with respect to $P_W$. We will utilize the following well-known properties of mutual information in the sequel.

\begin{lemma}[Properties of Mutual Information {\cite{PolyanskiyWu2017Notes,CoverThomas2006}}]
\label{Lemma: Properties of Mutual Information}
For any three jointly distributed random variables $X,Y,W$, the following results hold:
\begin{enumerate}
\item \emph{(Chain rule} \cite[Theorem 2.5]{PolyanskiyWu2017Notes}, \cite[Theorem 2.5.2]{CoverThomas2006}\emph{)} If $W \in \mathcal{W}$ is a discrete random variable whose probability distribution $P_W$ has support $\mathcal{W}$, then:
$$ I(X;Y,W) = I(X;W) + I(X;Y|W) \, . $$
\item \emph{(Data processing inequality} \cite[Theorem 2.5]{PolyanskiyWu2017Notes}, \cite[Theorem 2.8.1]{CoverThomas2006}\emph{)} If $X \rightarrow Y \rightarrow W$ forms a Markov chain, i.e., $X$ and $W$ are conditionally independent given $Y$, then: 
$$ I(X;W) \leq I(Y;W) \, . $$
\end{enumerate}
\end{lemma}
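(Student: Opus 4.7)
Both parts of the lemma are standard textbook facts, so the plan is to give short, self-contained verifications that unfold the KL-divergence definitions in \eqref{Eq: KL definition}--\eqref{Eq: Cond MI Def} and apply them mechanically. Nothing deep is needed beyond the factorization of joint measures and non-negativity of KL divergence.

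For the chain rule, I will first reduce everything to a statement about Radon-Nikodym derivatives. Write the joint law as $P_{X,Y,W}$ on the product space, and observe that because $W$ is discrete, $P_{Y,W}$ disintegrates as $P_W(w)\,P_{Y|W=w}$ and $P_{X,Y,W}$ as $P_W(w)\,P_{X,Y|W=w}$. Assuming the absolute continuity that makes all terms finite (otherwise the identity holds trivially in $\overline{\mathbb{R}}_{\geq 0}$), the density of $P_{X,Y,W}$ with respect to $P_X\otimes P_{Y,W}$ factors pointwise as
\[ \frac{\diff P_{X,Y,W}}{\diff(P_X\otimes P_{Y,W})}(x,y,w) = \frac{\diff P_{X,W}}{\diff(P_X\otimes P_W)}(x,w)\cdot \frac{\diff P_{X,Y|W=w}}{\diff(P_{X|W=w}\otimes P_{Y|W=w})}(x,y), \]
which one checks by testing against product indicator events $A\times B\times\{w\}$ and using the definitions of conditional distribution. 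Taking logarithms converts this product into a sum, and integrating against $P_{X,Y,W}$ yields $I(X;Y,W)=I(X;W)+\sum_{w\in\mathcal{W}} P_W(w)\,I(X;Y|W=w)$, which is the chain rule by \eqref{Eq: Cond MI Def}.

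For the data processing inequality, the plan is to exploit the chain rule twice on the joint mutual information $I(X,Y;W)$, in the two possible orderings. This gives
\[ I(Y;W)+I(X;W\mid Y) \;=\; I(X,Y;W) \;=\; I(X;W)+I(Y;W\mid X). \]
Next I invoke the Markov hypothesis: $X\to Y\to W$ means that $P_{X,W\mid Y=y}=P_{X\mid Y=y}\otimes P_{W\mid Y=y}$ for $P_Y$-a.e.\ $y$, so each summand $I(X;W\mid Y=y)$ in \eqref{Eq: Cond MI Def} is the KL divergence of a measure from itself, hence zero. Therefore $I(X;W\mid Y)=0$, and rearranging the display gives $I(Y;W)=I(X;W)+I(Y;W\mid X)\geq I(X;W)$, where the final inequality uses non-negativity of KL divergence (and hence of the conditional mutual information that averages over $X$).

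The only real obstacle is bookkeeping around absolute continuity: if $P_{X,Y,W}$ fails to be absolutely continuous with respect to the product measures in sight, some of the KL divergences above are $+\infty$ and the factorization of densities is formally undefined. The cleanest way to dispose of this is to observe that in that case both sides of the chain rule equal $+\infty$ (one checks this by splitting the measure into its absolutely continuous and singular parts), and the DPI reduces to $+\infty\geq I(X;W)$, which is vacuous. Since the lemma is, in the end, a direct invocation of well-known facts, I would simply cite \cite[Theorem 2.5]{PolyanskiyWu2017Notes} and \cite[Theorems 2.5.2 and 2.8.1]{CoverThomas2006} for the general statements and note that the discrete-$W$ chain rule follows from the general measure-theoretic chain rule by the disintegration used above.
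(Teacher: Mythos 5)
Your argument is correct, but note that the paper never proves this lemma at all: it is invoked purely by citation to \cite[Theorem 2.5]{PolyanskiyWu2017Notes} and \cite[Theorems 2.5.2 and 2.8.1]{CoverThomas2006}, so there is no in-paper proof to match. What you supply is essentially the standard textbook verification, and it goes through: the density factorization $\frac{\diff P_{X,Y,W}}{\diff(P_X\otimes P_{Y,W})}(x,y,w)=\frac{\diff P_{X,W}}{\diff(P_X\otimes P_W)}(x,w)\,\frac{\diff P_{X,Y|W=w}}{\diff(P_{X|W=w}\otimes P_{Y|W=w})}(x,y)$ is exactly right when $W$ is discrete (on each slice $\{W=w\}$ the $P_W(w)$ factors cancel and the first factor is $\frac{\diff P_{X|W=w}}{\diff P_X}(x)$), and taking logs and integrating recovers \eqref{Eq: Cond MI Def}. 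The one caveat is in your data processing step: expanding $I(X,Y;W)$ in the other order requires the chain rule with conditioning on $Y$ and on $X$, which in the paper's applications are \emph{not} discrete (e.g.\ $\pi\to(\alpha_1,\dots,\alpha_n)\to Z$ conditions on continuous $\alpha_i$'s, and $Z\to(Z,\G(n,p))\to(\alpha_1,\dots,\alpha_n)$ conditions on the pair $(Z,\G(n,p))$). The paper's definition \eqref{Eq: Cond MI Def} only covers discrete conditioning variables, so your appeal to ``each summand $I(X;W|Y=y)$ in \eqref{Eq: Cond MI Def}'' silently assumes a more general definition via regular conditional distributions. This is not a mathematical error---the general chain rule and the identity $I(X;W|Y)=0$ under the Markov hypothesis are exactly what \cite[Theorem 2.5]{PolyanskiyWu2017Notes} provides---but if you want the DPI proof to be genuinely self-contained at the paper's level of formality, you should either state the general (integral) form of conditional mutual information or prove the DPI directly from the golden-formula/variational characterization of KL divergence rather than through a conditional decomposition. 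Your handling of the infinite/absolute-continuity case is acceptable but slightly loose in the same spirit; since you ultimately defer to the textbook citations, as the paper does, this is a presentational rather than substantive gap.
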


Another set of ideas we will exploit to prove \propref{Prop: Upper Bound on MI} concerns a powerful and general approach to upper bound mutual information (or more generally, Shannon capacity, cf. \cite[Sections 4.4 and 4.5]{PolyanskiyWu2017Notes}, \cite[Chapter 7]{CoverThomas2006}) via covering arguments. While there are several variants of such arguments in the literature, in this paper, we will resort to the classical covering argument of \cite{YangBarron1999}. (We refer readers to \cite[Section 5]{ChenGuntuboyinaZhang2016} for generalizations of such covering arguments for a class of $f$-informativities \cite{Csiszar1972}.)

To present the technique in \cite{YangBarron1999}, let us condition on any fixed realization of the underlying Erd{\H o}s-R\'{e}nyi random graph $\G(n,p) = G$. Then, the partial observations $Z$, defined in \eqref{Eq: Observation matrix}, can be equivalently represented using the (compound) random variable:
\begin{equation}
\label{Eq: Compound rv}
Z_G \triangleq \left\{Z(i,j) : \{i,j\} \in G, \, i < j \right\}
\end{equation}
where the notation $\{i,j\} \in G$ shows that the undirected edge $\{i,j\}$ exists in the graph $G$, and each $Z(i,j) = \frac{1}{k}\sum_{m = 1}^{k}{Z_m(i,j)}$ given $\{i,j\} \in G$ (where the $Z_m(i,j)$'s are defined via \eqref{Eq: BT model}). Moreover, using \eqref{Eq: Compound rv}, we have the relation:
\begin{equation}
\label{Eq: Simplified MI Relation}
I(\alpha_1,\dots,\alpha_n;Z|\G(n,p) = G) = I(\alpha_1,\dots,\alpha_n;Z_G)
\end{equation}
where we use \eqref{Eq: Cond MI Pre-Def}, and the fact that the random variables $\alpha_1,\dots,\alpha_n,Z_G$ are independent of $\G(n,p)$. (Note that if $G$ contains no edges, then $Z_G$ is a deterministic quantity and $I(\alpha_1,\dots,\alpha_n;Z_G) = 0$.) Now consider the $n$-dimensional hypercube $[\delta,1]^n$ in which the skill parameter random variables $(\alpha_1,\dots,\alpha_n)$ take values. For any parameter vector (realization) $\beta = (\beta_1,\dots,\beta_n) \in [\delta,1]^n$, let $P_{Z_G|\beta}$ denote the conditional probability distribution of $Z_G$ given $\alpha_i = \beta_i$ for all $i \in [n]$ (with abuse of notation); see \eqref{Eq: BT model} and \eqref{Eq: Observation matrix}. Then, for any $\varepsilon > 0$, we define an \emph{$\varepsilon$-covering} of $[\delta,1]^n$ with finite cardinality $M \in \N$ to be a subset of parameter vectors $\big\{\beta^{(1)},\dots,\beta^{(M)}\big\} \subset [\delta,1]^n$ that satisfies:
\begin{equation}
\forall \beta \in [\delta,1]^n, \, \exists \, i \in [M], \enspace D\!\left(P_{Z_G|\beta}\big|\big| P_{Z_G|\beta^{(i)}}\right) \leq \varepsilon
\end{equation}
where we use KL divergence as our ``distance'' measure. Furthermore, for every $\varepsilon > 0$, we define the \emph{$\varepsilon$-covering number} as:
\begin{equation}
M^{*}(\varepsilon) \triangleq \min\!\left\{M \in \N : \exists \, \varepsilon\text{-covering of } [\delta,1]^n \text{ with cardinality } M \right\} .
\end{equation}
The next lemma distills the upper bound on mutual information via covering numbers presented in \cite[Equation (2), p.1571]{YangBarron1999}, and specializes it to our setting of \eqref{Eq: Simplified MI Relation}.

\begin{lemma}[Covering Number Bound {\cite[Lemma 16.1]{Wu2019}}]
\label{Lemma: Covering Number Bound}
Let $(\alpha_1,\dots,\alpha_n)$ be i.i.d. with
distribution $\pdf= \unif([\delta,1])$, and recall that the conditional probability distribution of $Z_G$ 
given $(\alpha_1,\dots,\alpha_n)$ is defined by \eqref{Eq: BT model} and 
\eqref{Eq: Observation matrix}. Then, the mutual information between 
$(\alpha_1,\dots,\alpha_n)$ and $Z_G$ is upper bounded by:
$$ I(\alpha_1,\dots,\alpha_n;Z_G) \leq \inf_{\varepsilon > 0}{\, \varepsilon + \log(M^{*}(\varepsilon))} \, . $$
\end{lemma}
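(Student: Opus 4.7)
The plan is to execute the classical Yang--Barron covering argument. The starting point is the variational identity
$$ I(\alpha_1,\dots,\alpha_n;Z_G) = \E\!\left[D\!\left(P_{Z_G|\alpha_1,\dots,\alpha_n} \big\| P_{Z_G}\right)\right] \leq \E\!\left[D\!\left(P_{Z_G|\alpha_1,\dots,\alpha_n} \big\| Q_{Z_G}\right)\right], $$
which holds for \emph{any} auxiliary probability measure $Q_{Z_G}$ on the range of $Z_G$, because the true marginal $P_{Z_G}$ is the unique minimizer of $Q \mapsto \E[D(P_{Z_G|\alpha_1,\dots,\alpha_n}\|Q)]$. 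The task therefore reduces to exhibiting a single clever $Q_{Z_G}$ for which the right-hand side is at most $\varepsilon + \log(M^{*}(\varepsilon))$ for each $\varepsilon > 0$.

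Next, I would fix $\varepsilon > 0$ and let $\{\beta^{(1)},\dots,\beta^{(M)}\} \subset [\delta,1]^n$ be an $\varepsilon$-covering of cardinality $M = M^{*}(\varepsilon)$. Take as auxiliary measure the equal-weight mixture
$$ Q_{Z_G} \triangleq \frac{1}{M} \sum_{i=1}^{M}{P_{Z_G|\beta^{(i)}}}. $$
The key observation is that $Q_{Z_G} \geq \frac{1}{M} P_{Z_G|\beta^{(i)}}$ for every $i \in [M]$ in the Radon--Nikodym sense, so for any realization $\alpha = (\alpha_1,\dots,\alpha_n) \in [\delta,1]^n$ and any $i \in [M]$,
$$ \frac{dP_{Z_G|\alpha}}{dQ_{Z_G}} \leq M \cdot \frac{dP_{Z_G|\alpha}}{dP_{Z_G|\beta^{(i)}}}. $$
Taking logarithms and integrating against $P_{Z_G|\alpha}$ converts this pointwise density bound into the inequality $D(P_{Z_G|\alpha} \| Q_{Z_G}) \leq \log(M) + D(P_{Z_G|\alpha} \| P_{Z_G|\beta^{(i)}})$, where the index $i$ on the right-hand side is still free to be chosen.

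To close the argument, for each $\alpha \in [\delta,1]^n$ the covering property guarantees the existence of an index $i(\alpha) \in [M]$ with $D(P_{Z_G|\alpha} \| P_{Z_G|\beta^{(i(\alpha))}}) \leq \varepsilon$. Plugging this back into the preceding inequality and taking expectation over $\alpha$ yields $I(\alpha_1,\dots,\alpha_n;Z_G) \leq \log(M^{*}(\varepsilon)) + \varepsilon$, and taking the infimum over $\varepsilon > 0$ completes the proof. I do not anticipate any genuine analytic obstacle; the only subtlety is measurability of the selection $\alpha \mapsto i(\alpha)$ needed for the final expectation, but this is routine because the finitely many maps $\alpha \mapsto D(P_{Z_G|\alpha} \| P_{Z_G|\beta^{(i)}})$ are Borel-measurable, so $i(\alpha)$ can be taken to be, say, the smallest minimizing index. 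All of the combinatorial content of the bound is packed into the definition of $M^{*}(\varepsilon)$, so the quantitative work proper to the paper occurs separately when one actually controls $M^{*}(\varepsilon)$.
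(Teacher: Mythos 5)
Your proof is correct and is precisely the classical Yang--Barron mixture argument that the paper's citation (\cite[Lemma 16.1]{Wu2019}, distilling \cite{YangBarron1999}) refers to; the paper does not reprove the lemma but simply invokes it, so your proposal essentially reconstructs the cited proof. The three ingredients---the auxiliary-measure bound $I(\alpha;Z_G)\leq\E[D(P_{Z_G|\alpha}\|Q_{Z_G})]$ (equivalently the chain rule $\E[D(P_{Z_G|\alpha}\|Q)]=I(\alpha;Z_G)+D(P_{Z_G}\|Q)$), the uniform mixture $Q_{Z_G}=\frac{1}{M}\sum_i P_{Z_G|\beta^{(i)}}$ with the pointwise lower bound $Q_{Z_G}\geq\frac{1}{M}P_{Z_G|\beta^{(i)}}$, and the covering-based selection of $i(\alpha)$---are exactly right, and your handling of the measurability of the selection is appropriate (here it is even easier than you suggest, since $Z_G$ takes finitely many values and each $\alpha\mapsto D(P_{Z_G|\alpha}\|P_{Z_G|\beta^{(i)}})$ is a finite sum of continuous functions of $\alpha$).
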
  
Using Lemmata \ref{Lemma: Properties of Mutual Information} and \ref{Lemma: Covering Number Bound}, we can finally prove the upper bound on $I(\pi,Z)$ in \propref{Prop: Upper Bound on MI}.
\begin{proof}[Proof of \propref{Prop: Upper Bound on MI}]
First, notice that $\pi \rightarrow (\alpha_1,\dots,\alpha_n) \rightarrow Z$ forms a Markov chain, because $\pi$ is a deterministic function of $(\alpha_1,\dots,\alpha_n)$ (according to \eqref{Eq: Canonically scaled merit parameters}). Hence, by part 2 of \lemref{Lemma: Properties of Mutual Information}, we get:
\begin{equation}
\label{Eq: DPI Consequence 1}
I(\pi;Z) \leq I(\alpha_1,\dots,\alpha_n;Z) \, .
\end{equation}
Furthermore, notice that $Z \rightarrow (Z,\G(n,p)) \rightarrow (\alpha_1,\dots,\alpha_n)$ forms a Markov chain, because $Z$ is a deterministic (projection) function of $(Z,\G(n,p))$. Thus, by part 2 of \lemref{Lemma: Properties of Mutual Information}, we also get:
\begin{align}
I(\alpha_1,\dots,\alpha_n;Z) & \leq I(\alpha_1,\dots,\alpha_n;Z,\G(n,p)) \nonumber \\
& = I(\alpha_1,\dots,\alpha_n;\G(n,p)) + I(\alpha_1,\dots,\alpha_n;Z|\G(n,p)) \nonumber \\
& = I(\alpha_1,\dots,\alpha_n;Z|\G(n,p))
\label{Eq: DPI Consequence 2}
\end{align}
where the second equality utilizes part 1 of \lemref{Lemma: Properties of Mutual Information}, and the third equality holds because $(\alpha_1,\dots,\alpha_n)$ and $\G(n,p)$ are independent, which implies that $I(\alpha_1,\dots,\alpha_n;\G(n,p)) = 0$ (see \eqref{Eq: MI Def}). Combining \eqref{Eq: DPI Consequence 1} and \eqref{Eq: DPI Consequence 2}, we obtain:
\begin{equation}
\label{Eq: Intermediate MI Bound}
I(\pi;Z) \leq I(\alpha_1,\dots,\alpha_n;Z|\G(n,p)) \, .
\end{equation}
So, it suffices to upper bound the conditional mutual information $I(\alpha_1,\dots,\alpha_n;Z|\G(n,p))$.

To this end, we condition on any fixed realization of the underlying Erd{\H o}s-R\'{e}nyi random graph $\G(n,p) = G$ (as in the discussion preceding this proof), and proceed to establishing an upper bound on $I(\alpha_1,\dots,\alpha_n;Z_G)$ by employing \lemref{Lemma: Covering Number Bound}. Specifically, we next evaluate the right hand side of the inequality in \lemref{Lemma: Covering Number Bound} above for a judiciously chosen $\varepsilon$-covering of $[\delta,1]^n$. Fix any $q > 0$ (to be chosen later), and quantize the interval $[\delta,1]$ using the set of values:
$$ \Q \triangleq \left\{\delta + \frac{(1-\delta) m}{n^{q}} : m \in \big[\lfloor n^q \rfloor\big] \right\} $$
which has cardinality $|\Q| = \lfloor n^q \rfloor \leq n^q$, and satisfies the condition:
\begin{equation}
\label{Eq: Quantization}
\forall t \in [\delta,1], \enspace \min_{s \in \Q}{|t - s|} \leq \frac{1-\delta}{n^q} 
\end{equation}
where the right hand side can be improved to $(1-\delta)/(2 n^q)$ when $t$ is not located at the edges of the interval $[\delta,1]$, and $\lfloor\cdot \rfloor$ denotes the floor function. The next claim shows that $\Q^n$ is actually an $\varepsilon$-covering of $[\delta,1]^n$ with $\varepsilon = O(n^{2-2q})$ (neglecting the dependence of $\varepsilon$ on $\delta$ and $k$). 

\begin{claim}[$\varepsilon$-Covering]
\label{Claim: Covering}
$\Q^n$ is an $\varepsilon$-covering of $[\delta,1]^n$ with cardinality $|\Q^n| = (\lfloor n^q \rfloor)^n \leq n^{qn}$ and:
$$ \varepsilon = \frac{(1-\delta)^2}{4 \delta^2} \left( 2 + \delta + \frac{1}{\delta}\right) \frac{k |G|}{n^{2 q}} $$
where $|G|$ denotes the number of edges in the graph $G$ with abuse of notation. (Since $|G| \leq \frac{n(n-1)}{2}$, $\varepsilon = O(n^{2-2q})$.)
\end{claim}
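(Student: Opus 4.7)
The plan is to verify the two assertions of \clmref{Claim: Covering} separately. The cardinality bound $|\Q^n| = (\lfloor n^q \rfloor)^n \leq n^{qn}$ follows immediately from the definition of $\Q$, since $|\Q| = \lfloor n^q \rfloor \leq n^q$. For the covering property, I would fix any $\beta = (\beta_1,\ldots,\beta_n) \in [\delta,1]^n$ and construct a candidate $\tilde\beta \in \Q^n$ coordinate-by-coordinate: for each $i \in [n]$, \eqref{Eq: Quantization} supplies some $\tilde\beta_i \in \Q$ with $|\beta_i - \tilde\beta_i| \leq (1-\delta)/n^q$. It then remains to show that $D(P_{Z_G|\beta} \, \| \, P_{Z_G|\tilde\beta}) \leq \varepsilon$ for this choice of $\tilde\beta$.

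The key simplification is the product structure of $Z_G$. Under any $\beta$, the variables $\{Z(i,j) : \{i,j\} \in G, \, i < j\}$ are mutually independent, and for each edge $Z(i,j)$ is the empirical mean of $k$ i.i.d.\ $\Ber(p_{ij}(\beta))$ trials with $p_{ij}(\beta) \triangleq \beta_j/(\beta_i + \beta_j)$ inherited from the BTL model \eqref{Eq: BT model}. By tensorization of KL divergence across independent components and across i.i.d.\ samples,
\begin{equation*}
D\!\left(P_{Z_G|\beta} \, \Big\| \, P_{Z_G|\tilde\beta}\right) = k \sum_{\substack{\{i,j\} \in G \\ i < j}} D\!\left(\Ber(p_{ij}(\beta)) \, \Big\| \, \Ber(p_{ij}(\tilde\beta))\right) ,
\end{equation*}
so it suffices to upper bound each per-edge summand by $\frac{(1-\delta)^2}{4\delta^2}(2 + \delta + 1/\delta) n^{-2q}$.

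To bound each summand, I would combine the standard $\chi^2$-type inequality $D(\Ber(p) \, \| \, \Ber(q)) \leq (p - q)^2/(q(1 - q))$ (a consequence of $\ln x \leq x - 1$) with two algebraic identities obtained by cross-multiplying, namely
\begin{equation*}
p_{ij}(\beta) - p_{ij}(\tilde\beta) = \frac{\beta_j(\tilde\beta_i - \beta_i) + \beta_i(\beta_j - \tilde\beta_j)}{(\beta_i + \beta_j)(\tilde\beta_i + \tilde\beta_j)} \quad \text{and} \quad q(1 - q) = \frac{\tilde\beta_i \tilde\beta_j}{(\tilde\beta_i + \tilde\beta_j)^2} \, ,
\end{equation*}
which together collapse each per-edge KL divergence to the single rational expression $[\beta_j(\tilde\beta_i - \beta_i) + \beta_i(\beta_j - \tilde\beta_j)]^2/[(\beta_i + \beta_j)^2 \tilde\beta_i \tilde\beta_j]$. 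Maximizing this expression over $\beta, \tilde\beta \in [\delta,1]^n$ subject to the coordinatewise bound $|\beta_\ell - \tilde\beta_\ell| \leq (1-\delta)/n^q$ and then summing the resulting per-edge estimates over the $|G|$ edges of $G$ yields $\varepsilon$ as claimed.

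The main obstacle is the constant-tracking in that last maximization. A naive split via $(a+b)^2 \leq 2(a^2 + b^2)$ followed by separately bounding $(p_{ij}(\beta) - p_{ij}(\tilde\beta))^2$ and $1/(q(1-q))$ produces the correct rate in $n$ but a loose prefactor in $\delta$; matching the stated $(1-\delta)^2 (2 + \delta + 1/\delta)/(4\delta^2)$ requires keeping the unified rational expression above together and recognizing $2 + \delta + 1/\delta = (1+\delta)^2/\delta$ as the supremum of $1/(q(1-q))$ over $[\delta,1]^2$. Beyond this careful bookkeeping, no further probabilistic machinery is needed: the construction of $\tilde\beta$ via coordinatewise quantization and the tensorization of KL already encode the full content of the argument.
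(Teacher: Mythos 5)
Your plan reproduces the paper's argument essentially step for step: tensorize the KL divergence across edges and across the $k$ trials per edge, apply the bound $D(\Ber(a)\,\|\,\Ber(b)) \leq (a-b)^2/(b(1-b))$, and invoke the coordinatewise quantization radius $r = (1-\delta)/n^q$. The only genuine difference is in the final bookkeeping. The paper bounds the two factors \emph{separately}: it bounds $1/(b(1-b)) = 2 + \gamma_i/\gamma_j + \gamma_j/\gamma_i \leq 2 + \delta + 1/\delta$ by maximizing $t\mapsto t + 1/t$ over $t \in [\delta, 1/\delta]$, and bounds $|a - b| \leq r/(2\delta)$ via the triangle inequality together with a dedicated Lipschitz claim (each partial derivative of $(x,y)\mapsto x/(x+y)$ has magnitude at most $1/(4\delta)$ on $[\delta,\infty)^2$). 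This already produces the per-edge bound $k r^2(2+\delta+1/\delta)/(4\delta^2)$ \emph{exactly} as stated, so your concern that separately bounding $(a-b)^2$ and $1/(b(1-b))$ necessarily loses a $\delta$-prefactor is misplaced --- the paper never invokes a wasteful $(u+v)^2 \leq 2(u^2+v^2)$ split. Your unified-rational alternative, bounding the numerator $[\beta_j(\gamma_i-\beta_i)+\beta_i(\beta_j-\gamma_j)]^2 \leq (\beta_i+\beta_j)^2 r^2$ so that $(\beta_i+\beta_j)^2$ cancels against the denominator of $(a-b)^2$, is also correct and in fact yields the strictly sharper per-edge bound $kr^2/(\gamma_i\gamma_j) \leq kr^2/\delta^2 \leq kr^2(2+\delta+1/\delta)/(4\delta^2)$; either route establishes the covering with the stated $\varepsilon$.
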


\begin{proof}
The cardinality of $\Q^n$ follows since it is a product set. So, we focus on verifying the value of $\varepsilon$. Fix any parameter vector $\beta = (\beta_1,\dots,\beta_n) \in [\delta,1]^n$. Due to \eqref{Eq: Quantization}, we have that:
\begin{equation}
\label{Eq: Closeness of Parameters}
\forall i \in [n], \, \exists \gamma_i \in \Q, \enspace |\beta_i - \gamma_i| \leq \frac{1-\delta}{n^q} \, .
\end{equation}
Letting $\gamma = (\gamma_1,\dots,\gamma_n) \in \Q^n$, observe that:
\begin{align}
D\!\left(P_{Z_G|\beta} \big|\big| P_{Z_G|\gamma}\right) & = \sum_{\substack{i,j \in [n] : \\ \{i,j\} \in G, \, i < j}}{D\big(P_{Z(i,j)|\beta,G} \big|\big| P_{Z(i,j)|\gamma,G}\big)} \nonumber \\
& = \sum_{\substack{i,j \in [n] : \\ \{i,j\} \in G, \, i < j}}{\! \sum_{i = 0}^{k}{ \binom{k}{i} \! \left(\frac{\beta_j}{\beta_i + \beta_j}\right)^{\!\! i} \! \left(\frac{\beta_i}{\beta_i + \beta_j}\right)^{\!\! k-i} \!\log\!\left(\frac{\left(\frac{\beta_j}{\beta_i + \beta_j}\right)^{\! i} \! \left(\frac{\beta_i}{\beta_i + \beta_j}\right)^{\! k-i}}{\left(\frac{\gamma_j}{\gamma_i + \gamma_j}\right)^{\! i} \! \left(\frac{\gamma_i}{\gamma_i + \gamma_j}\right)^{\! k-i}}\right) } } \nonumber \\
& = \sum_{\substack{i,j \in [n] : \\ \{i,j\} \in G, \, i < j}} \log\!\left(\frac{\left(\frac{\beta_j}{\beta_i + \beta_j}\right)}{\left(\frac{\gamma_j}{\gamma_i + \gamma_j}\right)}\right) \sum_{i = 0}^{k}{i \binom{k}{i} \! \left(\frac{\beta_j}{\beta_i + \beta_j}\right)^{\!\! i} \! \left(\frac{\beta_i}{\beta_i + \beta_j}\right)^{\!\! k-i}} \nonumber \\
& \qquad \qquad \quad \enspace \,\, + \log\!\left(\frac{\left(\frac{\beta_i}{\beta_i + \beta_j}\right)}{\left(\frac{\gamma_i}{\gamma_i + \gamma_j}\right)}\right) \sum_{i = 0}^{k}{(k - i) \binom{k}{i} \! \left(\frac{\beta_j}{\beta_i + \beta_j}\right)^{\!\! i} \! \left(\frac{\beta_i}{\beta_i + \beta_j}\right)^{\!\! k-i}} \nonumber \\
& = k \sum_{\substack{i,j \in [n] : \\ \{i,j\} \in G, \, i < j}}{ \left(\frac{\beta_j}{\beta_i + \beta_j}\right) \log\!\left(\frac{\left(\frac{\beta_j}{\beta_i + \beta_j}\right)}{\left(\frac{\gamma_j}{\gamma_i + \gamma_j}\right)}\right) + \left(\frac{\beta_i}{\beta_i + \beta_j}\right) \log\!\left(\frac{\left(\frac{\beta_i}{\beta_i + \beta_j}\right)}{\left(\frac{\gamma_i}{\gamma_i + \gamma_j}\right)}\right)}  \nonumber \\
& = k \sum_{\substack{i,j \in [n] : \\ \{i,j\} \in G, \, i < j}}{D\!\left(\frac{\beta_j}{\beta_i + \beta_j} \bigg|\bigg| \frac{\gamma_j}{\gamma_i + \gamma_j}\right)} \label{Eq: Tensorization} \\
& \leq k \sum_{\substack{i,j \in [n] : \\ \{i,j\} \in G, \, i < j}}{\left(\frac{\beta_j}{\beta_i + \beta_j} - \frac{\gamma_j}{\gamma_i + \gamma_j}\right)^{\!2} \left( 2 + \frac{\gamma_i}{\gamma_j} + \frac{\gamma_j}{\gamma_i} \right)} \nonumber \\
& \leq k \left( 2 + \delta + \frac{1}{\delta}\right) \! \sum_{\substack{i,j \in [n] : \\ \{i,j\} \in G, \, i < j}}{\! \left(\frac{\beta_j}{\beta_i + \beta_j} - \frac{\gamma_j}{\gamma_i + \gamma_j}\right)^{\!2}} \nonumber \\
& \leq k \left( 2 + \delta + \frac{1}{\delta}\right) \!\sum_{\substack{i,j \in [n] : \\ \{i,j\} \in G, \, i < j}}{\!\left(\left|\frac{\beta_j}{\beta_i + \beta_j} - \frac{\beta_j}{\gamma_i + \beta_j}\right| + \left|\frac{\beta_j}{\gamma_i + \beta_j} - \frac{\gamma_j}{\gamma_i + \gamma_j}\right|\right)^{\!2}} \nonumber \\
& \leq \frac{k}{16 \delta^2} \left( 2 + \delta + \frac{1}{\delta}\right) \! \sum_{\substack{i,j \in [n] : \\ \{i,j\} \in G, \, i < j}}{\!\left( \left|\beta_i - \gamma_i\right| + \left|\beta_j - \gamma_j\right|\right)^{2}} \nonumber \\
& \leq \frac{(1-\delta)^2}{4 \delta^2} \left( 2 + \delta + \frac{1}{\delta}\right) \frac{k |G|}{n^{2 q}} \nonumber \\
& = \varepsilon \nonumber 
\end{align}
where the first equality uses \eqref{Eq: KL definition}, \eqref{Eq: Compound rv}, and the fact that the $Z(i,j)$'s in \eqref{Eq: Compound rv} are conditionally independent given the skill parameters and the random graph realization $G$, and $P_{Z(i,j)|\beta,G}$ denotes the conditional probability distribution of $Z(i,j)$ given $(\alpha_1,\dots,\alpha_n) = \beta$ and $\G(n,p) = G$, the second equality holds because each $k Z(i,j) = \sum_{m = 1}^{k}{Z_m(i,j)}$ has binomial distribution given $\{i,j\} \in G$ (as indicated earlier, where the $Z_m(i,j)$'s are defined via \eqref{Eq: BT model}), the fourth equality follows from applying the formula for expected values of binomial random variables, the fifth equality uses the binary KL divergence function, which is defined as $(0,1) \ni (a,b) \mapsto D(a||b) \triangleq a \log\!\big(\frac{a}{b}\big) + (1-a) \log\!\big(\frac{1-a}{1-b}\big)$, the sixth inequality follows from a simple upper bound on KL divergence in terms of $\chi^2$-divergence \cite{Su1995} (alternatively see, e.g., \cite[Equation (4), Lemma 3]{MakurZheng2020} and the references therein for a detailed exposition):
$$ \forall a,b \in (0,1), \enspace D(a||b) \leq \underbrace{\frac{(a-b)^2}{b} + \frac{(a-b)^2}{1-b}}_{\text{binary } \chi^2\text{-divergence}} = \frac{(a-b)^2}{b(1-b)} \, , $$
the seventh inequality holds because the function $g : \big[\delta,\frac{1}{\delta}\big] \rightarrow \R$, $g(t) = t + \frac{1}{t}$ is maximized at $g(\delta) = g\big(\frac{1}{\delta}\big) = \delta + \frac{1}{\delta}$ (where $\frac{\gamma_i}{\gamma_j} \in \big[\delta,\frac{1}{\delta}\big]$), the eighth inequality follows from the triangle inequality, the ninth inequality follows from the coordinate-wise Lipschitz continuity shown in \clmref{Claim: Coordinate-wise Lipschitz Continuity} below, and the tenth inequality follows from \eqref{Eq: Closeness of Parameters}. This establishes the claim.
\end{proof} 

Before proceeding with the proof of \propref{Prop: Upper Bound on MI}, we quickly prove the coordinate-wise Lipschitz continuity claim used in the proof of \clmref{Claim: Covering} for completeness. 

\begin{claim}[Coordinate-wise Lipschitz Continuity]
\label{Claim: Coordinate-wise Lipschitz Continuity}
Consider the map $F : [\delta,\infty)^2 \rightarrow (0,\infty)$:
$$ \forall x,y \geq \delta, \enspace F(x,y) \triangleq \frac{x}{x + y} $$
which is used to define the likelihoods of the BTL model in \eqref{Eq: BT model}. This map is coordinate-wise Lipschitz continuous:
\begin{enumerate}
\item For any fixed $x \in [\delta,\infty)$:
$$ \forall y_1,y_2 \in [\delta,\infty), \enspace \left|F(x,y_1) - F(x,y_2) \right| \leq \frac{1}{4\delta} \left|y_1 - y_2\right| . $$ 
\item For any fixed $y \in [\delta,\infty)$:
$$ \forall x_1,x_2 \in [\delta,\infty), \enspace \left|F(x_1,y) - F(x_2,y) \right| \leq \frac{1}{4\delta} \left|x_1 - x_2\right| . $$
\end{enumerate}
\end{claim}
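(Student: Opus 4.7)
The plan is to reduce the coordinate-wise Lipschitz claim to a uniform bound on the relevant partial derivative, and then invoke the mean value theorem. Since $F(x,y) = x/(x+y)$ is $C^{\infty}$ on $(0,\infty)^2$, both statements follow once I show
\[
\sup_{(x,y) \in [\delta,\infty)^2}\left|\frac{\partial F}{\partial y}(x,y)\right| \leq \frac{1}{4\delta}
\quad\text{and}\quad
\sup_{(x,y) \in [\delta,\infty)^2}\left|\frac{\partial F}{\partial x}(x,y)\right| \leq \frac{1}{4\delta},
\]
after which the fundamental theorem of calculus applied along the segment $\{x\}\times[y_1,y_2]$ (respectively $[x_1,x_2]\times\{y\}$) yields the stated inequalities.

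First I would compute
\[
\frac{\partial F}{\partial y}(x,y) = -\frac{x}{(x+y)^2}, \qquad \frac{\partial F}{\partial x}(x,y) = \frac{y}{(x+y)^2},
\]
so the two Lipschitz constants are completely symmetric. For statement~1, I need to maximize $x/(x+y)^2$ over $(x,y) \in [\delta,\infty)^2$. The map $y \mapsto x/(x+y)^2$ is strictly decreasing on $[\delta,\infty)$ for every fixed $x$, so its supremum occurs at $y = \delta$, reducing the problem to maximizing $g(x) \triangleq x/(x+\delta)^2$ over $x \geq \delta$. Differentiating gives $g'(x) = (\delta - x)/(x+\delta)^3$, which is negative on $(\delta,\infty)$, so $g$ is maximized at $x = \delta$, giving $g(\delta) = \delta/(2\delta)^2 = 1/(4\delta)$. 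Hence $|\partial F/\partial y| \leq 1/(4\delta)$ on $[\delta,\infty)^2$, and the mean value theorem delivers $|F(x,y_1) - F(x,y_2)| \leq (1/(4\delta))|y_1 - y_2|$. Statement~2 follows by the identical argument with the roles of $x$ and $y$ swapped (or, equivalently, by noting $F(x,y) + F(y,x) = 1$, which converts one bound into the other).

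I do not anticipate any serious obstacle here: the whole argument is a one-variable calculus optimization of a nonnegative rational function on a half-line, and the tightness of the constant $1/(4\delta)$ just reflects that the joint maximum is attained at the corner $(x,y) = (\delta,\delta)$. The only thing worth being slightly careful about is confirming monotonicity in the off-variable before optimizing in the remaining variable, so that the reduction from a two-variable sup to the single-point evaluation at $(\delta,\delta)$ is actually valid.
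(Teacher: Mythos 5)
Your proposal is correct and follows essentially the same route as the paper: compute the partial derivatives, maximize $|\partial F/\partial y| = x/(x+y)^2$ over $[\delta,\infty)^2$ by reducing to the one-variable map $t\mapsto t/(t+\delta)^2$ (maximized at $t=\delta$, giving $1/(4\delta)$), and conclude via the mean value theorem. You simply make explicit the intermediate monotonicity-in-$y$ reduction and the sign of $g'$, which the paper leaves as ``easy to verify.''
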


\begin{proof}
This is a straightforward exercise in calculus. Indeed, we have for all $x,y \geq \delta$:
$$ \left|\frac{\partial F}{\partial y}(x,y)\right| = \frac{x}{(x + y)^2} \quad \text{and} \quad \left|\frac{\partial F}{\partial x}(x,y)\right| = \frac{y}{(x + y)^2} \, , $$
which implies that:
$$ \max_{x,y \geq \delta}{\left|\frac{\partial F}{\partial y}(x,y)\right|} = \max_{x,y \geq \delta}{\left|\frac{\partial F}{\partial x}(x,y)\right|} = \max_{t \geq \delta}{\frac{t}{(t + \delta)^2}} = \frac{1}{4\delta} \, , $$
where the final equality holds because it is easy to verify that the map $\delta \leq t \mapsto t/(t + \delta)^2$ is globally maximized at $t = \delta$. This establishes the Lipschitz constants in parts 1 and 2 of \clmref{Claim: Coordinate-wise Lipschitz Continuity}. 
\end{proof}

Finally, using \lemref{Lemma: Covering Number Bound} and \clmref{Claim: Covering}, we get:
\begin{align*}
I(\alpha_1,\dots,\alpha_n;Z_G) & \leq \varepsilon + \log(M^{*}(\varepsilon)) \\
& \leq \frac{(1-\delta)^2}{4 \delta^2} \left( 2 + \delta + \frac{1}{\delta}\right) \frac{k |G|}{n^{2 q}} + \log(|\Q^n|) \\
& \leq \frac{(1-\delta)^2}{4 \delta^2} \left( 2 + \delta + \frac{1}{\delta}\right) \frac{k |G|}{n^{2 q}} + q n \log(n) \, .
\end{align*}
Then, by taking expectations on both sides of this inequality with respect to the law of $\G(n,p)$, we obtain, using \eqref{Eq: Cond MI Def}, \eqref{Eq: Simplified MI Relation}, and \eqref{Eq: Intermediate MI Bound}, that:
\begin{align*}
I(\pi;Z) & \leq \frac{(1-\delta)^2}{8 \delta^2} \left( 2 + \delta + \frac{1}{\delta}\right) \frac{k p n(n-1)}{n^{2 q}} + q n \log(n) \\
& \leq q n \log(n) + \frac{(1-\delta)^2}{8 \delta^2} \left( 2 + \delta + \frac{1}{\delta}\right) k p n^{2 - 2 q} \, .
\end{align*}
Neglecting $\delta$, $k$, and $p$, it is clear that setting $q \geq \frac{1}{2}$ ensures that this upper bound is $O(n \log(n))$. As the proofs of Theorems \ref{Thm: Minimax Relative l^infty-Risk} and \ref{Thm: Minimax l^1-Risk} in subsections \ref{Proof of Thm Minimax Relative l^infty-Risk} and \ref{Proof of Thm Minimax l^1-Risk} illustrate, choosing the smallest $q$ satisfying $q \geq \frac{1}{2}$ yields the tightest possible minimax lower bound using this approach. Thus, we set $q = \frac{1}{2}$ in the above bound and get:
$$ I(\pi;Z) \leq \frac{1}{2} n \log(n) + \frac{(1-\delta)^2}{8 \delta^2} \left( 2 + \delta + \frac{1}{\delta}\right) k p n $$
as desired. 
\end{proof}

In the literature, various information inequalities are often used to upper bound mutual information terms like $I(\alpha_1,\dots,\alpha_n ; Z_G)$, for any realization $\G(n,p) = G$, in a simpler manner (cf. \cite[Equation (44)]{ChenGuntuboyinaZhang2016}). However, these approaches tend to yield poorer scaling in the upper bound with $n$ compared to the covering number argument we utilize (in \lemref{Lemma: Covering Number Bound}). For example, the convexity of KL divergence immediately yields the bound (cf. \cite[Equation (44)]{ChenGuntuboyinaZhang2016} or \cite[p.1319]{Zhang2006}):
\begin{align}
I(\alpha_1,\dots,\alpha_n ; Z_G) & \leq \frac{1}{(1-\delta)^{2n}} \int_{[\delta,1]^n}{\int_{[\delta,1]^n}{D\!\left(P_{Z_G|\beta} \big|\big| P_{Z_G|\gamma}\right) \diff \gamma} \diff \beta} \nonumber \\
& \leq \sup_{\beta,\gamma \in [\delta,1]^n}{D\!\left(P_{Z_G|\beta} \big|\big| P_{Z_G|\gamma}\right)} \nonumber \\
& = k \sup_{\beta,\gamma \in [\delta,1]^n}{\sum_{\substack{i,j \in [n] : \\ \{i,j\} \in G, \, i < j}}{D\!\left(\frac{\beta_j}{\beta_i + \beta_j} \bigg|\bigg| \frac{\gamma_j}{\gamma_i + \gamma_j}\right)}} \nonumber \\
& \leq k |G| \max_{a,b \in \left[\frac{\delta}{1+\delta},\frac{1}{1 + \delta}\right]}{D(a||b)} \nonumber \\
& = k |G| \, D\!\left(\frac{1}{1 + \delta} \bigg|\bigg| \frac{\delta}{1 + \delta}\right)
\label{Eq: Quadratic Upper Bound}
\end{align}
where the third equality follows from \eqref{Eq: Tensorization} (and we use the notation $\beta = (\beta_1,\dots,\beta_n)$ and $\gamma = (\gamma_1,\dots,\gamma_n)$ as before), the maximization in the fourth inequality is over $a,b \in \big[\frac{\delta}{1+\delta},\frac{1}{1 + \delta}\big]$ because the map $0 < x \mapsto x/(c + x)$ is monotone increasing for every fixed $c > 0$, and the last equality follows from basic properties of binary KL divergence (see, e.g., \cite[p.20]{PolyanskiyWu2017Notes}). As before, by taking expectations on both sides of \eqref{Eq: Quadratic Upper Bound} with respect to the law of $\G(n,p)$, we obtain, using \eqref{Eq: Cond MI Def}, \eqref{Eq: Simplified MI Relation}, and \eqref{Eq: Intermediate MI Bound}, that:
\begin{equation}
\label{Eq: Last Quadratic Upper Bound}
I(\pi;Z) \leq \frac{k p n(n-1)}{2} \, D\!\left(\frac{1}{1 + \delta} \bigg|\bigg| \frac{\delta}{1 + \delta}\right) .
\end{equation}
Neglecting $\delta$, $k$, and $p$, it is clear that $I(\pi ; Z) = O(n^2)$ in \eqref{Eq: Last Quadratic Upper Bound}, but the proof of \propref{Prop: Upper Bound on MI} gives the sharper estimate $I(\pi ; Z) = O(n \log(n))$.

\subsection{Relative $\ell^{\infty}$-norm bound and the proof of \propref{Prop: MSE Decomposition}}
\label{App: MSE decomposition}

In order to prove \propref{Prop: MSE Decomposition}, we require the following known result from the literature \cite{Chenetal2019}, which upper bounds the relative $\ell^{\infty}$-norm loss between $\hat{\pi}_*$ and the canonically scaled skill parameters in \eqref{Eq: Canonically scaled merit parameters}. (As explained later in subsection \ref{App: Intuition}, $\hat{\pi}_*$ is intuitively a good estimator of $\pi$.)

\begin{lemma}[Relative $\ell^{\infty}$-Loss Bound {\cite[Theorem 3.1]{Chenetal2019}}]
\label{Lemma: Relative ell^infty-Loss Bound}
Suppose that $p \geq c_2 \log(n)/(\delta^5 n)$ for some sufficiently large constant $c_2 > 0$. Then, there exist (universal) constants $c_4,c_5 > 0$ such that for all sufficiently large $n \in \N$, we have:
$$ \P\!\left(\frac{\left\|\hat{\pi}_* - \pi\right\|_{\infty}}{\left\|\pi\right\|_{\infty}} \leq \frac{c_4}{\delta}\sqrt{\frac{\log(n)}{n p k}} \, \middle| \, \alpha_1,\dots,\alpha_n\right) \geq 1 - \frac{c_5}{n^{5}} $$
where the probability is computed with respect to the conditional distribution of the observation matrix $Z$ and the random graph $\G(n,p)$ given any realizations of the skill parameters $\alpha_1,\dots,\alpha_n$. 
\end{lemma}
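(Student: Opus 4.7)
The plan is to prove the bound via the standard ``leave-one-out'' perturbation analysis that was developed for entrywise eigenvector/invariant-distribution guarantees in spectral methods. I will condition throughout on $(\alpha_1,\dots,\alpha_n)$, so the randomness is only in the graph $\mathcal{G}(n,p)$ and the game outcomes. First, I would set up the population counterpart of $S$: define $P^\star \in \mathbb{R}^{n\times n}$ with $P^\star(i,j)=\frac{p}{2n}\cdot\frac{\alpha_j}{\alpha_i+\alpha_j}$ for $i\neq j$ and $P^\star(i,i)=1-\sum_{j\neq i}P^\star(i,j)$, and check the two structural facts that drive the argument: (a) $\pi P^\star=\pi$, i.e. the canonically scaled skill vector of \eqref{Eq: Canonically scaled merit parameters} is the stationary distribution of $P^\star$, by the detailed-balance identity $\pi(i)P^\star(i,j)=\pi(j)P^\star(j,i)$; and (b) $\mathbb{E}[S\mid\alpha_1,\dots,\alpha_n]=P^\star$. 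Because $\alpha_i\in[\delta,1]$, the ratios $\alpha_j/(\alpha_i+\alpha_j)$ lie in $[\delta/(1+\delta),1/(1+\delta)]$, so a Cheeger/comparison argument (comparing to the lazy random walk on $\mathcal{G}(n,p)$, whose spectral gap is $\Theta(1)$ under the connectivity threshold $p\gtrsim\log(n)/n$) shows that $P^\star$ has spectral gap $1-\lambda_2(P^\star)=\Omega(\delta^c)$ for a small constant $c$.

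Second, I would control the perturbation $S-P^\star$ in operator norm. Writing $S-P^\star$ as a sum of independent zero-mean random matrices indexed by edges $\{i,j\}$ (and the inner $k$-fold games), each of rank two and with bounded entries, I apply matrix Bernstein to conclude that, with probability at least $1-O(n^{-6})$,
\[
\|S-P^\star\|_2\;\lesssim\;\frac{1}{\delta}\sqrt{\frac{\log n}{npk}},
\]
provided $p\gtrsim\log(n)/(\delta^5 n)$, which is exactly the hypothesis in the lemma. By Weyl's inequality and the population gap from Step~1, $S$ itself has spectral gap bounded below by a constant on this event, so its invariant distribution $\hat\pi_\star$ is well-defined and unique.

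Third — and this is the heart of the argument — I pass from $\ell^2$ to $\ell^\infty$ via leave-one-out. From $\hat\pi_\star S=\hat\pi_\star$ and $\pi P^\star=\pi$, rearrange to get
\[
(\hat\pi_\star-\pi)(I-P^\star)=\hat\pi_\star(S-P^\star),
\]
which, combined with the spectral gap of $P^\star$ and the operator-norm bound of Step~2, gives the $\pi$-weighted $\ell^2$ bound $\|\hat\pi_\star-\pi\|_\pi=\tilde O(\delta^{-1}\sqrt{1/(npk)})$. To upgrade to $\ell^\infty$, for each $m\in[n]$ I construct an auxiliary matrix $S^{(m)}$ that agrees with $S$ everywhere except on the $m$th row and column, where I substitute the population values $P^\star$; let $\hat\pi_\star^{(m)}$ be its stationary distribution. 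The decomposition
\[
|\hat\pi_\star(m)-\pi(m)|\;\leq\;|\hat\pi_\star(m)-\hat\pi_\star^{(m)}(m)|+|\hat\pi_\star^{(m)}(m)-\pi(m)|
\]
then splits the problem into a \emph{stability} term and an \emph{independence} term. The stability term is bounded by $\|\hat\pi_\star-\hat\pi_\star^{(m)}\|_2$, which in turn is controlled by the perturbation $\|S-S^{(m)}\|_2$, itself small because the change is confined to a single row and column. The independence term is the payoff of the construction: because $\hat\pi_\star^{(m)}$ is independent of the games involving player $m$, the scalar quantity $\sum_{j\neq m}\hat\pi_\star^{(m)}(j)(S(j,m)-P^\star(j,m))$ is a sum of independent bounded random variables, and a standard Bernstein inequality gives $|\hat\pi_\star^{(m)}(m)-\pi(m)|=\tilde O(\delta^{-1}\sqrt{\log(n)/(npk)})\cdot\pi(m)$ with probability $1-O(n^{-6})$. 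A union bound over $m\in[n]$ and division by $\|\pi\|_\infty$ (noting $\pi(m)\asymp 1/n$ uniformly since $\alpha_m\in[\delta,1]$) yields the claimed relative $\ell^\infty$ bound with the desired $1-c_5/n^5$ high-probability guarantee.

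The main obstacle will be the leave-one-out step: one must verify that every leave-one-out matrix $S^{(m)}$ simultaneously inherits the spectral gap and concentration properties established for $S$, and that the coupling between $\hat\pi_\star$ and $\hat\pi_\star^{(m)}$ is tight enough not to inflate the rate by more than logarithmic factors. The crucial calibration is choosing the matrix-Bernstein probability $n^{-6}$ (driving the lower bound $p\gtrsim\log(n)/(\delta^5 n)$) so that, even after a union bound over the $n$ leave-one-out chains and over the independence-based Bernstein events, the net failure probability stays $O(n^{-5})$ as required.
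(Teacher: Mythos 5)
First, a point of reference: the paper does not prove this lemma at all --- it is imported verbatim from \cite[Theorem 3.1]{Chenetal2019}, with only a remark that the cited proof uses $\alpha_1,\dots,\alpha_n \in [\delta,1]$ and is conditional on the parameters. So your proposal is an attempt to reconstruct the external proof, and its architecture (population chain with $\pi$ as stationary distribution via detailed balance, operator-norm concentration, spectral gap, leave-one-out to upgrade $\ell^2$ to entrywise control, union bound) is indeed the strategy of the cited work. Two issues, one minor and one substantive. Minor: with the paper's normalization \eqref{Eq: Stochastic Matrix Estimator}, $\E[S \mid \alpha_1,\dots,\alpha_n]$ equals the matrix $D$ of \eqref{Eq: BT matrix Markov kernel}, i.e.\ \emph{without} the factor $p$, since $S$ already rescales $Z$ by $1/(2np)$; your $P^\star$ with the extra $p$ satisfies detailed balance but not your claim (b).

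The substantive gap is that your decomposition cannot produce the $\sqrt{k}$ in the rate. Because you compare $S$ to a \emph{graph-averaged} population matrix, the perturbation $S - \E[S\mid\alpha]$ carries the Bernoulli edge-sampling noise: for $i \neq j$, $\VAR(Z(i,j)\mid\alpha) = p(1-p)w_{ij}^2 + p\,w_{ij}(1-w_{ij})/k = \Theta(p)$ when $p = o(1)$, independently of $k$, so each entry of $S - \E[S\mid\alpha]$ has variance of order $1/(n^2 p)$ with no $k$-improvement. Matrix Bernstein then yields $\|S-\E[S\mid\alpha]\|_2 = O\big(\sqrt{\log n/(np)}\big)$, not the $O\big(\delta^{-1}\sqrt{\log n/(npk)}\big)$ you assert in Step 2, and the identical problem recurs in your leave-one-out ``independence'' term $\sum_{j\neq m}\hat{\pi}_*^{(m)}(j)\big(S(j,m)-P^\star(j,m)\big)$, whose fluctuation is again dominated by the edge indicators. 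Hence your route can only deliver $\tilde{O}(1/\sqrt{np})$, which is strictly weaker than the stated bound whenever $k$ grows --- and the paper genuinely uses the $k$-dependence (e.g., in \clmref{Claim: Bound on Merit Parameter Estimation} and in the factor $1/(\delta^2 p k)$ of \thmref{Thm: MSE Upper Bound}). The cited proof avoids this by conditioning on the realized graph $\G(n,p)=G$: edge-wise detailed balance shows $\pi$ is still the exact stationary distribution of the graph-conditional population chain, so the only additive noise is the game-outcome noise, which averages over the $k$ games per edge and supplies the $1/\sqrt{k}$, while the graph randomness is handled separately through degree concentration and the spectral gap of the Erd{\H o}s-R\'{e}nyi graph. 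As written, Steps 2 and 3 of your proposal assert concentration bounds that are false for the graph-averaged reference matrix, so the argument needs to be rerun relative to the graph-conditional population chain.
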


We remark that the proof of \lemref{Lemma: Relative ell^infty-Loss Bound} in \cite{Chenetal2019} crucially uses the assumption that $\alpha_1,\dots,\alpha_n \in [\delta,1]$. We also note that the conditioning on $\alpha_1,\dots,\alpha_n$ in \lemref{Lemma: Relative ell^infty-Loss Bound} reflects the fact that the authors of \cite{Chenetal2019} consider a non-Bayesian scenario where $\alpha_1,\dots,\alpha_n$ are deterministic (and unknown). In contrast, this work assumes that $\alpha_1,\dots,\alpha_n$ are drawn i.i.d. from a prior PDF $\pdf$.

We now proceed to establishing \propref{Prop: MSE Decomposition} using \lemref{Lemma: Relative ell^infty-Loss Bound}.

\begin{proof}[Proof of \propref{Prop: MSE Decomposition}]
We commence this proof with the following bound on the MSE between $\ests$ and $\pdf$:
\begin{align}
& \E\!\left[\int_{\R}{\left(\ests(x) - \pdf(x)\right)^2\diff{x}}\right] \nonumber \\
& = \E\!\left[\int_{\R}{\left(\ests(x) - \hat{P}_{\alpha^n}^*(x) + \hat{P}_{\alpha^n}^*(x) - \pdf(x)\right)^2\diff{x}}\right] \nonumber \\
& \leq 2 \, \E\!\left[\int_{\R}{\left(\hat{P}_{\alpha^n}^*(x) - \pdf(x)\right)^2\diff{x}}\right] 
+ 2 \, \E\!\left[\int_{\R}{\left(\ests(x) - \hat{P}_{\alpha^n}^*(x)\right)^2\diff{x}}\right] \nonumber \\
& = 2 \, \E\!\left[\int_{\R}{\left(\hat{P}_{\alpha^n}^*(x) - \pdf(x)\right)^2\diff{x}}\right] 
+ 2 \int_{\R}{\E\!\left[\left(\ests(x) - \hat{P}_{\alpha^n}^*(x)\right)^2\right] \diff{x}} \nonumber \\
& = 2 \, \E\!\left[\int_{\R}{\left(\hat{P}_{\alpha^n}^*(x) - \pdf(x)\right)^2\diff{x}}\right] 
+ \frac{2}{n^2 h^2} \int_{\R}{\E\!\left[\left(\sum_{i = 1}^{n}{K\!\left(\frac{\hat{\alpha}_i - x}{h}\right) - K\!\left(\frac{\alpha_i - x}{h}\right)}\right)^{2}\right] \diff{x}} \nonumber \\
& = 2 \, \E\!\left[\int_{\R}{\left(\hat{P}_{\alpha^n}^*(x) - \pdf(x)\right)^2\diff{x}}\right] 
+ \frac{2}{n^2 h^2} \int_{-1}^{2}{\E\!\left[\left(\sum_{i = 1}^{n}{K\!\left(\frac{\hat{\alpha}_i - x}{h}\right) - K\!\left(\frac{\alpha_i - x}{h}\right)}\right)^{ 2}\right] \diff{x}}
\label{Eq: First bound on MSE}
\end{align}
where the second inequality follows from the inequality $(a + b)^2 \leq 2 a^2 + 2 b^2$ for all $a,b \in \R$, the third equality follows from Tonelli's theorem, the fourth equality follows from \eqref{Eq: Robust PR Estimator} and \eqref{Eq: True PR estimator} (where $h$ is general, i.e., not necessarily given by \eqref{Eq: Precise Bandwidth}), and the fifth equality holds because $h \leq 1$, $\alpha_i,\hat{\alpha}_i \in [0,1]$ for all $i \in [n]$ (see \eqref{Eq: Estimates of merit parameters}), and the maps $\R \ni t \mapsto K((\alpha_i - t)/h)$ and $\R \ni t \mapsto K((\hat{\alpha}_i - t)/h)$ have supports contained inside the intervals $[\alpha_i - h,\alpha_i + h]$ and $[\hat{\alpha}_i - h,\hat{\alpha}_i + h]$, respectively, for all $i \in [n]$. We proceed by bounding the second term in \eqref{Eq: First bound on MSE}.

To this end, we fix any $x \in \R$, and define $S_x \subseteq [n]$ to be the set of players $i \in [n]$, whose skill parameters $\alpha_i$, or their estimates $\hat{\alpha}_i$, fall into the small (and diminishing) neighborhood $[x - h,x + h]$:
$$ S_x \triangleq \left\{i \in [n] : \alpha_i \in [x - h,x + h] \enspace \text{or} \enspace \hat{\alpha}_i \in [x - h,x + h] \right\} . $$
Then, we may bound the integrand in the second term of \eqref{Eq: First bound on MSE} as follows:
\begin{align}
\E\!\left[\left(\sum_{i = 1}^{n}{K\!\left(\frac{\hat{\alpha}_i - x}{h}\right) - K\!\left(\frac{\alpha_i - x}{h}\right)}\right)^{ 2}\right] & = \E\!\left[\left(\sum_{i \in S_x}{K\!\left(\frac{\hat{\alpha}_i - x}{h}\right) - K\!\left(\frac{\alpha_i - x}{h}\right)}\right)^{ 2}\right] \nonumber \\
& \leq \E\!\left[\left(\sum_{i \in S_x}{\left|K\!\left(\frac{\hat{\alpha}_i - x}{h}\right) - K\!\left(\frac{\alpha_i - x}{h}\right)\right|}\right)^{ 2}\right] \nonumber \\
& \leq \frac{L_2^2}{h^2} \, \E\!\left[\left(\sum_{i \in S_x}{ \left|\hat{\alpha}_i - \alpha_i\right|}\right)^{ 2}\right] \nonumber \\
& \leq \frac{L_2^2}{h^2} \, \E\!\left[|S_x|^2 \max_{i \in [n]}{\left|\hat{\alpha}_i - \alpha_i\right|^{2}}\right] 
\label{Eq: Intermediate Integrand Bound}
\end{align}
where the first equality holds because the map $\R \ni t \mapsto K((t - x)/h)$ has support contained inside the interval $[x - h,x + h]$, the second inequality follows from the triangle inequality, and the third inequality follows from the fact that the kernel $K$ is $L_2$-Lipschitz continuous. To further upper bound \eqref{Eq: Intermediate Integrand Bound}, we now present three claims.

The first claim is a well-understood auxiliary result that is frequently used in the high-dimensional and non-parametric statistics and theoretical machine learning literature. It says that the intersection of high probability events is itself a high probability event.

\begin{claim}[Intersection of High Probability Events]
\label{claim: Intersection of High Probability Events}
Consider any two events $A_1$ and $A_2$ with probabilities satisfying $\P(A_1) \geq 1 - \varepsilon_1$ and $\P(A_2) \geq 1 - \varepsilon_2$ for any constants $\varepsilon_1,\varepsilon_2 \in [0,1]$. Then, we have:
$$ \P\!\left(A_1 \cap A_2\right) \geq 1 - \varepsilon_1 - \varepsilon_2 \, . $$
\end{claim}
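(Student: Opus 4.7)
The plan is to invoke De Morgan's law to pass to complements and then apply the union bound, which is the most direct route to this kind of probability inequality. Specifically, I will rewrite $A_1 \cap A_2$ as $(A_1^c \cup A_2^c)^c$, so that $\P(A_1 \cap A_2) = 1 - \P(A_1^c \cup A_2^c)$. The assumptions $\P(A_1) \geq 1 - \varepsilon_1$ and $\P(A_2) \geq 1 - \varepsilon_2$ immediately give $\P(A_1^c) \leq \varepsilon_1$ and $\P(A_2^c) \leq \varepsilon_2$.

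Next, I apply countable subadditivity (the union bound) to obtain $\P(A_1^c \cup A_2^c) \leq \P(A_1^c) + \P(A_2^c) \leq \varepsilon_1 + \varepsilon_2$. Substituting this back yields $\P(A_1 \cap A_2) \geq 1 - \varepsilon_1 - \varepsilon_2$, as desired.

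An equivalent one-line proof proceeds via inclusion--exclusion: since $\P(A_1 \cup A_2) \leq 1$, we have $\P(A_1 \cap A_2) = \P(A_1) + \P(A_2) - \P(A_1 \cup A_2) \geq (1 - \varepsilon_1) + (1 - \varepsilon_2) - 1 = 1 - \varepsilon_1 - \varepsilon_2$. There is no real obstacle to this claim; it is a standard, textbook-level fact, and the only ``care'' needed is to note that the bound is vacuous (but still correct) when $\varepsilon_1 + \varepsilon_2 \geq 1$, since probabilities are always nonnegative. The statement is included here purely as a convenient lemma to be invoked repeatedly in the subsequent bounding of $|S_x|$ and $\max_i |\hat{\alpha}_i - \alpha_i|$ in the remainder of the proof of Proposition~2.
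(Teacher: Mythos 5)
Your proof is correct, and your second argument via inclusion--exclusion is exactly the paper's one-line justification; the De Morgan plus union-bound route you lead with is an equivalent rephrasing. Nothing further is needed.
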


\begin{proof}
This is an immediate corollary of the inclusion-exclusion principle. 
\end{proof}

The second claim utilizes \lemref{Lemma: Relative ell^infty-Loss Bound} to show that $\left|\hat{\alpha}_i - \alpha_i\right| = O\big(\!\max\{1/(\delta \sqrt{pk}),1\} \sqrt{\log(n)/n}\big)$ for every $i \in [n]$ with high probability for all sufficiently large $n$.

\begin{claim}[$\ell^{\infty}$-Norm Bound on Skill Parameter Estimation]
\label{Claim: Bound on Merit Parameter Estimation}
There exists a (universal) constant $c_6 > 0$ such that for all sufficiently large $n \in \N$:
$$ \P\!\left(\max_{i \in [n]}{\left|\hat{\alpha}_i - \alpha_i\right|} \leq c_6 \max\!\left\{\frac{1}{\delta \sqrt{p k}},1\right\} \! \sqrt{\frac{\log(n)}{n}}\right) \geq 1 - \frac{c_5 + 1}{n^5} $$
where $c_5 > 0$ is the fixed constant from \lemref{Lemma: Relative ell^infty-Loss Bound}.
\end{claim}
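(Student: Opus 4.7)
The plan is to isolate two high-probability events, bound the target on their intersection, and combine them via \clmref{claim: Intersection of High Probability Events}. The key algebraic observation is that by \eqref{Eq: Canonically scaled merit parameters}, $\pi(i)/\|\pi\|_{\infty} = \alpha_i/M$ where $M \triangleq \max_{j \in [n]}{\alpha_j}$, so $\alpha_i = M \cdot \pi(i)/\|\pi\|_{\infty}$. This identifies the scale factor $M$ mediating between the canonically-scaled parameters (which rank centrality estimates well) and the true $\alpha_i$.

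First, applying \lemref{Lemma: Relative ell^infty-Loss Bound} and taking expectation over $\alpha_1,\dots,\alpha_n$, the event $A_1 \triangleq \{\|\hat{\pi}_* - \pi\|_{\infty}/\|\pi\|_{\infty} \leq \epsilon_0\}$ with $\epsilon_0 \triangleq (c_4/\delta)\sqrt{\log(n)/(npk)}$ satisfies $\P(A_1) \geq 1 - c_5/n^5$; the limit hypothesis $\delta^{-1}(npk)^{-1/2}\log(n)^{1/2} \to 0$ also ensures $\epsilon_0 \leq 1/2$ for large $n$. Second, choose $\epsilon' \triangleq 5\log(n)/(bn)$, which is at most $\epsilon$ by hypothesis so that $\pdf \geq b$ on $[1-\epsilon',1]$ by definition of $\pdfs$. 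Using independence of the $\alpha_j$'s, $\P(M < 1-\epsilon') \leq (1 - b\epsilon')^n \leq e^{-bn\epsilon'} = n^{-5}$, so the event $A_2 \triangleq \{M \geq 1 - \epsilon'\}$ satisfies $\P(A_2) \geq 1 - n^{-5}$; since $b \geq c_3\sqrt{\log(n)/n}$, also $\epsilon' \leq (5/c_3)\sqrt{\log(n)/n}$.

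Third, on $A_1 \cap A_2$ I would decompose
$$ |\hat{\alpha}_i - \alpha_i| \leq \left|\hat{\alpha}_i - \frac{\pi(i)}{\|\pi\|_{\infty}}\right| + (1 - M)\frac{\alpha_i}{M} \leq \left|\hat{\alpha}_i - \frac{\pi(i)}{\|\pi\|_{\infty}}\right| + \epsilon' , $$
using $\alpha_i/M \leq 1$ and $M \geq 1 - \epsilon'$. The first summand is handled by the routine identity $|a/A - b/B| \leq (|a-b| + |A - B|)/A$ with $a = \hat{\pi}_*(i)$, $b = \pi(i)$, $A = \|\hat{\pi}_*\|_{\infty}$, $B = \|\pi\|_{\infty}$, together with $|a-b|, |A-B| \leq \|\hat{\pi}_* - \pi\|_{\infty}$ and $A \geq (1-\epsilon_0)B$ (reverse triangle inequality); this yields $|\hat{\alpha}_i - \pi(i)/\|\pi\|_{\infty}| \leq 2\epsilon_0/(1-\epsilon_0) \leq 4\epsilon_0$. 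Taking a maximum over $i \in [n]$ gives $\max_{i}{|\hat{\alpha}_i - \alpha_i|} \leq (4c_4/(\delta\sqrt{pk}) + 5/c_3)\sqrt{\log(n)/n} \leq c_6 \max\{1/(\delta\sqrt{pk}),1\}\sqrt{\log(n)/n}$ for $c_6 \triangleq 4c_4 + 5/c_3$, and \clmref{claim: Intersection of High Probability Events} delivers $\P(A_1 \cap A_2) \geq 1 - (c_5 + 1)/n^5$.

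The main obstacle is the scale-bookkeeping: translating a \emph{relative} $\ell^{\infty}$ bound on $\pi$ (which lives on the simplex) into an \emph{absolute} $\ell^{\infty}$ bound on $\alpha$. This is what forces the introduction of the auxiliary ``sample-maximum close to $1$'' event $A_2$, and is where the structural lower bound $\pdf \geq b$ near $1$ together with the hypothesis $\epsilon \geq 5\log(n)/(bn)$ enters.
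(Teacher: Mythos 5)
Your proof is correct and follows essentially the same route as the paper: both introduce $\tilde{\alpha}_i = \pi(i)/\|\pi\|_\infty = \alpha_i/\max_j\alpha_j$ as the mediating quantity, bound $\max_i|\hat{\alpha}_i - \tilde{\alpha}_i|$ via the relative $\ell^\infty$-loss event from \lemref{Lemma: Relative ell^infty-Loss Bound} (obtaining $\leq 4\epsilon_0$) and $\max_i|\tilde{\alpha}_i - \alpha_i|$ via the ``sample-maximum near $1$'' event derived from $\pdf \geq b$ on $[1-\epsilon,1]$, and combine with the triangle inequality and \clmref{claim: Intersection of High Probability Events}. The only differences are cosmetic algebra: you use the ratio identity $|a/A - b/B|\leq(|a-b|+|A-B|)/A$ where the paper expands $\hat{\pi}_*(i)/\|\hat{\pi}_*\|_\infty$ directly, and your bound $(1-M)\alpha_i/M \leq \epsilon'$ is slightly tidier than the paper's use of $(1-x)^{-1}\leq 1+2x$.
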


\begin{proof}
We prove this claim in four steps. Firstly, we establish that for all sufficiently large $n \in \N$:
\begin{equation}
\label{Eq: Close to 1}
\P\!\left(\max_{i \in [n]}{\alpha_i} \geq 1 - \frac{5 \log(n)}{b n}\right) \geq 1 - \frac{1}{n^{5}} \, .
\end{equation}
To establish \eqref{Eq: Close to 1}, note that:
\begin{align*}
\P\!\left(\forall i \in [n], \, \alpha_i < 1 - \frac{5 \log(n)}{b n} \right) & = \P\!\left(\alpha_1 < 1 - \frac{5 \log(n)}{b n} \right)^n \\
& = \left(1 - \int_{1 - 5 \log(n)/(b n)}^{1}{\pdf(t) \diff{t}}\right)^n \\
& \leq \left(1 - \frac{5 \log(n)}{n}\right)^n \\
& = \exp\!\left(n\log\!\left(1 - \frac{5 \log(n)}{n}\right)\right) \\
& \leq \frac{1}{n^{5}}
\end{align*}
where the first equality holds because $\alpha_1,\dots,\alpha_n$ are i.i.d., the third inequality holds because we have assumed that $\epsilon \geq 5 \log(n)/(b n)$ and $\pdf$ satisfies the lower bound $\pdf(t) \geq b$ for all $t \in [1-\epsilon,1]$, and for every large enough $n$, the fifth inequality follows from the well-known bound $\log(1 + x) \leq x$ for all $x > -1$. This produces the desired bound \eqref{Eq: Close to 1}.

Secondly, we define the normalized skill parameter random variables:
$$ \forall i \in [n] , \enspace \tilde{\alpha}_i \triangleq \frac{\alpha_i}{\max_{j \in [n]}{\alpha_j}} = \frac{\pi(i)}{\left\|\pi\right\|_{\infty}} $$
where $\pi$ denotes the probability vector of canonically scaled skill parameters in \eqref{Eq: Canonically scaled merit parameters}. It turns out that for all sufficiently large $n \in \N$:
\begin{equation}
\label{Eq: Good approximation of normalized params}
\P\!\left(\forall i \in [n], \, 0 \leq \tilde{\alpha}_i - \alpha_i \leq \frac{10}{c_3} \sqrt{\frac{\log(n)}{n}} \right) \geq 1 - \frac{1}{n^{5}} 
\end{equation}
i.e., the normalized skill parameters are close to the true skill parameters with high probability. To derive \eqref{Eq: Good approximation of normalized params}, suppose that the event in \eqref{Eq: Close to 1} occurs. Then, we have:
$$ 1 - \frac{5 \log(n)}{b n} \leq \max_{i \in [n]}{\alpha_i} \leq 1 $$
which implies that for every $i \in [n]$:
$$ \alpha_i \leq \tilde{\alpha}_i \leq \alpha_i \left(1 - \frac{5 \log(n)}{b n} \right)^{\! -1} \leq \alpha_i \left(1 - \frac{5}{c_3} \sqrt{\frac{\log(n)}{n}} \right)^{\! -1} \leq \alpha_i \left(1 + \frac{10}{c_3} \sqrt{\frac{\log(n)}{n}} \right) $$
where the third inequality follows from the lower bound we have assumed on $b$ in the proposition statement, and the fourth inequality holds for all sufficiently large $n \in \N$, because the bound $(1 - x)^{-1} = 1 + x + x^2(1 - x)^{-1} \leq 1 + x + 2 x^2 \leq 1 + 2x$ holds for any $x \in \big(0,\frac{1}{2}\big]$. Thus, since $\alpha_i \leq 1$ for all $i \in [n]$, we get \eqref{Eq: Good approximation of normalized params}.

Thirdly, we approximate the normalized skill parameters $\tilde{\alpha}_i$ using the estimates $\hat{\alpha}_i$. Recall from \lemref{Lemma: Relative ell^infty-Loss Bound} that for all sufficiently large $n \in \N$:
\begin{equation}
\label{Eq: Chen event} 
\P\!\left(\frac{\left\|\hat{\pi}_* - \pi\right\|_{\infty}}{\left\|\pi\right\|_{\infty}} \leq \frac{c_4}{\delta}\sqrt{\frac{\log(n)}{n p k}}\right) \geq  1 - \frac{c_5}{n^5} 
\end{equation}
where we take expectations with respect to the law of $\alpha_1,\dots,\alpha_n$. Suppose that the event in \eqref{Eq: Chen event} happens. Then, for any $i \in [n]$, notice that for all sufficiently large $n \in \N$:
\begin{align*}
\hat{\alpha}_i & = \frac{\hat{\pi}_*(i)}{\left\|\hat{\pi}_*\right\|_{\infty}} \\
& = \frac{\pi(i) + (\hat{\pi}_*(i) - \pi(i))}{\left\|\pi\right\|_{\infty} + (\left\|\hat{\pi}_*\right\|_{\infty} - \left\|\pi\right\|_{\infty})} \\
& \leq \frac{\pi(i) + \left\|\hat{\pi}_* - \pi\right\|_{\infty}}{\left\|\pi\right\|_{\infty} - \left|\left\|\hat{\pi}_*\right\|_{\infty} - \left\|\pi\right\|_{\infty}\right|} \\
& \leq \frac{\pi(i) + \left\|\hat{\pi}_* - \pi\right\|_{\infty}}{\left\|\pi\right\|_{\infty} - \left\|\hat{\pi}_* - \pi\right\|_{\infty}} \\
& = \left(\tilde{\alpha}_i + \frac{\left\|\hat{\pi}_* - \pi\right\|_{\infty}}{\left\|\pi\right\|_{\infty}}\right) \! \left(1 - \frac{\left\|\hat{\pi}_* - \pi\right\|_{\infty}}{\left\|\pi\right\|_{\infty}}\right)^{\! -1} \\
& \leq \left(\tilde{\alpha}_i + \frac{\left\|\hat{\pi}_* - \pi\right\|_{\infty}}{\left\|\pi\right\|_{\infty}}\right) \! \left(1 + 2 \, \frac{\left\|\hat{\pi}_* - \pi\right\|_{\infty}}{\left\|\pi\right\|_{\infty}}\right) \\
& \leq \tilde{\alpha}_i + 4 \, \frac{\left\|\hat{\pi}_* - \pi\right\|_{\infty}}{\left\|\pi\right\|_{\infty}}
\end{align*}
where the first equality follows from \eqref{Eq: Estimates of merit parameters}, the fourth inequality follows from the (reverse) Minkowski inequality, the sixth inequality holds for all sufficiently large $n \in \N$ because: 1) the event in \eqref{Eq: Chen event} occurs, 2) we have assumed that $\lim_{n \rightarrow \infty}{\delta^{-1} (n p k)^{-1/2} \log(n)^{1/2}} = 0$ in the proposition statement, and 3) we use the bound $(1 - x)^{-1} \leq 1 + 2x$ for $x \in \big(0,\frac{1}{2}\big]$, and the seventh inequality holds because $\tilde{\alpha}_i \leq 1$ for all $i \in [n]$. Likewise, using analogous reasoning, observe that for all sufficiently large $n \in \N$:
\begin{align*}
\hat{\alpha}_i & \geq \frac{\pi(i) - \left\|\hat{\pi}_* - \pi\right\|_{\infty}}{\left\|\pi\right\|_{\infty} + \left\|\hat{\pi}_* - \pi\right\|_{\infty}} \\
& = \left(\tilde{\alpha}_i - \frac{\left\|\hat{\pi}_* - \pi\right\|_{\infty}}{\left\|\pi\right\|_{\infty}}\right) \! \left(1 + \frac{\left\|\hat{\pi}_* - \pi\right\|_{\infty}}{\left\|\pi\right\|_{\infty}}\right)^{\! -1} \\
& \geq \left(\tilde{\alpha}_i - \frac{\left\|\hat{\pi}_* - \pi\right\|_{\infty}}{\left\|\pi\right\|_{\infty}}\right) \! \left(1 - \frac{\left\|\hat{\pi}_* - \pi\right\|_{\infty}}{\left\|\pi\right\|_{\infty}}\right) \\
& \geq \tilde{\alpha}_i - 2 \, \frac{\left\|\hat{\pi}_* - \pi\right\|_{\infty}}{\left\|\pi\right\|_{\infty}} \, .
\end{align*}
Therefore, using \lemref{Lemma: Relative ell^infty-Loss Bound}, we obtain that for all sufficiently large $n \in \N$:
\begin{equation}
\label{Eq: Good approximation of normalized params 2}
\P\!\left(\max_{i \in [n]}{\left|\hat{\alpha}_i - \tilde{\alpha}_i\right|} \leq 4 \, \frac{\left\|\hat{\pi}_* - \pi\right\|_{\infty}}{\left\|\pi\right\|_{\infty}} \right) \geq 1 - \frac{c_5}{n^5} \, .    
\end{equation}

Finally, we combine \eqref{Eq: Good approximation of normalized params} and \eqref{Eq: Good approximation of normalized params 2} together. Suppose that the events in both \eqref{Eq: Good approximation of normalized params} and \eqref{Eq: Chen event} occur. Then, we get:
\begin{align*}
\max_{i \in [n]}{\left|\hat{\alpha}_i - \alpha_i\right|} & \leq \max_{i \in [n]}{\left|\hat{\alpha}_i - \tilde{\alpha}_i\right|} + \max_{i \in [n]}{\left|\tilde{\alpha}_i - \alpha_i\right|} \\
& \leq 4 \, \frac{\left\|\hat{\pi}_* - \pi\right\|_{\infty}}{\left\|\pi\right\|_{\infty}} + \frac{10}{c_3} \sqrt{\frac{\log(n)}{n}} \\
& \leq \frac{4 c_4}{\delta} \sqrt{\frac{\log(n)}{n p k}} + \frac{10}{c_3} \sqrt{\frac{\log(n)}{n}} \\
& = O\!\left(\!\max\!\left\{\frac{1}{\delta \sqrt{p k}},1\right\} \! \sqrt{\frac{\log(n)}{n}}\right)
\end{align*}
where the first inequality follows from the triangle inequality, the second inequality follows from \eqref{Eq: Good approximation of normalized params} and \eqref{Eq: Good approximation of normalized params 2}, and the third inequality follows from \eqref{Eq: Chen event} (or \lemref{Lemma: Relative ell^infty-Loss Bound}). Using \clmref{claim: Intersection of High Probability Events}, this produces the desired bound:
$$ \P\!\left(\max_{i \in [n]}{\left|\hat{\alpha}_i - \alpha_i\right|} =  O\!\left(\max\!\left\{\frac{1}{\delta \sqrt{p k}},1\right\} \! \sqrt{\frac{\log(n)}{n}}\right)\right) \geq 1 - \frac{c_5 + 1}{n^5} $$
for all sufficiently large $n$.
\end{proof}

Finally, our third claim uses \clmref{Claim: Bound on Merit Parameter Estimation} to argue that with high probability, the cardinality of $S_x$ is \emph{not} too large.

\begin{claim}[Cardinality Bound for $S_x$]
\label{Claim: Cardinality Bound}
There exists a sufficiently large (universal) constant $c_7 > 0$ such that for every sufficiently large $n \in \N$, we have:
$$ \P\!\left(|S_x| \leq c_7 B h n \right) \geq 1 - \frac{c_5 + 2}{n^{5}} \, . $$
\end{claim}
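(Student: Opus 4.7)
The plan is to decompose $|S_x|$ into a part controlled by the true skill parameters and a part controlled by the estimates, reduce both to a single binomial count via \clmref{Claim: Bound on Merit Parameter Estimation}, and then apply a concentration inequality for sums of i.i.d.\ Bernoulli random variables.

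First, unpacking the definition of $S_x$, if $i \in S_x$ then either $\alpha_i \in [x-h,x+h]$, or $\hat{\alpha}_i \in [x-h,x+h]$. I would condition on the event $E$ from \clmref{Claim: Bound on Merit Parameter Estimation}, which holds with probability at least $1-(c_5+1)/n^5$, that
\[ \max_{i \in [n]}{\left|\hat{\alpha}_i - \alpha_i\right|} \leq \Delta \triangleq c_6 \max\!\left\{\frac{1}{\delta\sqrt{pk}},\,1\right\} \sqrt{\frac{\log(n)}{n}}. \]
The hypothesis $h = \Omega\big(\max\{1/(\delta\sqrt{pk}),1\}\sqrt{\log(n)/n}\big)$ of \propref{Prop: MSE Decomposition} guarantees that $\Delta \leq h$ for all sufficiently large $n$, so on $E$ the condition $\hat{\alpha}_i \in [x-h,x+h]$ together with $|\hat{\alpha}_i - \alpha_i| \leq \Delta \leq h$ forces $\alpha_i \in [x-2h,x+2h]$. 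Since $\alpha_i \in [x-h,x+h]$ also trivially implies $\alpha_i \in [x-2h,x+2h]$, on $E$ we have
\[ |S_x| \leq M_x \triangleq \sum_{i = 1}^{n} \I\!\left\{\alpha_i \in [x-2h, x+2h]\right\}. \]

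The random variable $M_x$ is a sum of $n$ i.i.d.\ Bernoulli random variables with success probability $p_x = \int_{x-2h}^{x+2h} \pdf(t)\diff{t} \leq 4 B h$ (using the uniform bound $\pdf \leq B$), so $\E[M_x] \leq 4Bhn$. A multiplicative Chernoff bound (or Bernstein's inequality, see \lemref{Lemma: Bernstein's Inequality} in Appendix \ref{App: Concentration of Measure Inequalities}) then yields $M_x \leq c_7' B h n$ with probability at least $1 - 1/n^5$ for a sufficiently large constant $c_7'$. Finally, combining this event with $E$ via \clmref{claim: Intersection of High Probability Events} gives $|S_x| \leq c_7' B h n$ with probability at least $1 - (c_5+2)/n^5$, establishing the claim with $c_7 = c_7'$.

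The main subtlety will be ensuring that the deviation term in the concentration inequality does not exceed the mean $4Bhn$, since otherwise the bound would scale worse than $Bhn$. This requires $Bhn$ to dominate $\log(n)$, which is ensured precisely by the bandwidth hypothesis: since $h$ is at least of order $\sqrt{\log(n)/n}$ (up to the $\delta,p,k$-dependent factor), one has $Bhn \gtrsim \sqrt{n\log(n)} \gg \log(n)$, which is the regime where Bernstein's inequality controls $M_x$ at scale $O(Bhn)$ with failure probability $n^{-5}$. Were the lower bound on $h$ any weaker, the tail term would dominate and the claimed $O(Bhn)$ scaling would fail.
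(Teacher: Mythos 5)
Your approach is essentially the same as the paper's: condition on the high-probability event in \clmref{Claim: Bound on Merit Parameter Estimation}, enlarge the interval $[x-h,x+h]$ by a multiple of $h$ that absorbs the estimation error, bound $|S_x|$ by a binomial count over the enlarged interval, and use the bandwidth lower bound to keep the concentration deviation of order $Bhn$. Your use of Bernstein's inequality in place of the paper's Hoeffding's inequality is a harmless variation, and you correctly identify $Bhn \gg \log(n)$ as the crux of the second step.

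There is, however, one genuine gap. Writing $\Delta \triangleq c_6 \max\{1/(\delta\sqrt{pk}),1\}\sqrt{\log(n)/n}$ for the threshold from \clmref{Claim: Bound on Merit Parameter Estimation}, you assert that the hypothesis $h = \Omega(\Delta)$ guarantees $\Delta \leq h$ for all large $n$, and then enlarge to the fixed interval $[x - 2h, x + 2h]$. But $h = \Omega(\Delta)$ only means $h \geq c'\Delta$ for some constant $c' > 0$, which could be strictly less than $1$: for instance, $h = \tfrac{1}{2}\max\{1/(\delta\sqrt{pk}),1\}\sqrt{\log(n)/n}$ satisfies the hypothesis of \propref{Prop: MSE Decomposition} while $\Delta > h$ whenever $c_6 > \tfrac{1}{2}$. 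The paper handles this by introducing a constant $\tau > 1$, \emph{not} fixed at $2$, chosen so that $(\tau - 1)h \geq \Delta$ for all large $n$ (possible precisely because $h = \Omega(\Delta)$), and then bounding $|S_x|$ by the number of indices $i$ with $\alpha_i \in [x - \tau h, x + \tau h]$. The fix to your proof is mechanical: replace $2h$ by $\tau h$ with $\tau$ chosen this way, and the rest of your argument, with $p_x \leq 2\tau B h$ in place of $4Bh$, carries through unchanged.
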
  

\begin{proof}
First, for any constant $\tau > 1$ (to be chosen later), we define $N_h \in [n]\cup\!\{0\}$ to be the discrete random variable representing the number of players $i \in [n]$ for whom $\alpha_i$ belongs to the interval $[x - \tau h,x + \tau h]$:
$$ N_h \triangleq \sum_{i = 1}^{n}{\I\!\left\{\alpha_i \in [x - \tau h,x + \tau h]\right\}} \, . $$
Then, fix any $\varepsilon \geq 0$, and observe using \lemref{Lemma: Hoeffding's Inequality} in Appendix \ref{App: Concentration of Measure Inequalities} that:
$$ \P\!\left(\frac{1}{n} N_h - \P\!\left(\alpha_1 \in [x - \tau h,x + \tau h]\right) > \varepsilon\right) \leq \exp\!\left(- 2 n \varepsilon^2\right) $$
which uses the fact that $\big\{\I\{\alpha_i \in [x - \tau h,x + \tau h]\} : i \in [n]\big\}$ are i.i.d. Bernoulli random variables with mean $\P(\alpha_1 \in [x - \tau h,x + \tau h])$, since $\alpha_1,\dots,\alpha_n$ are drawn i.i.d. from $\pdf$. At this point, letting $\varepsilon = \sqrt{5 \log(n) / (2 n)}$ yields:
\begin{equation}
\label{Eq: Intermediate Hoeffding Bound}
\P\!\left(\frac{1}{n} N_h - \P\!\left(\alpha_1 \in [x - \tau h,x + \tau h]\right) > \sqrt{\frac{5 \log(n)}{2 n}} \right) \leq \exp\!\left(-\frac{5 n \log(n)}{n} \right) = \frac{1}{n^{5}} \, .
\end{equation}
Next, recall that $\pdf \in \pdfs$ is uniformly bounded (almost everywhere) by $B > 0$, i.e., $\pdf(t) \leq B$ for all $t \in \R$.
Using this bound, we obtain:
\begin{equation}
\label{Eq: Upper Bound on Expectation}
\P\!\left(\alpha_1 \in [x - \tau h,x + \tau h]\right) = \int_{x - \tau h}^{x + \tau h}{\pdf(t) \diff{t}} \leq 2 B \tau h \, .
\end{equation}
Hence, combining \eqref{Eq: Intermediate Hoeffding Bound} and \eqref{Eq: Upper Bound on Expectation}, we have that with probability at least $1 - n^{-5}$:
\begin{align*}
\frac{1}{n} N_h & \leq \P\!\left(\alpha_1 \in [x - \tau h,x + \tau h]\right) + \sqrt{\frac{5 \log(n)}{2 n}} \\
& \leq 2 B \tau h + \sqrt{\frac{5 \log(n)}{2 n}} \, .
\end{align*}
Equivalently, we have derived the following bound: 
\begin{equation}
\label{Eq: Proportion of Players in Small Neighborhood}
\P\!\left(N_h \leq 2 B \tau h n + \sqrt{\frac{5}{2}\, n \log(n)} \right) \geq 1 - \frac{1}{n^{5}} \, .
\end{equation}

Now, recall from the proposition statement that $h = \Omega\big(\!\max\{1/(\delta \sqrt{pk}),1\} \sqrt{\log(n)/n}\big)$. Hence, we may choose $\tau > 1$ large enough so that for all sufficiently large $n$, we have:
\begin{equation}
\label{Eq: Domination of h}
(\tau - 1) h \geq c_6 \max\!\left\{\frac{1}{\delta \sqrt{pk}},1\right\} \! \sqrt{\frac{\log(n)}{n}}
\end{equation}
where $c_6 > 0$ is the constant from \clmref{Claim: Bound on Merit Parameter Estimation}. Assume that both the events in \eqref{Eq: Proportion of Players in Small Neighborhood} and \clmref{Claim: Bound on Merit Parameter Estimation} occur. Then, for any $i \in [n]$, if $\alpha_i \in [x - h,x + h]$, then we trivially have $\alpha_i \in [x - \tau h,x + \tau h]$ since $\tau > 1$. On the other hand, if $\hat{\alpha}_i \in [x - h,x + h]$, then we have $\alpha_i \in [x - \tau h,x + \tau h]$, because $|\hat{\alpha}_i - \alpha_i| \leq c_6 \max\{1/(\delta \sqrt{pk}),1\} \sqrt{\log(n)/n} \leq (\tau - 1) h$ by \clmref{Claim: Bound on Merit Parameter Estimation} and \eqref{Eq: Domination of h}. Thus, we get:
$$ |S_x| \leq N_{h} $$
with $\tau$ chosen according to \eqref{Eq: Domination of h}. Applying \clmref{claim: Intersection of High Probability Events}, \clmref{Claim: Bound on Merit Parameter Estimation}, and \eqref{Eq: Proportion of Players in Small Neighborhood} together yields:
$$ \P\!\left(|S_x| \leq 2 B \tau h n + \sqrt{\frac{5}{2}\, n \log(n)} \right) \geq 1 - \frac{c_5 + 2}{n^{5}} $$
for every sufficiently large $n$. Lastly, let $c_7 > 2 \tau$ be a sufficiently large constant so that for all sufficiently large $n$:
$$ (c_7 - 2 \tau) B h \geq \sqrt{\frac{5 \log(n)}{2 n}} $$
which is consistent with our assumption that $h = \Omega\big(\!\max\{1/(\delta \sqrt{pk}),1\} \sqrt{\log(n)/n}\big)$. Therefore, we may write:
$$ \P\!\left(|S_x| \leq c_7 B h n \right) \geq 1 - \frac{c_5 + 2}{n^{5}} $$
for every sufficiently large $n$, which completes the proof.
\end{proof}

Having developed \clmref{Claim: Cardinality Bound}, we are now in a position to upper bound \eqref{Eq: Intermediate Integrand Bound}. For any sufficiently large $n$, define the event in \clmref{Claim: Cardinality Bound} as:
$$ A_n \triangleq \left\{|S_x| \leq c_7 B h n\right\} . $$
Then, observe that:
\begin{align}
& \E\!\left[|S_x|^2 \max_{i \in [n]}{\left|\hat{\alpha}_i - \alpha_i\right|^{2}}\right] \nonumber \\
& \qquad = \E\!\left[|S_x|^2 \max_{i \in [n]}{\left|\hat{\alpha}_i - \alpha_i\right|^{2}}\middle|A_n\right] \P(A_n) 
+ \E\!\left[|S_x|^2 \max_{i \in [n]}{\left|\hat{\alpha}_i - \alpha_i\right|^{2}}\middle|A_n^c\right] \P(A_n^c) \nonumber \\
 & \qquad \leq c_7^2 B^2 h^2 n^2 \, \E\!\left[\max_{i \in [n]}{\left|\hat{\alpha}_i - \alpha_i\right|^{2}}\middle|A_n\right] \P(A_n) + \frac{c_5 + 2}{n^3} \nonumber \\
& \qquad\leq c_7^2 B^2 h^2 n^2 \bigg(\E\!\left[\max_{i \in [n]}{\left|\hat{\alpha}_i - \alpha_i\right|^{2}}\middle|A_n\right] \P(A_n) 
+ \underbrace{\E\!\left[\max_{i \in [n]}{\left|\hat{\alpha}_i - \alpha_i\right|^{2}}\middle|A_n^c\right] \P(A_n^c)}_{\geq \, 0}\bigg) + \frac{c_5 + 2}{n^3}\nonumber \\
&\qquad = c_7^2 B^2 h^2 n^2 \, \E\!\left[\max_{i \in [n]}{\left|\hat{\alpha}_i - \alpha_i\right|^{2}}\right] + \frac{c_5 + 2}{n^3} 
\label{Eq: Intermediate Integrand Bound 2}
\end{align}
where the first and fourth equalities follow from the tower property, and the second inequality uses \clmref{Claim: Cardinality Bound} and the facts that $|S_x| \leq n$ and $|\hat{\alpha}_i - \alpha_i| \leq 1$ for all $i \in [n]$ (see \eqref{Eq: Estimates of merit parameters}). Plugging \eqref{Eq: Intermediate Integrand Bound 2} into \eqref{Eq: Intermediate Integrand Bound} produces:
$$ \E\!\left[\left(\sum_{i = 1}^{n}{K\!\left(\frac{\hat{\alpha}_i - x}{h}\right) - K\!\left(\frac{\alpha_i - x}{h}\right)}\right)^{ 2}\right] \leq \frac{L_2^2}{h^2} \left(c_7^2 B^2 h^2 n^2 \, \E\!\left[\max_{i \in [n]}{\left|\hat{\alpha}_i - \alpha_i\right|^{2}}\right] + \frac{c_5 + 2}{n^3}\right) . $$
We can then substitute this bound into \eqref{Eq: First bound on MSE} to obtain:
\begin{align*}
& \E\!\left[\int_{\R}{\left(\ests(x) - \pdf(x)\right)^2\diff{x}}\right]  \\
& \quad \leq 2 \, \E\!\left[\int_{\R}{\left(\hat{P}_{\alpha^n}^*(x) - \pdf(x)\right)^2\diff{x}}\right] 
+ \frac{2 L_2^2}{n^2 h^4} \int_{-1}^{2}{c_7^2 B^2 h^2 n^2 \, \E\!\left[\max_{i \in [n]}{\left|\hat{\alpha}_i - \alpha_i\right|^{2}}\right] + \frac{c_5 + 2}{n^3} \, \diff{x}} \\
& \quad = 2 \, \E\!\left[\int_{\R}{\left(\hat{P}_{\alpha^n}^*(x) - \pdf(x)\right)^2\diff{x}}\right] 
+ \frac{6 L_2^2}{n^2 h^4} \left(c_7^2 B^2 h^2 n^2 \, \E\!\left[\max_{i \in [n]}{\left|\hat{\alpha}_i - \alpha_i\right|^{2}}\right] + \frac{c_5 + 2}{n^3} \right)  \\
& \quad = 2 \, \E\!\left[\int_{\R}{\left(\hat{P}_{\alpha^n}^*(x) - \pdf(x)\right)^2\diff{x}}\right] 
+ \frac{6 c_7^2 B^2 L_2^2}{h^2} \, \E\!\left[\max_{i \in [n]}{\left|\hat{\alpha}_i - \alpha_i\right|^{2}}\right] + \frac{6 (c_5 + 2) L_2^2}{n^5 h^4} 
\end{align*}
for all sufficiently large $n$. This completes the proof after letting $c_8 = 6  c_7^2$ and $c_9 = 6 (c_5 + 2)$.
\end{proof}

\section{MSE upper bound for skill density estimation}
\label{MSE Upper Bound for Skill Density Estimation}

In this appendix, we will provide some intuition for Algorithm \ref{Algorithm: Density Estimation} in subsection \ref{App: Intuition}, and then prove \thmref{Thm: MSE Upper Bound} in subsection \ref{Proof of MSE Upper Bound}. 

\subsection{Intuition for Algorithm \ref{Algorithm: Density Estimation}}
\label{App: Intuition}

We briefly explain the intuition behind each of the two steps of Algorithm \ref{Algorithm: Density Estimation}. As we mentioned, step 1 of Algorithm \ref{Algorithm: Density Estimation} is inspired by the (spectral) rank centrality algorithm of \cite{NegahbanOhShah2012,NegahbanOhShah2017}. To understand this stage, define the row stochastic matrix $D \in \S_{n \times n}$, whose $(i,j)$th element is given by the BTL skill  parameters:
\begin{equation}
\label{Eq: BT matrix Markov kernel}
\forall i,j \in [n], \enspace D(i,j) \triangleq 
\begin{cases}
\displaystyle{\frac{1}{2 n}\left(\frac{\alpha_j}{\alpha_i + \alpha_j}\right)} \, , & i \neq j  \\
\displaystyle{1 - \frac{1}{2 n} \sum_{r \in [n]\backslash\!\{i\}}{\frac{\alpha_r}{\alpha_i + \alpha_r}}} \, , & i = j
\end{cases}
\end{equation}
where $D(i,j) + D(j,i) = (2n)^{-1}$ for all $i,j \in [n]$ such that $i \neq j$. Note that it is straightforward to verify from \eqref{Eq: BT matrix Markov kernel} that $D(i,i) \geq 0$ for all $i \in [n]$, because:
\begin{align}
\forall i \in [n], \enspace D(i,i) & = 1 - \frac{1}{2 n} \sum_{r \in [n]\backslash\!\{i\}}{\frac{\alpha_r}{\alpha_i + \alpha_r}} \nonumber \\
& = \frac{1}{2n} \sum_{r \in [n]\backslash\!\{i\}}{\frac{2n}{n-1} - \frac{\alpha_r}{\alpha_i + \alpha_r}} \nonumber \\
& = \frac{1}{2n} \sum_{r \in [n]\backslash\!\{i\}}{\frac{n + 1}{n - 1} + \frac{\alpha_i}{\alpha_i + \alpha_r}} \nonumber \\
& = \frac{n+1}{2n} + \frac{1}{2n}\sum_{r \in [n]\backslash\!\{i\}}{\frac{\alpha_i}{\alpha_i + \alpha_r}} \geq 0 \, .
\label{Eq: Diagonal of S}
\end{align}
We will construe $D$ as the transition probability matrix of a (time-homogeneous) discrete-time Markov chain on the state space $[n]$ of players. 
Next, following the crucial observation of \cite{NegahbanOhShah2017}, notice using \eqref{Eq: Canonically scaled merit parameters} and \eqref{Eq: BT matrix Markov kernel} that the ensuing \emph{detailed balance conditions} are satisfied:
\begin{equation}
\forall i,j \in [n], \enspace \pi(i) D(i,j) = \pi(j) D(j,i) 
\end{equation}
where $\pi \in \S_n$ are the canonically scaled skill parameters in \eqref{Eq: Canonically scaled merit parameters}. This implies that $D$ defines a \emph{reversible} Markov chain with invariant distribution $\pi = \pi D$ (see, e.g., \cite[Proposition 1.19]{LevinPeresWilmer2009}). Moreover, this Markov chain is ergodic (i.e., irreducible and aperiodic) because $D > 0$ entry-wise, which means that $\pi$ is the unique invariant distribution of $D$. This general idea that canonically scaled skill parameters of the BTL model form an invariant distribution of a reversible Markov chain is known as ``rank centrality'' \cite{NegahbanOhShah2012,NegahbanOhShah2017}. 

Step 1 of Algorithm \ref{Algorithm: Density Estimation} estimates the BTL skill parameters $\alpha_1,\dots,\alpha_n$ using $\hat{\alpha}_1,\dots,\hat{\alpha}_n$ in \eqref{Eq: Estimates of merit parameters} by first estimating the canonically scaled skill parameters $\pi \in \S_n$ in \eqref{Eq: Canonically scaled merit parameters}. To estimate $\pi$, it is reasonable to first produce an estimate of $D$ that is itself a row stochastic matrix (with high probability), and then utilize the corresponding invariant distribution as our estimate of $\pi$. Notice that:
\begin{equation}
\E\!\left[S \middle| \alpha_1,\dots,\alpha_n\right] = D
\end{equation}
where $S \in \R^{n \times n}$ is defined in \eqref{Eq: Stochastic Matrix Estimator}. Hence, $S$ (which is row stochastic with high probability, as shown in subsection \ref{Proof of Prop Estimator Stochastic Matrix}) can be construed as our estimator of the Markov kernel $D$. As a consequence, the invariant distribution $\hat{\pi}_* \in \S_n$ of $S$ in \eqref{Eq: Empirical Stat Dist} can be perceived as our estimator of $\pi$.

Step 2 of Algorithm \ref{Algorithm: Density Estimation} constructs an estimator for the unknown PDF $\pdf$ of interest using the skill parameter estimates $\hat{\alpha}_1,\dots,\hat{\alpha}_n$ obtained from Step 1. Clearly, if we had access to the true i.i.d. samples  $\alpha_1,\dots,\alpha_n$ from $\pdf$, then we could use the vanilla PR kernel density estimator in \eqref{Eq: True PR estimator} to estimate $\pdf$, because it is known to be minimax optimal for appropriate choices of bandwidth $h$ and kernel function $K$, cf. \cite{Tsybakov2009,Wasserman2019}. However, we do not have access to these true samples. Thus, we utilize the estimates $\hat{\alpha}_1,\dots,\hat{\alpha}_n$ to construct an analogous estimator in \eqref{Eq: Robust PR Estimator}, which is the output of (Step 2 of) Algorithm \ref{Algorithm: Density Estimation}. Intuitively, we expect this estimator to perform well, because $\hat{\alpha}_1,\dots,\hat{\alpha}_n$ should be ``close'' to $\alpha_1,\dots,\alpha_n$ when $n$ is large.

Finally, we also briefly explain our reasons behind the assumptions we imposed on the non-parametric class of skill densities $\pdfs$. By restricting $\pdf$ to $\pdfs$, we are able to perform tractable non-asymptotic analysis of Algorithm \ref{Algorithm: Density Estimation}. Indeed, the H\"{o}lder class and boundedness assumptions of $\pdfs$ are standard in the non-parametric density estimation literature (see, e.g., \cite[Section 1.2]{Tsybakov2009}). Moreover, since the BTL likelihoods in \eqref{Eq: BT model} are invariant to scaling the skill parameters, we may assume without loss of generality that $\alpha_1,\dots,\alpha_n \leq 1$. 
On the other hand, assuming that $\alpha_1,\dots,\alpha_n \geq \delta$ is equivalent to the \emph{condition number} (or dynamic range) bound:
 \begin{equation}
\max_{i,j \in [n]}{\frac{\alpha_i}{\alpha_j}} \leq \frac{1}{\delta} \, ,
 \end{equation}
which is very often exploited in the BTL-related literature, cf. \cite[Equations (1.5) and (1.6)]{SimonsYao1999}, \cite[Theorems 1 and 2]{NegahbanOhShah2017}, \cite[Equation (2.4)]{Chenetal2019}. Note that $\pdf$ having support in $[\delta,1]$ corresponds precisely to the condition that $\alpha_1,\dots,\alpha_n \in [\delta,1]$. 

\subsection{Proof of \thmref{Thm: MSE Upper Bound}}
\label{Proof of MSE Upper Bound}

To establish \thmref{Thm: MSE Upper Bound}, we first present a well-known lemma which conveys the tradeoff between the bias and variance of the classical PR kernel density estimator $\hat{P}_{\alpha^n}^*$ defined in \eqref{Eq: True PR estimator}, cf. \cite[Propositions 1.2 and 1.4]{Tsybakov2009}, \cite[Lemmata 3 and 4]{Wasserman2019}.

\begin{lemma}[Bias-Variance Tradeoff {\cite{Tsybakov2009,Wasserman2019}}]
\label{Lemma: Bias-Variance Tradeoff}
For any $\pdf \in \pdfs$, any kernel $K : [-1,1] \rightarrow \R$ of order $s = \lceil \eta \rceil - 1$, any $n \in \N$, and any bandwidth $h \in (0,1]$, we have:
$$ \E\!\left[\int_{\R}{\left(\hat{P}_{\alpha^n}^*(x) - \pdf(x)\right)^2 \diff{x}}\right] \leq \frac{1}{nh} \left( \int_{-1}^{1}{K(x)^2 \diff{x}}\right) + 3 h^{2 \eta} \left(\frac{L_1}{s!} \int_{-1}^{1}{|x|^{\eta} |K(x)| \diff{x}} \right)^{\! 2} $$
where $\alpha_1,\dots,\alpha_n$ are i.i.d. with distribution $\pdf$.
\end{lemma}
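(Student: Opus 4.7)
This is the classical bias-variance analysis of a kernel density estimator and follows standard arguments in \cite{Tsybakov2009,Wasserman2019}. Using linearity of expectation and Tonelli's theorem, the mean integrated squared error decomposes as
$$ \E\!\left[\int_{\R}\!\left(\hat{P}_{\alpha^n}^*(x) - \pdf(x)\right)^{\!2}\!\diff x\right] = \int_{\R}\!\VAR\!\left(\hat{P}_{\alpha^n}^*(x)\right)\diff x \; + \; \int_{\R}\!\left(\E\!\left[\hat{P}_{\alpha^n}^*(x)\right] - \pdf(x)\right)^{\!2}\!\diff x . $$
I would bound the two terms separately.

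\textbf{Variance term.} Since $\hat{P}_{\alpha^n}^*(x)$ is the average of $n$ i.i.d. contributions $\frac{1}{h} K((\alpha_i - x)/h)$, one has $\VAR(\hat{P}_{\alpha^n}^*(x)) \leq \frac{1}{n h^2}\E[K((\alpha_1 - x)/h)^2]$. Writing the expectation as an integral against $\pdf$ and switching the order of integration via Fubini, then making the substitution $u = (t-x)/h$, yields
$$ \int_{\R}\!\VAR\!\left(\hat{P}_{\alpha^n}^*(x)\right)\diff x \leq \frac{1}{n h^2}\int_{\R}\!\pdf(t)\int_{\R}\!K\!\left(\tfrac{t-x}{h}\right)^{\!2}\!\diff x\,\diff t = \frac{1}{nh}\int_{-1}^{1}\!K(u)^2\,\diff u, $$
which matches the first term in the claim.

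\textbf{Bias term.} A change of variables gives $\E[\hat{P}_{\alpha^n}^*(x)] = \int K(u)\,\pdf(x + uh)\,\diff u$, so the pointwise bias is $b(x) = \int K(u)[\pdf(x + uh) - \pdf(x)]\,\diff u$. For $x$ such that $[x-h, x+h] \subseteq [\delta,1]$, I would Taylor-expand $\pdf$ at $x$ to order $s = \lceil\eta\rceil - 1$ with the integral/Lagrange form of the remainder. The order-$s$ kernel property $\int u^i K(u)\,\diff u = 0$ for $i \in [s]$ annihilates the polynomial terms, leaving only the remainder
$$ b(x) = \int_{-1}^{1}\! K(u)\,\frac{(uh)^s}{s!}\bigl[\pdf^{(s)}(x + \theta(u) uh) - \pdf^{(s)}(x)\bigr]\diff u $$
for some $\theta(u) \in [0,1]$. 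The $\eta$-H\"{o}lder bound $|\pdf^{(s)}(x + \theta uh) - \pdf^{(s)}(x)| \leq L_1 |uh|^{\eta - s}$ then produces the pointwise estimate $|b(x)| \leq \frac{L_1 h^\eta}{s!}\int_{-1}^{1}|u|^\eta |K(u)|\,\diff u$. Squaring and integrating, and noting that the integrand $(\E[\hat{P}_{\alpha^n}^*(x)] - \pdf(x))^2$ is supported in $[\delta - h,\, 1 + h]$ (an interval of length at most $1 - \delta + 2h \leq 3$ since $\delta \in (0,1)$ and $h \leq 1$), I would absorb this length into the leading constant, obtaining the claimed factor of $3$.

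\textbf{Main obstacle.} The subtle point is the boundary: for $x$ within distance $h$ of $\delta$ or $1$, the Taylor expansion above is not directly valid because $\pdf$ is only smooth on $[\delta,1]$. However, the boundary strip has Lebesgue measure $O(h)$, and on it the pointwise bias is bounded crudely by $\sup_x |\E[\hat{P}^*(x)]| + \sup_x \pdf(x) \leq B(\|K\|_{L^1} + 1)$. This contributes an $O(h)$ correction that is dominated by the interior $h^{2\eta}$ term for the range of $h$ of interest, and can be absorbed into the constant $3$ in the statement; Tsybakov handles this by extending $\pdf$ suitably outside $[\delta,1]$, which gives the clean form quoted here.
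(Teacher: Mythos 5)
Your decomposition into variance and squared bias, the variance bound $\frac{1}{nh}\int K^2$, and the interior pointwise bias bound via Taylor expansion with the order-$s$ kernel property and the $\eta$-H\"older condition are exactly the standard Tsybakov argument that the paper invokes (it simply cites \cite[Propositions 1.2 and 1.4]{Tsybakov2009} and observes that the bias is supported in $[-1,2]$, which supplies the factor $3$). Up to this point you are on the same track.

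The genuine gap is in your ``Main obstacle'' paragraph. You correctly observe that the pointwise Taylor/H\"older bound fails on a boundary strip of width $O(h)$, on which you can only bound $|b(x)|$ crudely by a constant, and you note this contributes $O(h)$ to the integrated squared bias. But you then assert that this $O(h)$ term is ``dominated by the interior $h^{2\eta}$ term for the range of $h$ of interest and can be absorbed into the constant $3$.'' That is backwards: $h^{2\eta} = o(h)$ precisely when $\eta > 1/2$, which is the entire regime of interest in this paper ($\eta = 1$, and ultimately $\eta \to \infty$). So if a boundary term of order $h$ were really present, it would dominate the claimed $O(h^{2\eta})$ bias bound and the lemma as written would fail. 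The paper's (implicit) resolution is different: it applies Tsybakov's pointwise bias bound on the whole of $[-1,2]$, which is valid only if the H\"older condition is read as holding on all of $\R$ (so that $\pdf$, extended by zero, is itself in the H\"older class on $\R$), and then the factor $3$ is just the length of $[-1,2]$ with no boundary correction at all. Your final sentence gestures at this (``Tsybakov handles this by extending $\pdf$ suitably''), but the preceding ``absorb the $O(h)$ into the constant'' step is not a valid substitute; the two routes are not equivalent, and only the global-H\"older reading yields the clean $O(h^{2\eta})$ bias bound claimed. (Incidentally, this reading is in some tension with the requirement in the definition of $\pdfs$ that $\pdf \geq b > 0$ on $[1-\epsilon,1]$ while $\pdf \equiv 0$ outside $[\delta,1]$, which rules out global H\"older continuity; but in any case the fix cannot be the one you propose.)
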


In \lemref{Lemma: Bias-Variance Tradeoff}, it is well-known that the first term captures the variance of $\hat{P}_{\alpha^n}^*$, and the second term bounds the squared bias of $\hat{P}_{\alpha^n}^*$. Specifically, as shown in \cite{Tsybakov2009,Wasserman2019}, the bound on the variance term uses the property 
that the kernel is square-integrable, and the bound on the bias term uses the other properties 
in the definition of a kernel as well as the H\"{o}lder class assumption on $\pdf$ (outlined earlier). Furthermore, we remark that the bound on the bias term in \lemref{Lemma: Bias-Variance Tradeoff} follows from \cite[Proposition 1.2]{Tsybakov2009} by noting that $\pdf$ has its support in the interval $[0,1]$ and $\hat{P}_{\alpha^n}^*$ has its support in the interval $[-1,2]$ (because the kernel $K$ has its support in $[-1,1]$ and the bandwidth $h \leq 1$). In fact, the length of the interval $[0,1] \cup [-1,2] = [-1,2]$ is what gives rise to the constant $3$ in \lemref{Lemma: Bias-Variance Tradeoff}.

We next prove \thmref{Thm: MSE Upper Bound} using Lemmata \ref{Lemma: Relative ell^infty-Loss Bound} and \ref{Lemma: Bias-Variance Tradeoff} and \propref{Prop: MSE Decomposition}.

\begin{proof}[Proof of \thmref{Thm: MSE Upper Bound}]
We begin by recalling the result of \propref{Prop: MSE Decomposition}. There exist sufficiently large constants $c_8,c_9 > 0$ such that for all sufficiently large $n \in \N$:
$$ \E\!\left[\int_{\R}{\!\!\left(\!\ests(x) - \pdf(x)\!\right)^{\! 2}\!\!\diff{x}}\right] \!\leq 2 \E\!\left[\int_{\R}{\!\!\left(\!\hat{P}_{\alpha^n}^*(x) - \pdf(x)\!\right)^{\! 2}\!\!\diff{x}}\right] + \frac{c_8 B^2 L_2^2}{h^2} \E\!\left[\max_{i \in [n]}{\left|\hat{\alpha}_i - \alpha_i\right|^{2}}\right] + \frac{c_9 L_2^2}{n^5 h^4} $$
where we assume that the bandwidth $h \in (0,1]$ satisfies $h = \Omega\big(\!\max\{1/(\delta \sqrt{p k}),1\} \sqrt{\log(n)/n}\big)$ and that $\lim_{n \rightarrow \infty}{\delta^{-1} (n p k)^{-1/2} \log(n)^{1/2}} = 0$. Now define the constants:
\begin{align*}
c_{10} & = 2 \int_{-1}^{1}{K(x)^2 \diff{x}} \, , \\
c_{11} & = 6 \left(\frac{L_1}{s!} \int_{-1}^{1}{|x|^{\eta} |K(x)| \diff{x}} \right)^{\! 2} ,
\end{align*}
which depend on the parameters $\eta$ and $L_1$ (that define the non-parametric class of PDFs 
$\pdfs$)
and on the kernel $K:[-1,1] \rightarrow \R$. Then, applying \lemref{Lemma: Bias-Variance Tradeoff} to the first term of the inequality in \propref{Prop: MSE Decomposition}, we obtain:
\begin{equation}
\label{Eq: MSE Decomp plus BV tradeoff}
\E\!\left[\int_{\R}{\left(\ests(x) - \pdf(x)\right)^{\! 2} \diff{x}}\right] \leq \frac{c_{10}}{nh} + c_{11} h^{2 \eta} + \frac{c_8 B^2 L_2^2}{h^2} \, \E\!\left[\max_{i \in [n]}{\left|\hat{\alpha}_i - \alpha_i\right|^{2}}\right] + \frac{c_9 L_2^2}{n^5 h^4}  
\end{equation}
for all sufficiently large $n$. We next upper bound the $\E\big[\!\max_{i \in [n]}{|\hat{\alpha}_i - \alpha_i|^{2}}\big]$ term in \eqref{Eq: MSE Decomp plus BV tradeoff}. To this end, define the event:
$$ A \triangleq \left\{\max_{i \in [n]}{\left|\hat{\alpha}_i - \alpha_i\right|} \leq c_6 \max\!\left\{\frac{1}{\delta \sqrt{p k}},1\right\} \! \sqrt{\frac{\log(n)}{n}}\right\} $$
using the constant $c_6 > 0$ from \clmref{Claim: Bound on Merit Parameter Estimation} in the proof of \propref{Prop: MSE Decomposition} in subsection \ref{App: MSE decomposition}, and recall from \clmref{Claim: Bound on Merit Parameter Estimation} that there exist constants $c_5,c_6 > 0$ such that for all sufficiently large $n \in \N$:
\begin{equation}
\label{Eq: Main claim restatement}
\P\!\left(A\right) \geq 1 - \frac{c_5 + 1}{n^5} \, . 
\end{equation}
Now observe that for all sufficiently large $n$:
\begin{align}
\E\!\left[\max_{i \in [n]}{\left|\hat{\alpha}_i - \alpha_i\right|^{2}}\right] & = \E\!\left[\max_{i \in [n]}{\left|\hat{\alpha}_i - \alpha_i\right|^{2}}\middle|A\right] \P(A) + \E\!\left[\max_{i \in [n]}{\left|\hat{\alpha}_i - \alpha_i\right|^{2}}\middle|A^{c}\right] \P(A^{c}) \nonumber \\
& \leq c_6^2 \max\!\left\{\frac{1}{\delta^2 p k},1\right\} \frac{\log(n)}{n} + \frac{c_5 + 1}{n^5} \nonumber \\
& \leq 2 c_6^2 \max\!\left\{\frac{1}{\delta^2 p k},1\right\} \frac{\log(n)}{n}
\label{Eq: Intermediate Infinity Norm of Merit Estimate Bound}
\end{align} 
where the first equality uses the law of total expectation, the second inequality follows from \eqref{Eq: Main claim restatement} and the fact that $|\hat{\alpha}_i - \alpha_i| \leq 1$ for all $i \in [n]$ (see \eqref{Eq: Estimates of merit parameters}), and the final inequality holds for all sufficiently large $n$.

Substituting \eqref{Eq: Intermediate Infinity Norm of Merit Estimate Bound} into \eqref{Eq: MSE Decomp plus BV tradeoff} produces:
\begin{equation}
\label{Eq: Pre-exttremization bound}
\E\!\left[\int_{\R}{\left(\ests(x) - \pdf(x)\right)^{\! 2} \diff{x}}\right] \leq \frac{c_{10}}{nh} + c_{11} h^{2 \eta} + \frac{2 c_8 c_6^2 B^2 L_2^2}{h^2} \, \max\!\left\{\frac{1}{\delta^2 p k},1\right\} \frac{\log(n)}{n} + \frac{c_9 L_2^2}{n^5 h^4}
\end{equation} 
for all sufficiently large $n$. All that remains is to minimize this bound over the choice of $h$ and show that \eqref{Eq: Precise Bandwidth} provides the optimal bound. Since the first three terms on the right hand side of \eqref{Eq: Pre-exttremization bound} will dominate the fourth term, we focus on optimizing these three terms. Notice that the second term is monotone increasing in $h$, while the first and third terms are monotone decreasing in $h$. So, the optimal scaling of $h$ with $n$ can be obtained by either balancing the first and second terms, or balancing the second and third terms. To decide which pair of terms to balance, let us temporarily neglect $\delta$, $p$, and $k$. Then, if we balance the first and second terms, we get (see, e.g., \cite[Section 1.2.3]{Tsybakov2009}):
$$ \frac{c_{10}}{nh} = c_{11} h^{2 \eta} \quad \Leftrightarrow \quad h = \Theta\!\left(n^{-\frac{1}{2 \eta + 1}}\right) $$
which implies that the right hand side of \eqref{Eq: Pre-exttremization bound} is $\tilde{\Theta}\big(n^{-(2 \eta - 1)/(2 \eta + 1)}\big)$. On the other hand, if we balance the second and third terms, we get:
\begin{align*}
& c_{11} h^{2 \eta} = \frac{2 c_8 c_6^2 B^2 L_2^2}{h^2} \, \max\!\left\{\frac{1}{\delta^2 p k},1\right\} \frac{\log(n)}{n} \\
& \qquad \qquad \qquad \qquad \Leftrightarrow \quad h = \Theta\!\left(\max\!\left\{\frac{1}{\delta^{\frac{1}{\eta + 1}} (p k)^{\frac{1}{2 \eta + 2}}},1\right\} \left(\frac{\log(n)}{n}\right)^{\frac{1}{2 \eta + 2}} \right) 
\end{align*}
which implies that the right hand side of \eqref{Eq: Pre-exttremization bound} is $\tilde{\Theta}\big(n^{- \eta/(\eta + 1)}\big)$. Since $\frac{\eta}{\eta + 1} > \frac{2 \eta - 1}{2 \eta + 1}$ for all $\eta > 0$, balancing the second and third terms yields the tighter bound on the right hand side of \eqref{Eq: Pre-exttremization bound}.

We conclude this proof by explicitly balancing the second and third terms, and computing the precise resulting expression on the right hand side of \eqref{Eq: Pre-exttremization bound}. For any constant $\gamma > 0$, let the bandwidth be as defined in \eqref{Eq: Precise Bandwidth}:
$$ h = \gamma \max\!\left\{\frac{1}{\delta^{\frac{1}{\eta + 1}} (p k)^{\frac{1}{2 \eta + 2}}},1\right\} \left(\frac{\log(n)}{n}\right)^{\frac{1}{2 \eta + 2}} . $$
(Note that it is straightforward to verify that the condition $h = \Omega\big(\!\max\{1/(\delta \sqrt{p k}),1\} \sqrt{\log(n)/n}\big)$ is satisfied by \eqref{Eq: Precise Bandwidth}. Indeed, since we know that $\lim_{n \rightarrow \infty}{\delta^{-1} (n p k)^{-1/2} \log(n)^{1/2}} = 0$, we also have $\lim_{n \rightarrow \infty}{\max\{1/(\delta \sqrt{p k}),1\} \sqrt{\log(n)/n}} = 0$. So, we must have $\max\{1/(\delta \sqrt{p k}),1\} \sqrt{\log(n)/n} = O\big((\max\{1/(\delta \sqrt{p k}),1\} \sqrt{\log(n)/n})^{1/(\eta + 1)}\big)$.) Then, the terms on the right hand side of \eqref{Eq: Pre-exttremization bound} can be written as:
\begin{align*}
\frac{c_{10}}{nh} & = \frac{c_{10}}{\gamma} \min\!\left\{\delta^{\frac{1}{\eta + 1}} (p k)^{\frac{1}{2 \eta + 2}},1\right\} \log(n)^{-\frac{1}{2 \eta + 2}} n^{-\frac{2 \eta + 1}{2 \eta + 2}} \, , \\
c_{11} h^{2 \eta} & = c_{11} \gamma^{2 \eta} \max\!\left\{\delta^{-\frac{2\eta}{\eta + 1}} (p k)^{-\frac{\eta}{\eta + 1}},1\right\} \log(n)^{\frac{\eta}{\eta + 1}} n^{-\frac{\eta}{\eta + 1}} \, , \\
\frac{2 c_8 c_6^2 B^2 L_2^2}{h^2} \, \max\!\left\{\frac{1}{\delta^2 p k},1\right\} \frac{\log(n)}{n} & = \frac{2 c_8 c_6^2 B^2 L_2^2}{\gamma^2}  \max\!\left\{\delta^{-\frac{2 \eta}{\eta + 1}} (pk)^{-\frac{\eta}{\eta + 1}},1\right\} \log(n)^{\frac{\eta}{\eta + 1}} n^{-\frac{\eta}{\eta + 1}} \, , \\
\frac{c_9 L_2^2}{n^5 h^4} & = \frac{c_9 L_2^2}{\gamma^4}  \min\!\left\{\delta^{\frac{4}{\eta + 1}} (p k)^{\frac{2}{\eta + 1}},1\right\} \log(n)^{-\frac{2}{\eta + 1}}  n^{-\frac{5\eta + 3}{\eta + 1}} \, .
\end{align*}
Clearly, the second and third terms are balanced, and dominate the first and fourth terms on the right hand side of \eqref{Eq: Pre-exttremization bound} as $n$ grows. Since $\gamma$, $\eta$, $B$, and $L_2$ are constant parameters that do not depend on $n$, we have that for all sufficiently large $n \in \N$:
$$ \E\!\left[\int_{\R}{\left(\ests(x) - \pdf(x)\right)^{\! 2} \diff{x}}\right] \leq c_{12} \max\!\left\{ \left(\frac{1}{\delta^2 p k}\right)^{\frac{\eta}{\eta + 1}},1\right\} \left(\frac{\log(n)}{n}\right)^{\frac{\eta}{\eta + 1}} $$
where $c_{12} > 0$ is a sufficiently large constant that depends on $\gamma$, $\eta$, $B$, $L_1$, $L_2$, and the kernel $K$. This completes the proof.
\end{proof}

\section{Minimax lower bounds via generalized Fano's method}
\label{App: Minimax lower bound via generalized Fano's method}

In this appendix, we establish the minimax bounds in Theorems \ref{Thm: Minimax Relative l^infty-Risk} and \ref{Thm: Minimax l^1-Risk}. In order to simplify the exposition, we first present the generalized Fano's method in subsection \ref{Generalized Fano's method}, derive three useful auxiliary lemmata in subsection \ref{Auxiliary lemmata 2}, and then present the proofs of Theorems \ref{Thm: Minimax Relative l^infty-Risk} and \ref{Thm: Minimax l^1-Risk} in subsections \ref{Proof of Thm Minimax Relative l^infty-Risk} and \ref{Proof of Thm Minimax l^1-Risk}, respectively. 

\subsection{Generalized Fano's method}
\label{Generalized Fano's method}

A canonical approach to obtaining minimax lower bounds in non-parametric estimation problems is the so called \emph{Fano's method}, which was introduced in \cite{Khasminskii1979,IbragimovKhasminskii1977} (also see, e.g., \cite{Yu1997} and \cite[Section 2.7.1]{Tsybakov2009} for modern treatments). Fano's method proceeds by first lower bounding minimax risk by a Bayes risk, where all the prior probability mass is placed over a suitably chosen (and large) finite set of parameters in the (non-parametric or infinite-dimensional) parameter space, then lower bounding this Bayes risk using the probability of error of a multiple hypothesis testing problem, and finally, lower bounding this probability of error using the well-known \emph{Fano's inequality} from information theory (cf. \cite[Theorem 2.10.1]{CoverThomas2006}). In the problem of estimating \eqref{Eq: Canonically scaled merit parameters} based on $Z$, the parameter space of the minimax risks in Theorems \ref{Thm: Minimax Relative l^infty-Risk} and \ref{Thm: Minimax l^1-Risk} is the infinite-dimensional family of PDFs $\pdfs$. For simplicity and analytical tractability, instead of directly applying Fano's method to this large parameter space $\pdfs$, which would involve constructing a prior distribution over some judiciously chosen finite subset of $\pdfs$, we first obtain lower bounds on the minimax risks in Theorems \ref{Thm: Minimax Relative l^infty-Risk} and \ref{Thm: Minimax l^1-Risk} in terms of Bayes risks. In particular, as discussed earlier, we set $\pdf = \unif([\delta,1]) \in \pdfs$ throughout subsections \ref{Auxiliary lemmata 2}, \ref{Proof of Thm Minimax Relative l^infty-Risk}, and \ref{Proof of Thm Minimax l^1-Risk}, so that $\alpha_1,\dots,\alpha_n$ are i.i.d. $\pdf = \unif([\delta,1])$. Hence, $\P(\cdot)$ denotes the joint probability law of $\alpha_1,\dots,\alpha_n$, $\{Z_m(i,j) : i,j \in [n], \, i < j, \, m \in  [k]\}$, and $\G(n,p)$ with $\pdf = \unif([\delta,1])$ in the sequel, and $\E[\cdot]$ denotes the corresponding expectation operator. This yields the following lower bound on the minimax relative $\ell^{q}$-norm risk for any $q \in [1,+\infty]$:
\begin{equation}
\label{Eq: Bayes Risk Lower Bound}
\inf_{\hat{\pi}}{\sup_{\pdf \in \pdfs}{\E_{\pdf}\!\left[\frac{\left\|\hat{\pi} - \pi\right\|_{q}}{\left\|\pi\right\|_{q}}\right]}} \geq
\inf_{\hat{\pi}}{\E\!\left[\frac{\left\|\hat{\pi} - \pi\right\|_{q}}{\left\|\pi\right\|_{q}}\right]}
\end{equation}
where the infima are over all (measurable) randomized estimators $\hat{\pi} \in \S_n$ of the canonically scaled skill parameters $\pi$ based on the observation matrix $Z$, and $\E_{P_{\alpha}}[\cdot]$ denotes the expectation operator with respect to general (not necessarily uniform) $\pdf$. Clearly, letting $q = +\infty$ and $q = 1$ yield the minimax problems in Theorems \ref{Thm: Minimax Relative l^infty-Risk} and \ref{Thm: Minimax l^1-Risk}, respectively. Therefore, we can focus on the simpler problem of lower bounding the Bayes risks on the right hand side of \eqref{Eq: Bayes Risk Lower Bound} for $q \in \{1,+\infty\}$.

Unfortunately, while Fano's method is very effective at lower bounding non-parametric minimax risks, it cannot lower bound Bayes risks where the parameter space is not a discrete and finite set, because the classical Fano's inequality only holds for discrete and finite parameter sets, cf. \cite[Theorem 2.10.1]{CoverThomas2006}. To remedy this dearth of Fano-based techniques to lower bound Bayes risks where the parameter space is a continuum, the so called \emph{generalized Fano's method} has been developed in the recent literature \cite{Zhang2006,ChenGuntuboyinaZhang2016,XuRaginsky2017}. One of the first results in this line of work was a generalization of Fano's inequality to the \textit{continuum Fano inequality} in \cite[Proposition 2]{DuchiWainwright2013}, which had useful consequences for minimax estimation with a specific zero-one valued loss function \cite[Section 3]{DuchiWainwright2013}. The techniques of \cite{DuchiWainwright2013} have been vastly generalized in \cite{ChenGuntuboyinaZhang2016} and \cite{XuRaginsky2017} to obtain lower bounds on Bayes risks in terms of \emph{$f$-informativity}, cf. \cite{Csiszar1972}, and conditional mutual information (with auxiliary random variables), respectively. In this paper, we will utilize the key result in \cite[Theorem 1, Equation (6)]{XuRaginsky2017}. The lemma below presents the result in \cite[Theorem 1, Equation (6)]{XuRaginsky2017} specialized to our relative $\ell^{q}$-loss setting. 

\begin{lemma}[Generalized Fano's Method {\cite[Theorem 1]{XuRaginsky2017}}]
\label{Lemma: Generalized Fano's Method}
For any $q \in [1,+\infty]$, the Bayes risk on the right hand side of \eqref{Eq: Bayes Risk Lower Bound} is lower bounded by:
$$ \inf_{\hat{\pi}}{\E\!\left[\frac{\left\|\hat{\pi} - \pi\right\|_{q}}{\left\|\pi\right\|_{q}}\right]} \geq \sup_{t > 0}{ \, t \!\left(1 - \frac{I(\pi ; Z) + \log(2)}{\log(1/\L_q(t))}\right)} $$
where we define the \emph{small ball probability} $\L_q(\cdot)$ as (cf. \cite[Equation (2)]{XuRaginsky2017}):
\begin{equation}
\label{Eq: Small Ball Prob Def}
\forall t > 0, \enspace \L_q(t) \triangleq \sup_{\nu \in \S_n}{\P\!\left(\frac{\left\|\pi - \nu\right\|_q}{\left\|\pi\right\|_q} \leq t\right)} \, , 
\end{equation}
and $I(\pi ; Z)$ denotes the mutual information (defined in \eqref{Eq: MI Def}) between the canonically scaled skill parameters $\pi$ (defined in \eqref{Eq: Canonically scaled merit parameters}) and the observation matrix $Z$ (defined in \eqref{Eq: Observation matrix}).
\end{lemma}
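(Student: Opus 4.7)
The plan is to reduce the Bayes risk lower bound to a binary Fano-type inequality via Markov's inequality and the data processing inequality for KL divergence, in the style of \cite{XuRaginsky2017}. For any fixed $t > 0$ and any (potentially randomized) estimator $\hat{\pi}(Z, U)$, where $U$ denotes internal randomness independent of everything else, Markov's inequality applied to the nonnegative random variable $\|\hat{\pi} - \pi\|_{q}/\|\pi\|_{q}$ gives
$$ \E\!\left[\frac{\|\hat{\pi} - \pi\|_{q}}{\|\pi\|_{q}}\right] \geq t \left(1 - p_t\right), \quad p_t \triangleq \P\!\left(\frac{\|\hat{\pi} - \pi\|_{q}}{\|\pi\|_{q}} \leq t\right). $$
The task thus reduces to establishing the hit-probability bound $p_t \leq (I(\pi;Z) + \log(2))/\log(1/\L_q(t))$, after which one takes the supremum over $t > 0$.

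To bound $p_t$, I would introduce the event $A = \{\|\hat{\pi}(Z, U) - \pi\|_{q}/\|\pi\|_{q} \leq t\}$ and compare its probability under two measures. Under the joint law $P_{\pi, Z, U}$, $\P(A) = p_t$. Under the product law $P_{\pi} \otimes P_{Z, U}$, conditioning on $(Z, U)$ fixes $\hat{\pi}(Z, U)$ to a deterministic element $\nu \in \S_n$, so the definition \eqref{Eq: Small Ball Prob Def} of $\L_q$ yields
$$ \P_{\otimes}(A) = \E_{Z, U}\!\left[\P_{\pi}\!\left(\frac{\|\hat{\pi}(Z,U) - \pi\|_{q}}{\|\pi\|_{q}} \leq t \,\middle|\, Z, U\right)\right] \leq \sup_{\nu \in \S_n}{\P\!\left(\frac{\|\pi - \nu\|_{q}}{\|\pi\|_{q}} \leq t\right)} = \L_q(t). $$
This is the step requiring the most care: under the product law $\pi$ is independent of $(Z, U)$, so the estimator's internal randomness cannot lower the ``effective'' small-ball probability below $\L_q(t)$.

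The data processing inequality for KL divergence, applied to the coarsening by the two-point partition $\{A, A^c\}$, then produces
$$ d(p_t, \P_{\otimes}(A)) \leq D(P_{\pi, Z, U} \,\|\, P_{\pi} \otimes P_{Z, U}) = I(\pi;Z, U) = I(\pi;Z), $$
where $d(a, b) \triangleq a\log(a/b) + (1-a)\log((1-a)/(1-b))$ is the binary KL divergence, and the last equality uses the independence of $U$ from $(\pi, Z)$ together with the chain rule. Finally, the elementary inequality $d(a, b) \geq a \log(1/b) - \log(2)$, which follows from the bound $H(a) \leq \log(2)$ on the binary entropy and the observation that $(1 - a)\log(1/(1-b)) \geq 0$, combined with the monotonicity $\log(1/\P_{\otimes}(A)) \geq \log(1/\L_q(t))$, gives
$$ p_t \log(1/\L_q(t)) - \log(2) \leq I(\pi;Z), $$
which is precisely the desired bound on $p_t$. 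Combining with the Markov step and taking the supremum over $t > 0$ completes the argument. The main subtlety is the small-ball comparison under the product measure; once that is in hand, the rest is a routine chain of elementary information-theoretic manipulations.
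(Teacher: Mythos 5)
The paper does not prove this lemma; it simply cites it as Theorem~1 of \cite{XuRaginsky2017} specialized to the relative $\ell^q$-loss setting. Your proposal reconstructs the Xu--Raginsky argument from scratch, and it is correct: the Markov step $\E[X]\geq t\,\P(X\geq t)\geq t(1-p_t)$ is sound; the comparison of $\P(A)$ under the joint and product laws correctly yields $\P_\otimes(A)\leq\L_q(t)$ by conditioning on $(Z,U)$; the data-processing inequality for KL divergence applied to the partition $\{A,A^c\}$ gives $d(p_t,\P_\otimes(A))\leq D(P_{\pi,Z,U}\,\|\,P_\pi\otimes P_{Z,U})=I(\pi;Z,U)=I(\pi;Z)$ (the last equality using independence of $U$); and the elementary bound $d(a,b)\geq a\log(1/b)-\log 2$, combined with $\log(1/\P_\otimes(A))\geq\log(1/\L_q(t))$, closes the argument. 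The only implicit assumption you should state is that $\L_q(t)<1$ so that $\log(1/\L_q(t))>0$; in the degenerate case $\L_q(t)=1$ the claimed bound is vacuous (it is $-\infty$), so nothing is lost. In short, you have supplied the proof that the paper outsources to the reference.
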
 

We remark that several variants of \lemref{Lemma: Generalized Fano's Method} exist in the literature, such as \cite[Theorem 6.1]{Zhang2006} and \cite[Remark 10, Corollary 12(i)]{ChenGuntuboyinaZhang2016}. As expounded in \cite{ChenGuntuboyinaZhang2016}, in order to compute lower bounds such as that in \lemref{Lemma: Generalized Fano's Method}, we need to establish two things:
\begin{enumerate}
\item Tight upper bounds on the mutual information $I(\pi ; Z)$.
\item Tight upper bounds on the small ball probability $\L_q(t)$.
\end{enumerate}
We have already derived an upper bound on $I(\pi ; Z)$ in \propref{Prop: Upper Bound on MI} using the covering number argument presented in subsection \ref{App: Covering numbers}. Next, 
we prove upper bounds on the small ball probability $\L_q(t)$ for $q \in \{1,+\infty\}$.

\subsection{Upper bounds on small ball probability}
\label{Auxiliary lemmata 2}

As noted in both \cite{ChenGuntuboyinaZhang2016} and \cite{XuRaginsky2017}, there is no general recipe for obtaining upper bounds on $\L_q(t)$. So, we develop our bounds via direct computation. To this end, the ensuing lemma presents an upper bound on the mode of the joint PDF of $\pi$, or more precisely, the joint PDF of:
\begin{equation}
\label{Eq: Projected pi}
\tilde{\pi} \triangleq (\pi(1),\dots,\pi(n-1))
\end{equation}
with respect to the Lebesgue measure on $\R^{n-1}$, which excludes $\pi(n)$, because $\pi(n) = 1 - \pi(1) - \cdots - \pi(n-1)$.

\begin{lemma}[Bound on Mode of Joint PDF of $\tilde{\pi}$]
\label{Lemma: Bound on Mode of Joint PDF of pi}
Let the joint PDF of $\tilde{\pi}$ with respect to the Lebesgue measure on $\R^{n-1}$ be denoted $P_{\tilde{\pi}}$. The mode of $P_{\tilde{\pi}}$ is upper bounded by:
$$ \esssup_{\tau \in \R^{n-1}}{\, P_{\tilde{\pi}}(\tau)} \leq \frac{n^{n-1}}{(1-\delta)^n} $$
where $\esssup$ denotes the essential supremum.
\end{lemma}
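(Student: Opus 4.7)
The plan is to obtain $P_{\tilde{\pi}}$ explicitly via a change of variables from $(\alpha_1,\dots,\alpha_n)$, integrate out a nuisance coordinate, and then use a simple pigeonhole bound on the remaining integration range.

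First I would introduce the auxiliary variable $S \triangleq \alpha_1 + \cdots + \alpha_n$ and consider the map $(\alpha_1,\dots,\alpha_n) \mapsto (\pi(1),\dots,\pi(n-1),S)$, whose inverse is $\alpha_i = S\pi(i)$ for $i \in [n-1]$ and $\alpha_n = S(1 - \pi(1) - \cdots - \pi(n-1))$. Since $\alpha_1,\dots,\alpha_n$ are i.i.d.\ $\unif([\delta,1])$ under our assumption in this section, their joint density is the constant $(1-\delta)^{-n}$ on $[\delta,1]^n$, so the only nontrivial step is to compute the Jacobian of this transformation.

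Next I would compute the Jacobian determinant. In block form, the Jacobian of $(\alpha_1,\dots,\alpha_n)$ with respect to $(\pi(1),\dots,\pi(n-1),S)$ is
\[
J = \begin{pmatrix} S\, I_{n-1} & v \\ -S\, \mathbf{1}^{\T} & \pi(n) \end{pmatrix},
\]
where $v = (\pi(1),\dots,\pi(n-1))^{\T}$. Using the Schur complement formula, $|\det(J)| = S^{n-1}\bigl(\pi(n) + \sum_{i=1}^{n-1}\pi(i)\bigr) = S^{n-1}$, because the $\pi(i)$'s sum to $1$. Therefore the joint density of $(\tilde{\pi}, S)$ on its support is $(1-\delta)^{-n} s^{n-1}$, and marginalizing gives
\[
P_{\tilde{\pi}}(\tau) = \frac{1}{(1-\delta)^n} \int_{a(\tau)}^{b(\tau)} s^{n-1}\,\diff s = \frac{b(\tau)^n - a(\tau)^n}{n\,(1-\delta)^n},
\]
where $a(\tau) = \max_{i\in[n]} \delta/\pi(i)$ and $b(\tau) = \min_{i\in[n]} 1/\pi(i)$ are obtained by requiring $s\pi(i) \in [\delta,1]$ for every $i$ (with $\pi(n) = 1 - \tau_1 - \cdots - \tau_{n-1}$).

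Finally, to obtain the stated bound on the mode, I would use the pigeonhole inequality $\max_{i\in[n]} \pi(i) \geq 1/n$, which immediately gives $b(\tau) \leq 1/\max_i \pi(i) \leq n$. Combining this with the trivial bound $a(\tau) \geq 0$ yields
\[
P_{\tilde{\pi}}(\tau) \leq \frac{b(\tau)^n}{n\,(1-\delta)^n} \leq \frac{n^{n-1}}{(1-\delta)^n}
\]
for almost every $\tau$ in the support of $\tilde{\pi}$ (and $P_{\tilde{\pi}}(\tau) = 0$ elsewhere), completing the proof. I do not anticipate any serious obstacle: the main care is in correctly setting up the Jacobian and checking the Schur complement computation, and in verifying that the integration limits $a(\tau), b(\tau)$ arise precisely from the hypercube constraints on $\alpha$.
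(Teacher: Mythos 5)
Your proof is correct and follows essentially the same approach as the paper: change of variables from $(\alpha_1,\dots,\alpha_n)$ to $(\tilde{\pi}, S)$, compute the Jacobian, then marginalize over $S$. The one genuine improvement is your use of the Schur complement identity $\det J = \det(\sigma I_{n-1})\bigl(\pi(n) + \mathbf{1}^{\T}\sigma^{-1}I_{n-1} v\bigr) = \sigma^{n-1}$, which replaces the paper's considerably longer route through Laplace cofactor expansion and the eigenvalues of the Frobenius companion matrix of $1 + t + \cdots + t^{n}$; your computation is cleaner and arguably the ``right'' way to see that the determinant collapses to $\sigma^{n-1}$. In the marginalization step you integrate over the exact support $[a(\tau), b(\tau)]$ and invoke the pigeonhole bound $\max_i \pi(i) \geq 1/n$ to get $b(\tau) \leq n$, whereas the paper simply upper-bounds the integral by enlarging the domain to $[n\delta, n]$; both routes give the same final constant, so this is cosmetic.
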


\begin{proof}
First, consider the map $h : [\delta,1]^n \rightarrow \image(h)$:
$$ \forall \beta = (\beta_1,\dots,\beta_n) \in [\delta,1]^n, \enspace h(\beta) \triangleq \left(\frac{\beta_1}{\sum_{i = 1}^{n}{\beta_i}},\dots,\frac{\beta_{n-1}}{\sum_{i = 1}^{n}{\beta_i}},\sum_{i = 1}^{n}{\beta_i}\right) $$
where $\image(h) \triangleq \big\{(\tau_1,\dots,\tau_{n-1},\sigma) \in
\big[\frac{\delta}{n - 1 + \delta}, \frac{1}{1 + \delta(n-1)}\big]^{n-1} \times [n\delta,n] : \exists \, \beta_1,\dots,\beta_n \in [\delta,1], \, \sigma = \sum_{j = 1}^{n}{\beta_j}\text{ and }\forall i \allowbreak \in [n-1], \, \tau_i = \beta_i/\sigma\big\}$ denotes the range (or image) of $h$. Clearly, we have $h(\alpha_1,\dots,\alpha_n) = (\tilde{\pi},\alpha_1 + \cdots + \alpha_n)$ using \eqref{Eq: Canonically scaled merit parameters} and \eqref{Eq: Projected pi}. Furthermore, $h$ is a bijection with inverse function $h^{-1} : \image(h) \rightarrow [\delta,1]^n$:
$$ \forall (\tau_1,\dots,\tau_{n-1},\sigma) \in \image(h) , \enspace h^{-1}(\tau_1,\dots,\tau_{n-1},\sigma) = \left(\sigma \tau_1,\dots,\sigma \tau_{n-1},\sigma\!\left(1 - \sum_{i = 1}^{n-1}{\tau_i}\right)\right) . $$
By direct evaluation, the \emph{Jacobian matrix} of $h^{-1}$, denoted $\nabla h^{-1} : \image(h) \rightarrow \R^{n \times n}$, is:
$$ \forall (\tau_1,\dots,\tau_{n-1},\sigma) \in \image(h) , \enspace \left[\nabla h^{-1}\right]_{i,j} = 
\begin{cases}
\sigma \I\{i = j\} \, , & i,j \in [n-1] \\
-\sigma \, , & i = n, \, j \in [n-1] \\
\tau_i \, , & i \in [n-1], \, j = n \\
\displaystyle{1 - \sum_{i = 1}^{n-1}{\tau_i}} \, , & i = j = n
\end{cases}
$$
where $[\nabla h^{-1}]_{i,j}$ denotes the $(i,j)$th entry of the matrix $\nabla h^{-1}$ for $i,j \in [n]$. (Note that $\nabla h^{-1}$ is also well-defined on the boundary of $\image(h)$, because there exists an open set containing $\image(h)$ such that the first partial derivatives of $h^{-1}$ exist on this open set.) Now define the successive sub-matrices:
$$ \forall r \in \{0,1,\dots,n-2\}, \enspace M_{n-r} \triangleq \left[
\begin{array}{ccccc}
1 & 0 & \cdots & 0 & \tau_{r+1} \\
0 & 1 & \cdots & 0 & \tau_{r+2} \\
\vdots & \vdots & \ddots & \vdots & \vdots \\
0 & 0 & \cdots & 1 & \tau_{n-1} \\
-1 & -1 & \cdots & -1 & 1 - \sum_{i = 1}^{n-1}{\tau_i}
\end{array}
\right] \in \R^{(n-r)\times(n-r)} $$
where $M_n$ is closely related to $\nabla h^{-1}$ (as shown below in \eqref{Eq: Multilinear}), and let the transpose of the \emph{Frobenius companion matrix} of the monic polynomial $q_n(t) = 1 + t + t^2 + \cdots + t^n$ be (cf. \cite[Definition 3.3.13]{HornJohnson2013}):
$$ C_n \triangleq \left[
\begin{array}{cccccc}
0 & 1 & 0 & \cdots & 0 \\
0 & 0 & 1 & \cdots & 0 \\
\vdots & \vdots & \vdots & \ddots & \vdots \\
0 & 0 & 0 & \cdots & 1 \\
-1 & -1 & -1 & \cdots & -1 
\end{array}
\right] \in \R^{n \times n} \, .
$$
Then, the corresponding \emph{Jacobian determinant} satisfies the recurrence relation:
\begin{align}
\forall (\tau_1,\dots,\tau_{n-1},\sigma) \in \image(h) , \enspace \det\!\left(\nabla h^{-1}\right) & = \det\!\left(\left[
\begin{array}{ccccc}
\sigma & 0 & \cdots & 0 & \tau_1 \\
0 & \sigma & \cdots & 0 & \tau_2 \\
\vdots & \vdots & \ddots & \vdots & \vdots \\
0 & 0 & \cdots & \sigma & \tau_{n-1} \\
-\sigma & -\sigma & \cdots & -\sigma & 1 - \sum_{i = 1}^{n-1}{\tau_i}
\end{array}
\right]\right) \nonumber \\
& = \sigma^{n-1} \det(M_n) \label{Eq: Multilinear} \\
& = \sigma^{n-1} \left(\det(M_{n-1}) + (-1)^{n+1} \tau_1 \det(C_{n-1}) \right) \label{Eq: Laplace} 
\end{align} 
where $\det(\cdot)$ denotes the determinant operator, \eqref{Eq: Multilinear} follows from the multilinearity of the determinant, and \eqref{Eq: Laplace} uses the \emph{Laplace (cofactor) expansion} of determinants by minors along the first row (see, e.g., \cite[Section 0.3.1]{HornJohnson2013}). We next compute this Jacobian determinant.

It is easy to calculate $\det(C_{n-1})$ in \eqref{Eq: Laplace}, because $q_{n-1}(t)$ is also the characteristic polynomial of its (adjoint) companion matrix $C_{n-1}$, cf. \cite[Theorem 3.3.14]{HornJohnson2013}. The $n-1$ distinct roots of $q_{n-1}(t)$ are the following $n$th roots of unity:
$$ \forall r \in [n-1], \enspace q_{n-1}\big(\omega^r\big) = 0 $$
where $\omega = \exp\!\big(\frac{2 \pib \iotab}{n}\big)$. (Note that unlike the rest of this paper, in the definition of $\omega$, we use $\iotab$ and $\pib$ to represent the imaginary unit $\iotab = \sqrt{-1}$ and the mathematical constant $\pib = 3.14159\dots$, respectively.) Hence, $\big\{\omega^r : r \in [n-1]\big\}$ are the eigenvalues of $C_{n-1}$, and we have:
$$ \det(C_{n-1}) = \prod_{r = 1}^{n-1}{\omega^r} = \omega^{n(n-1)/2} = (-1)^{n-1}$$
since the determinant is the product of the eigenvalues (see, e.g., \cite[Section 1.2]{HornJohnson2013}). Combining this with \eqref{Eq: Laplace}, we get:
\begin{align}
\forall (\tau_1,\dots,\tau_{n-1},\sigma) \in \image(h) , \enspace \det\!\left(\nabla h^{-1}\right) & = \sigma^{n-1} \left(\det(M_{n-1}) + (-1)^{n+1} \tau_1 (-1)^{n-1} \right) \nonumber \\
& = \sigma^{n-1} \underbrace{\left(\det(M_{n-1}) + \tau_1 \right)}_{= \, \det(M_n)} \nonumber \\
& = \sigma^{n-1} \left(\det(M_{2}) + \sum_{j = 1}^{n-2}{\tau_j}\right) \nonumber \\
& = \sigma^{n-1} \left(1 - \sum_{j = 1}^{n-1}{\tau_j} + \tau_{n-1} + \sum_{j = 1}^{n-2}{\tau_j}\right) \nonumber \\
& = \sigma^{n-1}
\label{Eq: Jacobian det}
\end{align} 
where the third equality follows from unwinding the recursion in the second line. 

Now observe that the joint PDF of $(\alpha_1,\dots,\alpha_n)$ (with respect to the Lebesgue measure on $\R^{n}$) is given by:
$$ \forall \beta \in \R^n, \enspace P_{\alpha_1,\dots,\alpha_n}(\beta) = \frac{1}{(1-\delta)^n} \, \I\!\left\{\beta \in [\delta,1]^n\right\} $$
since $\alpha_1,\dots,\alpha_n$ are i.i.d. $\pdf = \unif([\delta,1])$. As a consequence, the joint PDF of $h(\alpha_1,\dots,\alpha_n) = (\tilde{\pi},\alpha_1+\cdots+\alpha_n)$ (with respect to the Lebesgue measure on $\R^{n}$) is given by the \emph{change-of-variables formula}: 
\begin{align}
P_{\tilde{\pi},\alpha_1+\cdots+\alpha_n}(\tau_1,\dots,\tau_{n-1},\sigma) & = P_{\alpha_1,\dots,\alpha_n}(h^{-1}(\tau_1,\dots,\tau_{n-1},\sigma)) \left|\det\!\left(\nabla h^{-1}\right)\right| \nonumber \\
& = \frac{\sigma^{n-1}}{(1-\delta)^n} \, \I\!\left\{(\tau_1,\dots,\tau_{n-1},\sigma) \in \image(h)\right\}
\label{Eq: Joint PDF Calculation}
\end{align}
for all $(\tau_1,\dots,\tau_{n-1},\sigma) \in \R^n$, where we utilize our earlier computation of the Jacobian determinant in \eqref{Eq: Jacobian det}. Although we only seek to bound the joint PDF of $\tilde{\pi}$, the joint PDF in \eqref{Eq: Joint PDF Calculation} includes an additional random variable $\alpha_1+\cdots+\alpha_n$ as an artifact of our calculation approach (which requires an invertible map $h$ with a well-defined and invertible Jacobian matrix $\nabla h$). 

So, in the final step of this proof, we marginalize the joint PDF in \eqref{Eq: Joint PDF Calculation} and then bound the desired joint PDF of $\tilde{\pi}$ (with respect to the Lebesgue measure on $\R^{n-1}$):
\begin{align*}
\forall \tau \in \R^{n-1}, \enspace P_{\tilde{\pi}}(\tau) & = \I\!\big\{\tau \in \tilde{\S}_n\big\} \int_{[n\delta,n]}{P_{\tilde{\pi},\alpha_1+\cdots+\alpha_n}(\tau,\sigma) \, \diff \sigma} \\
& = \frac{\I\!\big\{\tau \in \tilde{\S}_n\big\}}{(1-\delta)^n} \int_{[n\delta,n]}{\sigma^{n-1} \, \I\!\left\{(\tau,\sigma) \in \image(h)\right\} \diff \sigma} \\
& \leq \frac{\I\!\big\{\tau \in \tilde{\S}_n\big\}}{(1-\delta)^n} \int_{[n\delta,n]}{\sigma^{n-1} \, \diff \sigma} \\
& = \frac{n^{n-1} (1 - \delta^n)}{(1-\delta)^n} \, \I\!\big\{\tau \in \tilde{\S}_n\big\} \\
& \leq \frac{n^{n-1}}{(1-\delta)^n} \, \I\!\big\{\tau \in \tilde{\S}_n\big\}
\end{align*} 
where $\tilde{\S}_n \triangleq \big\{(\tau_1,\dots,\tau_{n-1}) \in \big[\frac{\delta}{n - 1 + \delta},\frac{1}{1 + \delta(n-1)}\big]^{n-1} : \exists \, \beta_1,\dots,\beta_n \in [\delta,1], \, \forall i \allowbreak \in [n-1], \, \tau_i = \beta_i/\big(\sum_{j = 1}^{n}{\beta_j}\big)\big\}$. Taking the (essential) supremum over all $\tau \in \R^{n-1}$ in the above bound completes the proof.
\end{proof}

We now use \lemref{Lemma: Bound on Mode of Joint PDF of pi} to upper bound the small ball probabilities $\L_q(t)$ for $q \in \{1,+\infty\}$ in the lemmata below.

\begin{lemma}[Upper Bound on Small Ball Probability for $q=\infty$]
\label{Lemma: Upper Bound on Small Ball Probability}
For every $t > 0$, we have:
$$ \L_{\infty}(t) \leq \left(\frac{2}{\delta (1-\delta)}\right)^{\! n} t^{n-1} \, . $$
\end{lemma}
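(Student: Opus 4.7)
The plan is to reduce the small ball probability to an $(n-1)$-dimensional volume estimate that can be handled using Lemma \ref{Lemma: Bound on Mode of Joint PDF of pi}. First, I would upper bound $\|\pi\|_{\infty}$ by a deterministic quantity. Since $\alpha_j \geq \delta$ for all $j \in [n]$ under $\pdf = \unif([\delta,1])$, we have $\sum_{j \in [n]} \alpha_j \geq n\delta$, and combined with $\alpha_i \leq 1$ this yields $\|\pi\|_{\infty} = \max_{i \in [n]} \alpha_i / \sum_j \alpha_j \leq 1/(n\delta)$ almost surely. Consequently, for every $\nu \in \S_n$ the event in the definition of $\L_{\infty}(t)$ satisfies the inclusion
$$ \left\{\frac{\|\pi - \nu\|_{\infty}}{\|\pi\|_{\infty}} \leq t \right\} \subseteq \left\{\|\pi - \nu\|_{\infty} \leq \frac{t}{n\delta}\right\} . $$

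Second, I would pass from $\pi \in \S_n$ to the projected variable $\tilde{\pi} = (\pi(1),\dots,\pi(n-1))$ defined in \eqref{Eq: Projected pi}, since the last coordinate is redundant and we have a bound on the joint density $P_{\tilde{\pi}}$. Dropping the $n$th coordinate only enlarges the set, so writing $\tilde{\nu} = (\nu(1),\dots,\nu(n-1))$ we obtain
$$ \P\!\left(\|\pi - \nu\|_{\infty} \leq \frac{t}{n\delta}\right) \leq \P\!\left(\|\tilde{\pi} - \tilde{\nu}\|_{\infty} \leq \frac{t}{n\delta}\right) = \int_{B_{\infty}(\tilde{\nu},\, t/(n\delta))} P_{\tilde{\pi}}(\tau) \, d\tau , $$
where $B_{\infty}(\tilde{\nu}, r)$ denotes the $\ell^{\infty}$ ball of radius $r$ around $\tilde{\nu}$ in $\R^{n-1}$, which has Lebesgue volume exactly $(2r)^{n-1}$.

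Third, I would plug in the mode bound $P_{\tilde{\pi}}(\tau) \leq n^{n-1}/(1-\delta)^n$ from Lemma \ref{Lemma: Bound on Mode of Joint PDF of pi} and compute
$$ \int_{B_{\infty}(\tilde{\nu},\, t/(n\delta))} P_{\tilde{\pi}}(\tau) \, d\tau \leq \frac{n^{n-1}}{(1-\delta)^n} \left(\frac{2t}{n\delta}\right)^{\! n-1} = \frac{2^{n-1} \, t^{n-1}}{\delta^{n-1} (1-\delta)^n} . $$
Finally, since $\delta \in (0,1) \subset (0,2)$, the prefactor satisfies $2^{n-1}/(\delta^{n-1}(1-\delta)^n) = (\delta/2) \cdot 2^n / (\delta^n (1-\delta)^n) \leq (2/(\delta(1-\delta)))^n$, which gives the claimed bound upon taking the supremum over $\nu \in \S_n$. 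There is no real obstacle here; the only subtle point is the first reduction, and the key input is already in hand via Lemma \ref{Lemma: Bound on Mode of Joint PDF of pi}.
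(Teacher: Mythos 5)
Your proof is correct and follows essentially the same route as the paper's: bound $\|\pi\|_{\infty}\leq 1/(n\delta)$, drop to the projected variable $\tilde{\pi}$, apply the mode bound of Lemma~\ref{Lemma: Bound on Mode of Joint PDF of pi}, integrate against the $\ell^{\infty}$-ball volume $(2r)^{n-1}$, and absorb the leftover $\delta/2\leq 1$ factor. No gap.
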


\begin{proof}
Starting with \eqref{Eq: Small Ball Prob Def}, observe that:
\begin{align*}
\forall t > 0, \enspace \L_{\infty}(t) & = \sup_{\nu = (\nu_1,\dots,\nu_n) \in \S_n}{\P\!\left(\left\|\pi - \nu\right\|_{\infty} \leq t \left\|\pi\right\|_{\infty}\right)} \\
& \leq \sup_{\nu = (\nu_1,\dots,\nu_n) \in \S_n}{\P\!\left(\left\|\pi - \nu\right\|_{\infty} \leq \frac{t}{\delta n}\right)} \\
& \leq \sup_{\nu = (\nu_1,\dots,\nu_n) \in \S_n}{\P\!\left(\left\|\tilde{\pi} - (\nu_1,\dots,\nu_{n-1})\right\|_{\infty} \leq \frac{t}{\delta n}\right)} \\
& = \sup_{\nu = (\nu_1,\dots,\nu_n) \in \S_n}{\int_{\R^{n-1}}{P_{\tilde{\pi}}(\tau) \, \I\!\left\{\left\|\tau - (\nu_1,\dots,\nu_{n-1})\right\|_{\infty} \leq \frac{t}{\delta n}\right\} \diff \tau}} \\
& \leq \frac{n^{n-1}}{(1-\delta)^n} \sup_{\nu = (\nu_1,\dots,\nu_n) \in \S_n}{\underbrace{\int_{\R^{n-1}}{\I\!\left\{\left\|\tau - (\nu_1,\dots,\nu_{n-1})\right\|_{\infty} \leq \frac{t}{\delta n}\right\} \diff \tau}}_{\text{volume of } \ell^{\infty}\text{-ball with radius } t/(\delta n)}} \\
& = \frac{n^{n-1}}{(1-\delta)^n} \left(\frac{2 t}{\delta n}\right)^{\! n-1} \\
& \leq \left(\frac{2}{\delta (1-\delta)}\right)^{\! n} t^{n-1}
\end{align*}
where the second inequality uses the bound:
$$ \left\|\pi\right\|_{\infty} = \frac{\max_{i \in [n]}{\alpha_i}}{\sum_{i = 1}^{n}{\alpha_i}} \leq \frac{1}{\delta n} $$
which follows from \eqref{Eq: Canonically scaled merit parameters} and the fact that $\alpha_1,\dots,\alpha_n \in [\delta,1]$, the third inequality uses \eqref{Eq: Projected pi} and the fact that $\|\tilde{\pi} - (\nu_1,\dots,\nu_{n-1})\|_{\infty} \leq \|\pi - \nu\|_{\infty}$, the fifth inequality follows from \lemref{Lemma: Bound on Mode of Joint PDF of pi}, the sixth equality uses the well-known volume of the $\ell^{\infty}$-ball (or hypercube) with radius $t/(\delta n)$, and the seventh inequality follows from the fact that $2/\delta \geq 1$. This completes the proof.
\end{proof}

\begin{lemma}[Upper Bound on Small Ball Probability for $q=1$]
\label{Lemma: Upper Bound on Small Ball Probability 2}
For every $t > 0$, we have:
$$ \L_1(t) \leq \frac{1}{5 \sqrt{n}} \left(\frac{2 e}{1-\delta}\right)^{\! n} t^{n-1} \, . $$
\end{lemma}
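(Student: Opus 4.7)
The plan is to proceed in direct analogy with the proof of the preceding $\ell^\infty$ small ball bound, replacing the volume of the $\ell^\infty$-ball by that of the $\ell^1$-ball and then invoking Stirling's approximation. First, since $\pi \in \S_n$ we have $\|\pi\|_1 = 1$, so
\[
\L_1(t) = \sup_{\nu \in \S_n} \P\!\left(\|\pi - \nu\|_1 \leq t\right).
\]
Second, for any $\nu = (\nu_1,\dots,\nu_n) \in \S_n$, dropping the last coordinate only decreases the $\ell^1$-norm, so $\|\tilde{\pi} - (\nu_1,\dots,\nu_{n-1})\|_1 \leq \|\pi - \nu\|_1$, where $\tilde{\pi}$ is as in \eqref{Eq: Projected pi}. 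Hence
\[
\L_1(t) \leq \sup_{\nu \in \S_n} \int_{\R^{n-1}} P_{\tilde{\pi}}(\tau) \, \I\!\left\{\|\tau - (\nu_1,\dots,\nu_{n-1})\|_1 \leq t\right\} \diff \tau.
\]

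Third, I would apply \lemref{Lemma: Bound on Mode of Joint PDF of pi} to pull the mode of $P_{\tilde{\pi}}$ out of the integral, leaving the volume of an $\ell^1$-ball of radius $t$ in $\R^{n-1}$, which is the classical quantity $(2t)^{n-1}/(n-1)!$. This yields
\[
\L_1(t) \leq \frac{n^{n-1}}{(1-\delta)^n} \cdot \frac{(2t)^{n-1}}{(n-1)!}.
\]

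The fourth and final step is the only subtle piece: converting this into the claimed form with the constant $1/(5\sqrt{n})$. Writing $n^{n-1}/(n-1)! = n^n/(n \cdot n!)$ and using the standard Stirling lower bound $n! \geq \sqrt{2\pi n}\,(n/e)^n$ gives
\[
\frac{n^{n-1}}{(n-1)!} \leq \frac{e^n}{\sqrt{2\pi}\, n^{3/2}},
\]
so that $\L_1(t) \leq \frac{1}{2\sqrt{2\pi}\, n^{3/2}}\left(\frac{2e}{1-\delta}\right)^n t^{n-1}$. Since $2\sqrt{2\pi} > 5$, we have $\frac{1}{2\sqrt{2\pi}\, n^{3/2}} \leq \frac{1}{5\sqrt{n}} \cdot \frac{1}{n}$ for $n \geq 1$ (so in particular $\leq \frac{1}{5\sqrt{n}}$), which delivers the stated bound. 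The main technical obstacle is just tracking the constants through Stirling carefully enough to obtain the clean constant $5$; this is purely a numerical check.
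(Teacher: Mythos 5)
Your approach is the same as the paper's: reduce to the projected vector $\tilde{\pi}$, pull out the mode bound from \lemref{Lemma: Bound on Mode of Joint PDF of pi}, use the volume $(2t)^{n-1}/(n-1)!$ of the $\ell^1$-ball in $\R^{n-1}$, and finish with Stirling (the paper uses $n! \geq \tfrac{5}{2}\sqrt{n}(n/e)^n$ from Feller, you use $n! \geq \sqrt{2\pi n}(n/e)^n$; these agree since $\sqrt{2\pi} > 5/2$).

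However, your Stirling step contains an algebraic slip. You write $n^{n-1}/(n-1)! = n^n/(n\cdot n!)$, but since $(n-1)! = n!/n$, the correct identity is
\[
\frac{n^{n-1}}{(n-1)!} = \frac{n^n}{n!}\,,
\]
which is a factor of $n$ larger than what you wrote. Consequently your intermediate bound $\frac{n^{n-1}}{(n-1)!} \leq \frac{e^n}{\sqrt{2\pi}\,n^{3/2}}$ is false (e.g., at $n=2$ the left side is $2$ while the right side is about $1.04$), and your even-stronger conclusion with prefactor $\frac{1}{5 n^{3/2}}$ cannot be right. With the corrected identity and $n! \geq \sqrt{2\pi n}(n/e)^n$ one gets $\frac{n^{n-1}}{(n-1)!} \leq \frac{e^n}{\sqrt{2\pi n}}$, and hence
\[
\L_1(t) \leq \frac{e^n}{\sqrt{2\pi n}}\cdot\frac{2^{n-1}}{(1-\delta)^n}\, t^{n-1} = \frac{1}{2\sqrt{2\pi n}}\left(\frac{2e}{1-\delta}\right)^{\! n} t^{n-1} \leq \frac{1}{5\sqrt{n}}\left(\frac{2e}{1-\delta}\right)^{\! n} t^{n-1},
\]
using $2\sqrt{2\pi} > 5$. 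So the slip is a bookkeeping error, not a gap in the argument; fixing it recovers exactly the stated bound with no slack to spare.
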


\begin{proof}
As before, starting with \eqref{Eq: Small Ball Prob Def} and the fact that $\|\pi\|_1 = 1$, observe that:
\begin{align*}
\forall t > 0, \enspace \L_1(t) & = \sup_{\nu = (\nu_1,\dots,\nu_n) \in \S_n}{\P\!\left(\left\|\pi - \nu\right\|_1 \leq t\right)} \\
& \leq \sup_{\nu = (\nu_1,\dots,\nu_n) \in \S_n}{\P\!\left(\left\|\tilde{\pi} - (\nu_1,\dots,\nu_{n-1})\right\|_1 \leq t\right)} \\
& = \sup_{\nu = (\nu_1,\dots,\nu_n) \in \S_n}{\int_{\R^{n-1}}{P_{\tilde{\pi}}(\tau) \, \I\!\left\{\left\|\tau - (\nu_1,\dots,\nu_{n-1})\right\|_1 \leq t\right\} \diff \tau}} \\
& \leq \frac{n^{n-1}}{(1-\delta)^n} \sup_{\nu = (\nu_1,\dots,\nu_n) \in \S_n}{\underbrace{\int_{\R^{n-1}}{\I\!\left\{\left\|\tau - (\nu_1,\dots,\nu_{n-1})\right\|_1 \leq t\right\} \diff \tau}}_{\text{volume of } \ell^{1}\text{-ball with radius } t}} \\
& = \frac{n^{n-1} 2^{n-1}}{(1-\delta)^n (n-1)!} \, t^{n-1} \\
& \leq \frac{1}{5 \sqrt{n}} \left(\frac{2 e}{1-\delta}\right)^{\! n} t^{n-1}
\end{align*}
where the second inequality uses \eqref{Eq: Projected pi} and the fact that $\|\tilde{\pi} - (\nu_1,\dots,\nu_{n-1})\|_1 \leq \|\pi - \nu\|_1$ (and this bound is not too loose because $\|\pi - \nu\|_1 \leq 2 \, \|\tilde{\pi} - (\nu_1,\dots,\nu_{n-1})\|_1$ via the triangle inequality), the fourth inequality follows from \lemref{Lemma: Bound on Mode of Joint PDF of pi}, the fifth equality uses the well-known volume of the $\ell^1$-ball (or cross-polytope) with radius $t$, and the sixth inequality follows from the Stirling's formula bound (see, e.g., \cite[Chapter II, Section 9, Equation (9.15)]{Feller1968}): 
$$ n! \geq \frac{5}{2} \sqrt{n} \left(\frac{n}{e}\right)^{\! n} . $$
This completes the proof.
\end{proof}

Next, we provide proofs of Theorems \ref{Thm: Minimax Relative l^infty-Risk} and \ref{Thm: Minimax l^1-Risk} using \propref{Prop: Upper Bound on MI}, Lemmata \ref{Lemma: Relative ell^infty-Loss Bound}, \ref{Lemma: Generalized Fano's Method}, \ref{Lemma: Upper Bound on Small Ball Probability}, and \ref{Lemma: Upper Bound on Small Ball Probability 2}, and the result in \cite[Theorem 5.2]{Chenetal2019}.

\subsection{Proof of \thmref{Thm: Minimax Relative l^infty-Risk}}
\label{Proof of Thm Minimax Relative l^infty-Risk}

\begin{proof}
We first prove the minimax upper bound. The inequality:
$$ \inf_{\hat{\pi}}{\sup_{\pdf \in \pdfs}{\E_{\pdf}\!\left[\frac{\left\|\hat{\pi} - \pi\right\|_{\infty}}{\left\|\pi\right\|_{\infty}}\right]}} \leq \sup_{\pdf \in \pdfs}{\E_{\pdf}\!\left[\frac{\left\|\hat{\pi}_* - \pi\right\|_{\infty}}{\left\|\pi\right\|_{\infty}}\right]} $$
holds trivially, because $\hat{\pi}_* \in \S_n$ in \eqref{Eq: Empirical Stat Dist} is an estimator for $\pi$ based on $Z$. To prove an upper bound on the extremal Bayes risk on the right hand side of this inequality, we define the event in \lemref{Lemma: Relative ell^infty-Loss Bound} as:
$$ A \triangleq \left\{\frac{\left\|\hat{\pi}_* - \pi\right\|_{\infty}}{\left\|\pi\right\|_{\infty}} \leq \frac{c_4}{\delta} \sqrt{\frac{\log(n)}{n p k}} \right\} $$
where $c_4 > 0$ is the universal constant from \lemref{Lemma: Relative ell^infty-Loss Bound}. Then, \eqref{Eq: Chen event} states that for any PDF $\pdf \in \pdfs$ and for all sufficiently large $n \in \N$:
\begin{equation}
\label{Eq: Lemma Restatement}
\P_{\pdf}\!\left(A\right) \geq 1 - \frac{c_5}{n^5} 
\end{equation}
where $c_5 > 0$ is another universal constant from \lemref{Lemma: Relative ell^infty-Loss Bound}, and $\P_{\pdf}(\cdot)$ denotes the probability measure with respect to general (not necessarily uniform) $\pdf$. Hence, for every PDF $\pdf \in \pdfs$ and for all sufficiently large $n$, we have:
\begin{align}
\E_{\pdf}\!\left[\frac{\left\|\hat{\pi}_* - \pi\right\|_{\infty}}{\left\|\pi\right\|_{\infty}}\right] & = \E_{\pdf}\!\left[\frac{\left\|\hat{\pi}_* - \pi\right\|_{\infty}}{\left\|\pi\right\|_{\infty}}\middle|A\right] \P_{\pdf}(A) + \E_{\pdf}\!\left[\frac{\left\|\hat{\pi}_* - \pi\right\|_{\infty}}{\left\|\pi\right\|_{\infty}}\middle|A^{c}\right] \P_{\pdf}(A^{c}) \nonumber \\
& \leq \frac{c_4}{\delta} \sqrt{\frac{\log(n)}{n p k}} + \frac{c_5}{n^4} \nonumber \\
& \leq \frac{2c_4}{\delta} \sqrt{\frac{\log(n)}{n p k}}
\label{Eq: Intermediate Extremal Risk Bound}
\end{align} 
where the first equality uses the law of total expectation, the second inequality follows from \eqref{Eq: Lemma Restatement} and the facts that $\|\pi\|_{\infty} \geq 1/n$ and $\|\hat{\pi}_{*} - \pi\|_{\infty} \leq 1$, and the third inequality \eqref{Eq: Intermediate Extremal Risk Bound} holds for all sufficiently large $n$ because $k = \Theta(1)$. Letting $c_{15} = 2 c_4/(\delta \sqrt{p k})$ and substituting it into \eqref{Eq: Intermediate Extremal Risk Bound}, and then taking the supremum in \eqref{Eq: Intermediate Extremal Risk Bound} over all PDFs $\pdf \in \pdfs$ yields the desired upper bound in the theorem statement. 

We next prove the information theoretic lower bound. Fix any $\varepsilon > 0$, and consider any sufficiently large $n \geq 2$ such that:
\begin{equation}
\label{Eq: Pre Lower Bound Condition Large n 1} 
n \geq \max\!\left\{2 + \frac{1}{\varepsilon}, \left(\frac{2 }{\delta(1-\delta)}\right)^{\! 4/\varepsilon} \! , \, \exp\!\left(\frac{(1-\delta)^2 \!\left( 2 + \delta + \frac{1}{\delta}\right) k p + 4 \log(2)\delta^2}{\delta^2 \varepsilon}\right)\right\} .
\end{equation}
Then, observe that:
\begin{align}
\inf_{\hat{\pi}}{\sup_{\pdf \in \pdfs}{\E_{\pdf}\!\left[\frac{\left\|\hat{\pi} - \pi\right\|_{\infty}}{\left\|\pi\right\|_{\infty}}\right]}} & \geq \sup_{t > 0}{ \, t \!\left(1 - \frac{I(\pi ; Z) + \log(2)}{\log(1/\L_{\infty}(t))}\right)} \nonumber \\
& \geq \sup_{t > 0}{ \, t \!\left(1 - \frac{\frac{1}{2} n \log(n) + c(\delta,p,k) n + \log(2)}{\log(1/\L_{\infty}(t))}\right)} \nonumber \\
& \geq \sup_{t > 0}{ \, t \!\left(1 - \frac{\frac{1}{2} n \log(n) + c(\delta,p,k) n + \log(2)}{(n-1)\log(1/t) - \log(2 / (\delta (1-\delta))) n}\right)} \nonumber \\
& = \sup_{t > 0}{ \, t \!\left(1 - \frac{1 + \frac{2 c(\delta,p,k)}{\log(n)} + \frac{\log(4)}{n \log(n)}}{\frac{2 (n-1)\log(1/t)}{n \log(n)} - \frac{2 \log(2 / (\delta (1-\delta)))}{\log(n)}}\right)} \nonumber \\
& \geq \frac{1}{n^{\frac{1}{2} + \varepsilon}} \!\left(1 - \frac{1 + \frac{2 c(\delta,p,k)}{\log(n)} + \frac{\log(4)}{n \log(n)}}{(1 + 2 \varepsilon) \! \left(1 - \frac{1}{n}\right) - \frac{2 \log(2 / (\delta (1-\delta)))}{\log(n)}}\right) \nonumber \\
& \geq \frac{1}{n^{\frac{1}{2} + \varepsilon}} \!\left(1 - \frac{1 + \frac{\varepsilon}{4}}{1 + \frac{\varepsilon}{2}}\right) \nonumber \\
& = \frac{1}{n^{\frac{1}{2} + \varepsilon}} \!\left(\frac{\varepsilon}{4 + 2 \varepsilon}\right)
\label{Eq: Pre Lower Bound 1}
\end{align}
where the first inequality follows from \eqref{Eq: Bayes Risk Lower Bound} and \lemref{Lemma: Generalized Fano's Method}, the second inequality follows from \propref{Prop: Upper Bound on MI} and we let:
\begin{equation}
\label{Eq: Constant in the MI Prop}
c(\delta,p,k) = \frac{(1-\delta)^2}{8 \delta^2} \left(2 + \delta + \frac{1}{\delta}\right) k p
\end{equation}
for clarity, the third inequality holds due to \lemref{Lemma: Upper Bound on Small Ball Probability}, the fifth inequality follows from setting $t = n^{-(1/2) - \varepsilon}$, and the sixth inequality follows from \eqref{Eq: Pre Lower Bound Condition Large n 1}, which implies the following bounds:
\begin{align}
\label{Eq: Detailed Large n condition 1}
n \geq \exp\!\left(\frac{(1-\delta)^2 \!\left( 2 + \delta + \frac{1}{\delta}\right) k p + 4 \log(2)\delta^2}{\delta^2 \varepsilon}\right) \quad & \Rightarrow \quad \frac{2 c(\delta,p,k)}{\log(n)} + \frac{\log(4)}{n \log(n)} \leq \frac{\varepsilon}{4} \, , \\
\label{Eq: Detailed Large n condition 2}
n \geq 2 + \frac{1}{\varepsilon} \quad & \Leftrightarrow \quad 1 - \frac{1}{n} \geq \frac{1 + \varepsilon}{1 + 2\varepsilon} \, ,  \\
n \geq \left(\frac{2}{\delta(1-\delta)}\right)^{\! 4/\varepsilon} \quad & \Leftrightarrow \quad \frac{2 \log\!\left(\frac{2 }{\delta (1-\delta)}\right)}{\log(n)} \leq \frac{\varepsilon}{2} \, . \nonumber
\end{align}

Now, let us define the constant:
$$ c_{16} = \max\!\left\{4 \log\!\left(\frac{2}{\delta(1 - \delta)}\right),\frac{(1-\delta)^2 \!\left( 2 + \delta + \frac{1}{\delta}\right) k p + 4 \log(2)\delta^2}{\delta^2}\right\} $$
and set $\varepsilon = c_{16}/\!\log(n)$. It is straightforward to verify that \eqref{Eq: Pre Lower Bound Condition Large n 1} is satisfied for this choice of $\varepsilon$ for all sufficiently large $n$. Moreover, since $\varepsilon \leq 1$ for all sufficiently large $n$, we see that \eqref{Eq: Pre Lower Bound 1} can be recast as:
$$ \inf_{\hat{\pi}}{\sup_{\pdf \in \pdfs}{\E_{P_{\alpha}}\!\left[\frac{\left\|\hat{\pi} - \pi\right\|_{\infty}}{\left\|\pi\right\|_{\infty}}\right]}} \geq \frac{c_{16}}{6 \log(n) \, n^{\frac{1}{2} + \frac{c_{16}}{\log(n)}}} = \left(\frac{c_{16}}{6 \exp(c_{16})}\right) \frac{1}{\log(n) \sqrt{n}} $$
for all sufficiently large $n$. Finally, letting $c_{14} = c_{16}/(6 \exp(c_{16}))$ yields the minimax lower bound in the theorem statement. This completes the proof.
\end{proof}

\subsection{Proof of \thmref{Thm: Minimax l^1-Risk}}
\label{Proof of Thm Minimax l^1-Risk}

Although the upper bound in \thmref{Thm: Minimax l^1-Risk} can be established using \lemref{Lemma: Relative ell^infty-Loss Bound}, we can remove an extra $\sqrt{\log(n)}$ factor by utilizing \cite[Theorem 5.2]{Chenetal2019} (also see \cite[Theorem 2]{NegahbanOhShah2017}). So, we present this result in the lemma below.

\begin{lemma}[Relative $\ell^{2}$-Loss Bound {\cite[Theorem 5.2]{Chenetal2019}}]
\label{Lemma: Relative ell^2-Loss Bound}
Suppose that $\delta = \Theta(1)$ and $p \geq c_{19} \log(n)/n$ for some sufficiently large constant $c_{19} > 0$ (which may depend on $\delta$). Then, there exists a constant $c_{20} > 0$ (which may depend on $\delta$) and a (universal) constant $c_{21} > 0$ such that for all sufficiently large $n \in \N$, we have:
$$ \P\!\left(\frac{\left\|\hat{\pi}_* - \pi\right\|_{2}}{\left\|\pi\right\|_{2}} \leq \frac{c_{20}}{\sqrt{n p k}} \, \middle| \, \alpha_1,\dots,\alpha_n\right) \geq 1 - \frac{c_{21}}{n^{5}} $$
where the probability is computed with respect to the conditional distribution of the observation matrix $Z$ and the random graph $\G(n,p)$ given any realizations of the skill parameters $\alpha_1,\dots,\alpha_n$, and the estimator $\hat{\pi}_* \in \S_n$ is defined in \eqref{Eq: Empirical Stat Dist}. 
\end{lemma}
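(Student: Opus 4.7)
The plan is to treat this as a perturbation bound on the invariant distributions of two reversible Markov chains, following the spectral template of \cite{NegahbanOhShah2012,NegahbanOhShah2017} (and as refined in \cite{Chenetal2019}). Recall from Appendix \ref{App: Intuition} that $\pi$ is the stationary distribution of the Markov kernel $D \in \S_{n \times n}$ defined in \eqref{Eq: BT matrix Markov kernel}, while $\hat{\pi}_*$ is the stationary distribution of the empirical kernel $S$ defined in \eqref{Eq: Stochastic Matrix Estimator}. Setting $\Delta \triangleq \hat{\pi}_* - \pi$ and using the two fixed-point identities $\hat{\pi}_* S = \hat{\pi}_*$ and $\pi D = \pi$ gives the fundamental perturbation relation
$$ \Delta \;=\; \hat{\pi}_*(S - D) + \Delta D \, . $$
Endow $\R^n$ with the $\pi$-weighted inner product $\langle u, v \rangle_\pi \triangleq \sum_{i=1}^n u_i v_i / \pi(i)$; under reversibility of $D$ with respect to $\pi$, the operator $D$ is self-adjoint on this Hilbert space.

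The first step would be to lower bound the spectral gap of $D$. Because $\alpha_i \in [\delta,1]$ and $\delta = \Theta(1)$, every off-diagonal entry of $D$ is $\Omega(1/n)$ and every diagonal entry exceeds $1/2$ by the calculation in \eqref{Eq: Diagonal of S}; a standard Cheeger/conductance argument then yields $1 - \lambda_2(D) = \Omega(1)$, where $\lambda_2(D)$ is the second-largest eigenvalue of $D$ in modulus. The second step is to observe that $\Delta \mathbf{1}^{\T} = 0$ (since both $\hat{\pi}_*$ and $\pi$ are probability vectors), which means $\Delta$ lies in the $\pi$-orthogonal complement of the top eigenvector $\pi$, so the contraction $\|\Delta D\|_\pi \leq \lambda_2(D) \, \|\Delta\|_\pi$ applies. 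Plugging this into the perturbation identity and rearranging gives
$$ \|\Delta\|_\pi \;\leq\; \frac{\|\hat{\pi}_*(S - D)\|_\pi}{1 - \lambda_2(D)} \, . $$

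The third, and most delicate, step is to control the numerator $\|\hat{\pi}_*(S - D)\|_\pi$ via concentration. For each unordered pair $\{i,j\}$, the entry $S(i,j) - D(i,j)$ is a centered sum of an Erd{\H o}s-R\'{e}nyi edge indicator times a binomial-average over the $k$ games; after appropriate normalization, the per-entry variance scales like $D(i,j)/(npk)$. A Bernstein-type tail bound applied to the scalar bilinear form $\pi (S - D)$ (using that $\hat{\pi}_* \approx \pi$, which can be bootstrapped), together with a union bound and the assumption $p \gtrsim \log(n)/n$ to ensure connectivity of $\G(n,p)$, should yield $\|\hat{\pi}_*(S - D)\|_\pi \lesssim \|\pi\|_\pi / \sqrt{npk}$ with probability at least $1 - O(n^{-5})$. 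Since $\pi(i) = \Theta(1/n)$ under $\delta = \Theta(1)$, the $\pi$-norm and Euclidean norm are equivalent up to constants depending only on $\delta$, so $\|\Delta\|_\pi/\|\pi\|_\pi \asymp \|\Delta\|_2/\|\pi\|_2$, which yields the claimed bound.

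The main obstacle I anticipate is the third step: a direct application of matrix Bernstein to $S - D$ would produce an extra $\sqrt{\log n}$ factor in the operator norm, whereas the statement asks for the cleaner rate $1/\sqrt{npk}$. Avoiding this logarithmic loss typically requires bounding the bilinear form $\hat{\pi}_*(S-D)$ directly (exploiting independence across edges and the fact that $\hat{\pi}_*$ is only weakly correlated with the fluctuations of $S$), rather than going through the worst-case operator norm --- a leave-one-out decoupling in the spirit of \cite{Chenetal2019} is the natural tool. The spectral-gap step is also non-trivial but is standard given the entrywise lower bounds on $D$.
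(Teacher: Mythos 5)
The paper itself does not prove this lemma at all: it is imported verbatim (up to notation) from \cite[Theorem 5.2]{Chenetal2019}, and the surrounding text explicitly flags it as a restatement of a known external result. So there is no internal proof to compare yours against; what you have produced is a reconstruction of the argument that lives inside the cited reference, not a replacement for anything in this paper.

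That said, your sketch captures the correct template. The fixed-point perturbation identity $\Delta = \hat{\pi}_*(S-D) + \Delta D$, the use of the $\pi$-weighted inner product under which the reversible kernel $D$ is self-adjoint, the observation that $\Delta \mathbf{1}^{\mathrm{T}} = 0$ places $\Delta$ in the complement of the top eigenspace so the contraction by the modulus of the second eigenvalue applies, and the spectral-gap lower bound via the entrywise bounds $D(i,j) = \Omega(1/n)$ off-diagonal and $D(i,i) > 1/2$ (the latter following directly from \eqref{Eq: Diagonal of S}) are all in line with the rank-centrality analysis of \cite{NegahbanOhShah2012,NegahbanOhShah2017}. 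You also correctly diagnose the key obstruction: running matrix concentration on $\|S-D\|$ gives only the $\sqrt{\log(n)/(npk)}$ rate of \cite[Theorem 2]{NegahbanOhShah2017}, and the improvement to $1/\sqrt{npk}$ is exactly what requires the leave-one-out decoupling introduced by \cite{Chenetal2019}. The paper's surrounding text makes precisely this point---the $\ell^2$ bound is invoked specifically because it removes a $\sqrt{\log n}$ factor relative to what \lemref{Lemma: Relative ell^infty-Loss Bound} would give.

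Two small things to be careful about if you were to flesh this out. First, you need contraction by $\max\{|\lambda_2(D)|,\,|\lambda_n(D)|\}$, not merely the gap below $1$; the diagonal dominance $D(i,i) > 1/2$ (laziness) is what keeps the bottom eigenvalue away from $-1$, and you should say so explicitly rather than leaning only on Cheeger. Second, your per-entry variance heuristic $\mathrm{Var}(S(i,j)) \asymp D(i,j)/(npk)$ is accurate only in the regime where the binomial fluctuation dominates the edge-sampling fluctuation; when $k$ is large the edge indicator contributes a comparable $O(\mu^2/(n^2 p))$ term, and the correct bound has both pieces. Neither of these is a fatal gap, but both belong in a careful write-up. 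As it stands, your proposal is an honest and largely accurate sketch of a result the paper treats as a black box.
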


This lemma is an analog of \lemref{Lemma: Relative ell^infty-Loss Bound}, but for $\ell^2$-norm 
instead of $\ell^\infty$-norm. As remarked after \lemref{Lemma: Relative ell^infty-Loss Bound}, in contrast to this work, the conditioning on $\alpha_1,\dots,\alpha_n$ in \lemref{Lemma: Relative ell^2-Loss Bound} reflects the non-Bayesian scenario considered in \cite{Chenetal2019} 
(where $\alpha_1,\dots,\alpha_n$ are deterministic). We next derive \thmref{Thm: Minimax l^1-Risk}.

\begin{proof}[Proof of \thmref{Thm: Minimax l^1-Risk}]
The proof strategy is identical to the proof of \thmref{Thm: Minimax Relative l^infty-Risk}, but we present the details again for completeness. We first prove the minimax upper bound. As before, the inequality:
$$ \inf_{\hat{\pi}}{\sup_{\pdf \in \pdfs}{\E_{\pdf}\!\left[\left\|\hat{\pi} - \pi\right\|_{1}\right]}} \leq \sup_{\pdf \in \pdfs}{\E_{\pdf}\!\left[\left\|\hat{\pi}_* - \pi\right\|_{1}\right]} $$
holds trivially. To prove an upper bound on the extremal Bayes risk on the right hand side of this inequality, we define the event in \lemref{Lemma: Relative ell^2-Loss Bound} as:
$$ A \triangleq \left\{\frac{\left\|\hat{\pi}_* - \pi\right\|_{2}}{\left\|\pi\right\|_{2}} \leq \frac{c_{20}}{\sqrt{n p k}} \right\} . $$
Then, after taking expectations in \lemref{Lemma: Relative ell^2-Loss Bound} with respect to the law of $\alpha_1,\dots,\alpha_n$, we get that for any PDF $\pdf \in \pdfs$ and for all sufficiently large $n \in \N$:
\begin{equation}
\label{Eq: Lemma Restatement 2}
\P_{P_{\alpha}}\!\left(A\right) \geq 1 - \frac{c_{21}}{n^5} \, .
\end{equation}
Hence, for every PDF $\pdf \in \pdfs$ and for all sufficiently large $n$, we have:
\begin{align}
\E_{\pdf}\!\left[\left\|\hat{\pi}_* - \pi\right\|_{1}\right] & = \E_{\pdf}\!\left[\left\|\hat{\pi}_* - \pi\right\|_{1}\middle|A\right] \P_{\pdf}(A) + \E_{\pdf}\!\left[\left\|\hat{\pi}_* - \pi\right\|_{1}\middle|A^{c}\right] \P_{\pdf}(A^{c}) \nonumber \\
& \leq \frac{c_{20}}{\delta \sqrt{n p k}} + \frac{2 c_{21}}{n^5} \nonumber \\
& \leq \frac{2 c_{20}}{\delta \sqrt{n p k}}
\label{Eq: Intermediate Extremal Risk Bound 2}
\end{align} 
where the first equality uses the law of total expectation, the second inequality follows from \eqref{Eq: Lemma Restatement 2} and the facts that $\|\hat{\pi}_{*} - \pi\|_{1} \leq \|\hat{\pi}_{*}\|_{1} + \|\pi\|_{1} = 2$ (via the triangle inequality), and conditioned on $A$, we get:
$$ \left\|\hat{\pi}_{*} - \pi\right\|_{1} \leq \sqrt{n} \left\|\hat{\pi}_{*} - \pi\right\|_{2} \leq \frac{c_{20}}{\sqrt{p k}} \left\|\pi\right\|_2 \leq \frac{c_{20}}{\delta \sqrt{n p k}}  $$
which, in turn, uses the equivalence of $\ell^1$ and $\ell^2$-norms (via the Cauchy-Schwarz-Bunyakovsky inequality) and the simple bound $\|\pi\|_2 \leq 1/(\delta \sqrt{n})$ (due to \eqref{Eq: Canonically scaled merit parameters} and $\alpha_1,\dots,\alpha_n \in [\delta,1]$), and the third inequality \eqref{Eq: Intermediate Extremal Risk Bound 2} holds for all sufficiently large $n$ because $k = \Theta(1)$. Letting $c_{18} = 2 c_{20}/(\delta \sqrt{p k})$ and substituting it into \eqref{Eq: Intermediate Extremal Risk Bound 2}, and then taking the supremum in \eqref{Eq: Intermediate Extremal Risk Bound 2} over all PDFs $\pdf \in \pdfs$ yields the desired upper bound in the theorem statement. 

We next prove the information theoretic lower bound. As before, fix any $\varepsilon > 0$, and consider any sufficiently large $n \geq 2$ such that:
\begin{equation}
\label{Eq: Pre Lower Bound Condition Large n} 
n \geq \max\!\left\{2 + \frac{1}{\varepsilon}, \left(\frac{2 e}{1-\delta}\right)^{\! 4/\varepsilon} \! , \, \exp\!\left(\frac{(1-\delta)^2 \!\left( 2 + \delta + \frac{1}{\delta}\right) k p + 4 \log(2)\delta^2}{\delta^2 \varepsilon}\right)\right\} .
\end{equation}
Then, observe that:
\begin{align}
\inf_{\hat{\pi}}{\sup_{\pdf \in \pdfs}{\E_{P_{\alpha}}\!\left[\left\|\hat{\pi} - \pi\right\|_{1}\right]}} & \geq \sup_{t > 0}{ \, t \!\left(1 - \frac{I(\pi ; Z) + \log(2)}{\log(1/\L_{1}(t))}\right)} \nonumber \\
& \geq \sup_{t > 0}{ \, t \!\left(1 - \frac{\frac{1}{2} n \log(n) + c(\delta,p,k) n + \log(2)}{\log(1/\L_{1}(t))}\right)} \nonumber \\
& \geq \sup_{t > 0}{ \, t \!\left(1 - \frac{\frac{1}{2} n \log(n) + c(\delta,p,k) n + \log(2)}{(n-1)\log(1/t) - \log(2 e / (1-\delta)) n}\right)} \nonumber \\
& = \sup_{t > 0}{ \, t \!\left(1 - \frac{1 + \frac{2 c(\delta,p,k)}{\log(n)} + \frac{\log(4)}{n \log(n)}}{\frac{2 (n-1)\log(1/t)}{n \log(n)} - \frac{2 \log(2 e / (1-\delta))}{\log(n)}}\right)} \nonumber \\
& \geq \frac{1}{n^{\frac{1}{2} + \varepsilon}} \!\left(1 - \frac{1 + \frac{2 c(\delta,p,k)}{\log(n)} + \frac{\log(4)}{n \log(n)}}{(1 + 2 \varepsilon) \! \left(1 - \frac{1}{n}\right) - \frac{2 \log(2 e / (1-\delta))}{\log(n)}}\right) \nonumber \\
& \geq \frac{1}{n^{\frac{1}{2} + \varepsilon}} \!\left(1 - \frac{1 + \frac{\varepsilon}{4}}{1 + \frac{\varepsilon}{2}}\right) \nonumber \\
& = \frac{1}{n^{\frac{1}{2} + \varepsilon}} \!\left(\frac{\varepsilon}{4 + 2 \varepsilon}\right)
\label{Eq: Pre Lower Bound}
\end{align}
where the first inequality follows from \eqref{Eq: Bayes Risk Lower Bound}, \lemref{Lemma: Generalized Fano's Method}, and the fact that $\|\pi\|_1 = 1$, the second inequality follows from \propref{Prop: Upper Bound on MI} with $c(\delta,p,k)$ as defined in \eqref{Eq: Constant in the MI Prop}, the third inequality holds due to the following consequence of \lemref{Lemma: Upper Bound on Small Ball Probability 2}:
$$ \forall t > 0, \enspace \L_{1}(t) \leq \left(\frac{2 e}{1-\delta}\right)^{\! n} t^{n-1} \, , $$
the fifth inequality follows from setting $t = n^{-(1/2) - \varepsilon}$, and the sixth inequality follows from \eqref{Eq: Pre Lower Bound Condition Large n}, which implies the bounds \eqref{Eq: Detailed Large n condition 1}, \eqref{Eq: Detailed Large n condition 2}, and:
$$ n \geq \left(\frac{2 e}{1-\delta}\right)^{\! 4/\varepsilon} \quad \Leftrightarrow \quad \frac{2 \log\!\left(\frac{2 e}{1-\delta}\right)}{\log(n)} \leq \frac{\varepsilon}{2} \, . $$

Now, let us define the constant:
$$ c_{22} = \max\!\left\{4 \log\!\left(\frac{2 e}{1 - \delta}\right),\frac{(1-\delta)^2 \!\left( 2 + \delta + \frac{1}{\delta}\right) k p + 4 \log(2)\delta^2}{\delta^2}\right\} $$
and set $\varepsilon = c_{22}/\!\log(n)$. As mentioned earlier, it is straightforward to verify that \eqref{Eq: Pre Lower Bound Condition Large n} is satisfied for this choice of $\varepsilon$ for all sufficiently large $n$. Finally, as before, we can rewrite \eqref{Eq: Pre Lower Bound} as:
$$ \inf_{\hat{\pi}}{\sup_{\pdf \in \pdfs}{\E_{\pdf}\!\left[\left\|\hat{\pi} - \pi\right\|_{1}\right]}} \geq \left(\frac{c_{22}}{6 \exp(c_{22})}\right) \frac{1}{\log(n) \sqrt{n}} $$
for all sufficiently large $n$. Letting $c_{17} = c_{22}/(6 \exp(c_{22}))$ yields the minimax lower bound in the theorem statement. This proves the theorem.
\end{proof}

\section{Concentration of measure inequalities}
\label{App: Concentration of Measure Inequalities}

In this final appendix, we present two well-known exponential concentration of measure inequalities that are used in this paper. The first of these results bounds the tail probability of the empirical average of a collection of independent bounded random variables.

\begin{lemma}[Hoeffding's Inequality {\cite[Theorems 1 and 2]{Hoeffding1963}}]
\label{Lemma: Hoeffding's Inequality}
Given independent random variables $X_1,\dots,X_n \in [a,b]$, for some constants $a < b$, we have for every $\varepsilon \geq 0$:
$$ \P\!\left(\frac{1}{n}\sum_{i = 1}^{n}{X_i - \E\!\left[X_i\right]} \geq \varepsilon\right) \leq \exp\!\left(-\frac{2 n \varepsilon^2}{(b-a)^2}\right) $$
where $\exp(\cdot)$ denotes the natural exponential function with base $e$ throughout this paper.
\end{lemma}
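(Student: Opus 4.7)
The plan is to apply the classical exponential Markov (Chernoff) bounding technique. Letting $Y_i \triangleq X_i - \E[X_i]$, each $Y_i$ is centered and supported in an interval of length $b - a$. For any $\lambda > 0$, monotonicity of the exponential together with Markov's inequality gives
$$ \P\!\left(\sum_{i=1}^n Y_i \geq n \varepsilon\right) \leq e^{-\lambda n \varepsilon} \, \E\!\left[\exp\!\left(\lambda \sum_{i=1}^n Y_i\right)\right] = e^{-\lambda n \varepsilon} \prod_{i=1}^n \E\!\left[e^{\lambda Y_i}\right], $$
where the equality uses independence of the $Y_i$. It then suffices to control each moment generating function $\E[e^{\lambda Y_i}]$ using only the range $b-a$, and then optimize the free parameter $\lambda$.

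The key technical ingredient will be Hoeffding's lemma: any centered random variable $Y$ with support in $[a', b']$ satisfies $\E[e^{\lambda Y}] \leq \exp(\lambda^2 (b' - a')^2 / 8)$. To prove this, I would exploit convexity of $x \mapsto e^{\lambda x}$ to dominate it on $[a',b']$ by its secant through the endpoints, namely $e^{\lambda x} \leq \frac{b'-x}{b'-a'} e^{\lambda a'} + \frac{x - a'}{b'-a'} e^{\lambda b'}$. Taking expectations and using $\E[Y] = 0$ yields $\E[e^{\lambda Y}] \leq \frac{b'}{b'-a'} e^{\lambda a'} - \frac{a'}{b'-a'} e^{\lambda b'}$. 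Re-parametrizing with $p = -a'/(b'-a') \in [0,1]$ and $u = \lambda (b' - a')$, the log of the right-hand side becomes $\phi(u) \triangleq -p u + \log(1 - p + p e^u)$, and one checks $\phi(0) = \phi'(0) = 0$ together with the uniform bound $\phi''(u) \leq 1/4$ for all $u \in \R$. Taylor's theorem then gives $\phi(u) \leq u^2/8$, which is the desired estimate.

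Plugging $\E[e^{\lambda Y_i}] \leq \exp(\lambda^2 (b-a)^2 / 8)$ into the product bound produces
$$ \P\!\left(\frac{1}{n}\sum_{i=1}^n Y_i \geq \varepsilon\right) \leq \exp\!\left(-\lambda n \varepsilon + \frac{n \lambda^2 (b-a)^2}{8}\right), $$
and the right-hand side is minimized over $\lambda > 0$ at $\lambda = 4 \varepsilon / (b-a)^2$, producing the exponent $-2 n \varepsilon^2/(b-a)^2$ claimed in the lemma. The only genuinely nontrivial step is the variance-type bound $\phi''(u) \leq 1/4$ inside Hoeffding's lemma (which follows from the AM--GM estimate $\frac{p(1-p) e^u}{(1-p + p e^u)^2} \leq 1/4$); the rest of the argument is a routine Chernoff--independence--optimization pipeline.
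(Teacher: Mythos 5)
Your proof is correct and is the standard Chernoff-plus-Hoeffding's-lemma argument; all the details (the secant-line bound via convexity, the reparametrization to $\phi(u) = -pu + \log(1-p+pe^u)$ with $\phi''(u) \leq 1/4$, and the optimization yielding $\lambda = 4\varepsilon/(b-a)^2$) check out. The paper does not actually prove this lemma — it simply cites \cite[Theorems 1 and 2]{Hoeffding1963} — so there is no in-paper proof to compare against, but your argument is precisely the classical one originating in Hoeffding's paper and reproduced in standard references.
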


The second of these results provides a tighter bound on the tail probability of the empirical average of a collection of independent bounded random variables using information about the variances of the random variables.

\begin{lemma}[Bernstein's Inequality {\cite{Bernstein1946}}]
\label{Lemma: Bernstein's Inequality}
Given independent random variables $X_1,\dots,X_n$ such that for some constants $a,b > 0$, $\big|X_i - \E[X_i]\big| \leq a$ and $\VAR(X_i) \leq b$ for all $i \in [n]$, we have for every $\varepsilon \geq 0$:
$$ \P\!\left(\frac{1}{n}\sum_{i = 1}^{n}{X_i - \E\!\left[X_i\right]} \geq \varepsilon\right) \leq \exp\!\left(-\frac{n \varepsilon^2}{2 b + \frac{2}{3} a \varepsilon}\right) . $$
\end{lemma}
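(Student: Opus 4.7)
The plan is to prove Bernstein's inequality by the classical Chernoff/moment-generating-function (MGF) route, using the Bernstein-style Taylor expansion bound on the MGF of a centered bounded random variable with controlled variance. Let $Y_i = X_i - \E[X_i]$ for $i \in [n]$, so the $Y_i$ are independent, centered, satisfy $|Y_i| \leq a$ and $\VAR(Y_i) = \VAR(X_i) \leq b$. For any $\lambda > 0$, Markov's inequality applied to $\exp(\lambda \sum_i Y_i)$ gives
$$ \P\!\left(\frac{1}{n}\sum_{i=1}^n Y_i \geq \varepsilon\right) = \P\!\left(\sum_{i=1}^n Y_i \geq n\varepsilon\right) \leq e^{-\lambda n \varepsilon}\, \E\!\left[e^{\lambda \sum_{i=1}^n Y_i}\right] = e^{-\lambda n \varepsilon} \prod_{i=1}^n \E\!\left[e^{\lambda Y_i}\right], $$
where the last equality uses independence.

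The key step, and the main technical ingredient, is the single-variable MGF bound: for each $i$, and for every $\lambda \in [0, 3/a)$,
$$ \E\!\left[e^{\lambda Y_i}\right] \leq \exp\!\left(\frac{\lambda^2 b}{2\left(1 - \lambda a / 3\right)}\right). $$
To derive this, I would expand $e^{\lambda Y_i} = 1 + \lambda Y_i + \sum_{k \geq 2} (\lambda Y_i)^k / k!$, take expectations, and exploit $\E[Y_i] = 0$ together with the moment estimate $|\E[Y_i^k]| \leq a^{k-2}\E[Y_i^2] \leq a^{k-2} b$ for $k \geq 2$ (this uses $|Y_i| \leq a$). This yields $\E[e^{\lambda Y_i}] \leq 1 + (b/a^2) \sum_{k \geq 2} (\lambda a)^k / k!$. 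The crucial combinatorial estimate is $(k+2)! \geq 2 \cdot 3^k$ for all $k \geq 0$, which I would verify by induction; it implies $\sum_{k \geq 2} x^k / k! \leq (x^2/2)/(1 - x/3)$ for $x \in [0,3)$, and then $1 + u \leq e^u$ upgrades the resulting polynomial bound to the exponential form above.

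Assembling, the product of the $n$ single-variable bounds gives $\E[\exp(\lambda \sum_i Y_i)] \leq \exp(n \lambda^2 b / (2(1 - \lambda a/3)))$, so
$$ \P\!\left(\frac{1}{n}\sum_{i=1}^n Y_i \geq \varepsilon\right) \leq \exp\!\left(-\lambda n \varepsilon + \frac{n \lambda^2 b}{2(1 - \lambda a / 3)}\right) $$
for every $\lambda \in [0, 3/a)$. The final step is to optimize the exponent in $\lambda$. A direct differentiation yields the minimizer $\lambda^* = \varepsilon / (b + a \varepsilon / 3)$; one checks $\lambda^* < 3/a$ since $\lambda^* a / 3 = a\varepsilon / (3b + a\varepsilon) < 1$. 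Substituting this value simplifies the exponent to $-n\varepsilon^2 / (2b + \tfrac{2}{3} a \varepsilon)$, which is exactly the stated bound.

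The main obstacle is really just the MGF bound on a single centered bounded variable with variance constraint; once the right combinatorial estimate $(k+2)! \geq 2 \cdot 3^k$ is in hand, everything else (Chernoff tensorization, optimization over $\lambda$) is routine calculus. The only subtlety worth flagging is the case $\varepsilon = 0$, where the bound is trivial, and the restriction $\lambda < 3/a$ that makes the denominator positive; at the optimum $\lambda^*$ this is automatic.
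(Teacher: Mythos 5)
Your proof is correct. The paper does not prove this lemma at all --- it is stated as a classical result quoted from the literature (Bernstein, 1946) --- so there is no in-paper argument to compare against; your Chernoff/MGF derivation is the standard one, and the details check out: the moment estimate $|\E[Y_i^k]| \leq a^{k-2} b$, the combinatorial bound $(k+2)! \geq 2\cdot 3^k$ giving $\E[e^{\lambda Y_i}] \leq \exp\!\big(\lambda^2 b/(2(1-\lambda a/3))\big)$ for $\lambda \in [0,3/a)$, and the choice $\lambda^* = \varepsilon/(b + a\varepsilon/3)$ (which indeed satisfies $\lambda^* < 3/a$) collapse the exponent exactly to $-n\varepsilon^2/(2b + \tfrac{2}{3}a\varepsilon)$.
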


\bibliographystyle{myIEEEtran}
\bibliography{SkillRefs}

\end{document}